\newcommand{\bfem}[1]{\textbf{\emph{#1}}}
\newtheorem{definition}{Definition}
\newtheorem{assumption}{Assumption}
\newtheorem{lemma}{Lemma}
\title{On the Modularity of Hypernetworks}
\author{%
Tomer Galanti \\
School of Computer Science \\
Tel Aviv University \\
\texttt{tomerga2@tauex.tau.ac.il} \\
\And
Lior Wolf \\
Facebook AI Research (FAIR) \& \\
Tel Aviv University \\
\texttt{wolf@fb.com} 
}
\begin{document}

\maketitle

\begin{abstract}
In the context of learning to map an input $I$ to a function $h_I:\mathcal{X}\to \mathbb{R}$, two alternative methods are compared: (i) an embedding-based method, which learns a fixed function in which $I$ is encoded as a conditioning signal $e(I)$ and the learned function takes the form $h_I(x) = q(x,e(I))$, and (ii) hypernetworks, in which the weights $\theta_I$ of the function $h_I(x) = g(x;\theta_I)$ are given by a hypernetwork $f$ as $\theta_I=f(I)$. 
In this paper, we define the property of modularity as the ability to effectively learn a different function for each input instance $I$. For this purpose, we adopt an expressivity perspective of this property and extend the theory of~\cite{devore} and provide a lower bound on the complexity (number of trainable parameters) of neural networks as function approximators, by eliminating the requirements for the approximation method to be robust. Our results are then used to compare the complexities of $q$ and $g$, showing that under certain conditions and when letting the functions $e$ and $f$ be as large as we wish, $g$ can be smaller than $q$ by orders of magnitude. This sheds light on the modularity of hypernetworks in comparison with the embedding-based method. Besides, we show that for a structured target function, the overall number of trainable parameters in a hypernetwork is smaller by orders of magnitude than the number of trainable parameters of a standard neural network and an embedding method. 
\end{abstract}


\section{Introduction}

Conditioning refers to the existence of multiple input signals. For example, in an autoregressive model, where the primary input is the current hidden state or the output of the previous time step, a conditioning signal can drive the process in the desired direction. When performing text to speech with WaveNets~\cite{vanwavenet}, the autoregressive signal is concatenated to the conditioning signal arising from the language features. Other forms of conditioning are less intuitive. For example, in Style GANs~\cite{karras2019style}, conditioning takes place by changing the weights of the normalization layers according to the desired style.


In various settings, it is natural to treat the two inputs $x$ and $I$ of the target function $y(x,I)$ as nested, i.e., multiple inputs $x$ correspond to the `context' of the same conditioning input $I$. A natural modeling~\cite{Chen_2019_CVPR,Park_2019_CVPR,Mescheder_2019_CVPR} is to encode the latter by some embedding network $e$ and to concatenate it to $x$ when performing inference $q(x,e(I))$ with a primary network $q$. A less intuitive solution, commonly referred to as a hypernetwork, uses a primary network $g$ whose weights are not directly learned. Instead, $g$ has a fixed architecture, and a second network $f$ generates its weights based on the conditioning input as $\theta_I = f(I)$. The network $g$, with the weights $\theta_I$ can then be applied to any input $x$.

Hypernetworks hold state of the art results on numerous popular benchmarks~\cite{bertinetto2016learning,Oswald2020Continual,brock2018smash,zhang2018graph,lorraine2018stochastic}, especially due to their ability to adapt $g$ for different inputs $I$. This allows the hypernetwork to model tasks effectively, even when using a low-capacity $g$.  
This lack of capacity is offset by using very large networks $f$. For instance, in~\cite{Littwin_2019_ICCV}, a deep residual hypernetwork that is trained from scratch outperforms numerous embedding-based networks that rely on ResNets that were pre-trained on ImageNet. 

The property of modularity means that through $f$, the network $g$ is efficiently parameterized. Consider the case in which we fit individual functions $g'_I$ to model each function $y_I = y(\cdot,I)$ independently (for any fixed $I$). To successfully fit any of these functions, $g'_I$ would require some degree of minimal complexity in the worst case. We say that modularity holds if the primary network $g$ whose weights are given by $f(I)$ has the same minimal complexity as required by $g'_I$ in the worst case.

In this paper, we seek to understand this phenomenon. For this purpose, we compare two alternatives: the  standard embedding method and the hypernetwork. Since neural networks often have millions of weights while embedding vectors have a dimension that is seldom larger than a few thousand, it may seem that $f$ is much more complex than $e$. However, in hypernetworks, often the output of $f$ is simply a linear projection of a much lower dimensional bottleneck~\cite{Littwin_2019_ICCV}. More importantly, it is often the case that the function $g$ can be small, and it is the adaptive nature (where $g$ changes according to $I$) that enables the entire hypernetwork ($f$ and $g$ together) to be expressive.


In general, the formulation of hypernetworks covers embedding-based methods. This implies that hypernetworks are at least as good as the embedding-based method and motivates the study of whether hypernetworks have a clear and measurable advantage. Complexity analysis provides a coherent framework to compare the two alternatives. In this paper, we compare the minimal parameter complexity needed to obtain a certain error in each of the two alternatives.

%

\noindent{\bf Contributions\quad} The central contributions of this paper are: 
    {\bf (i)} Thm.~\ref{thm:generallower} extends the theory of~\cite{devore} and provides a lower bound on the number of trainable parameters of a neural network when approximating smooth functions. In contrast to previous work, our result does not require that the approximation method is robust. 
    {\bf (ii)} In Thms.~\ref{thm:2}-\ref{thm:4}, we compare the complexities of the primary functions under the two methods ($q$ and $g$) and show that for a large enough embedding function, the hypernetwork's primary $g$ can be smaller than $q$ by orders of magnitude. 
    {\bf (iii)} In Thm.~\ref{thm:5}, we show that under common assumptions on the function to be approximated, the overall number of trainable parameters in a hypernetwork is much smaller than the number of trainable parameters of a standard neural network. {\bf (iv)} To validate the theoretical observations, we conducted experiments on synthetic data as well as on self-supervised learning tasks. 
    
To summarize, since Thm.~\ref{thm:generallower} shows the minimal complexity for approximating smooth target functions, and Thm.~\ref{thm:4} demonstrates that this is attainable by a hypernetwork, we conclude that hypernetworks are modular. In contrast, embedding methods are not since, as Thms.~\ref{thm:2}-\ref{thm:3} show, they require a significantly larger primary.

\noindent{\bf Related Work \quad} Hypernetworks, which were first introduced under this name in~\cite{ha2016hypernetworks}, are networks that generate the weights of a second {\em primary} network that computes the actual task. The Bayesian formulation of \cite{krueger2017bayes} introduces variational inference that involves both the parameter generating network and a primary network. Hypernetworks are especially suited for meta-learning tasks, such as few-shot~\cite{bertinetto2016learning} and continual learning tasks~\cite{Oswald2020Continual}, due to the knowledge sharing ability of the weights generating network. 
Predicting the weights instead of performing backpropagation can lead to efficient neural architecture search~\cite{brock2018smash,zhang2018graph}, and hyperparameter selection~\cite{lorraine2018stochastic}. 

Multiplicative interactions, such as gating, attention layers, hypernetworks, and dynamic convolutions, were shown to strictly extend standard neural networks~\cite{jayakumar2020multiplicative}. However, the current literature has no theoretical guarantees that support the claim that interactions have a clear advantage. 


In this work, we take an approximation theory perspective of this problem. For this purpose, as a starting point, we study standard neural networks as function approximators. There were various attempts to understand the capabilities of neural networks as universal function approximators~\cite{Cybenko1989,Hornik1991ApproximationCO}. 
Multiple extensions of these results~\cite{Mhaskar:1996:NNO:1362203.1362213,NIPS2017_7203,hanin2017approximating,10.5555/3327345.3327515,pmlr-v70-safran17a} quantify tradeoffs between the number of trainable parameters, width and depth of the neural networks as universal approximators. In particular,~\cite{Mhaskar:1996:NNO:1362203.1362213} suggested upper bounds on the size of the neural networks of order $\mathcal{O}(\epsilon^{-n/r})$, where $n$ is the input dimension, $r$ is the order of smoothness of the target functions, and $\epsilon>0$ is the approximation accuracy. In another contribution, \cite{devore} prove a lower bound on the complexity of the class of approximators that matches the upper bound $\Omega(\epsilon^{-n/r})$. However, their analysis assumes that the approximation is robust in some sense (see Sec.~\ref{sec:analysis} for details). In addition, they show that robustness holds when the class of approximators $\mathscr{f} = \{f(\cdot;\theta) \mid \theta \in \Theta_{\mathscr{f}}\}$ satisfies a certain notion of bi-Lipschitzness. However, as a consequence of this condition, any two equivalent functions (i.e., $f(\cdot;\theta_1) = f(\cdot;\theta_2)$) must share the same parameterizations (i.e., $\theta_1=\theta_2$). Unfortunately, this condition is not met for neural networks, as one can compute the same function with neural networks of the same architecture with different parameterizations. In Sec.~\ref{sec:analysis}, we show that for certain activation functions and under reasonable conditions, there exists a robust approximator and, therefore, the lower bound of the complexity is $\Omega(\epsilon^{-n/r})$.  Since the existence of a continuous selector is also a cornerstone in the proofs of Thms.~\ref{thm:2}-\ref{thm:5}, the analysis in~\cite{devore} is insufficient to prove these results (for example, see the proof sketch of Thm.~\ref{thm:4} in Sec.~\ref{sec:main1}). In~\cite{10.1006/jath.1998.3304,10.1006/jath.1998.3305,Maiorov99lowerbounds} a similar lower bound is shown, but, only for shallow networks.

In an attempt to understand the benefits of locality in convolutional neural networks,~\cite{10.5555/3298483.3298577} shows that when the target function is a hierarchical function,  it can be approximated by a hierarchic neural network of smaller complexity, compared to the worst-case complexity for approximating arbitrary functions. In our Thm.~\ref{thm:5}, we take a similar approach. We show that under standard assumptions in meta-learning, the overall number of trainable parameters in a hypernetwork necessary to approximate the target function is smaller by orders of magnitude, compared to approximating arbitrary functions with neural networks and the embedding method in particular.


\section{Problem Setup}\label{sec:setup}

In various meta-learning settings, we have an unknown target function $y:\mathcal{X} \times \mathcal{I} \to \mathbb{R}$ that we would like to model. Here, $x \in \mathcal{X}$ and $I \in \mathcal{I}$ are two different inputs of $y$. The two inputs have different roles, as the input $I$ is ``task'' specific and $x$ is independent of the task. Typically, the modeling of $y$ is done in the following manner: $H(x,I) = G(x,E(I)) \approx y(x,I)$, where $E$ is an embedding function and $G$ is a predictor on top of it. The distinction between different embedding methods stems from the architectural relationship between $E$ and $G$. In this work, we compare two task embedding methods: (i) neural embedding methods and (ii) hypernetworks.

A {\bf neural embedding method} is a network of the form $h(x,I;\theta_e,\theta_q) = q(x,e(I;\theta_e);\theta_q) $, consisting of a composition of neural networks $q$ and $e$ parameterized with real-valued vectors $\theta_q \in \Theta_{\mathscr{q}}$ and $\theta_e \in \Theta_{\mathscr{e}}$ (resp.). The term $e(I;\theta_e)$ serves as an embedding of $I$. For two given families $\mathscr{q} := \{q(x,z;\theta_q) \;\vert\; \theta_q \in \Theta_{\mathscr{q}}\}$ and $\mathscr{e} := \{e(I;\theta_e) \;\vert\; \theta_e \in \Theta_{\mathscr{e}}\}$ of functions, we denote by $\mathcal{E}_{\mathscr{e},\mathscr{q}} := \{q(x,e(I;\theta_e);\theta_q) \;\vert\; \theta_q \in \Theta_{\mathscr{q}}, \theta_e \in \Theta_{\mathscr{e}}\}$ the embedding method that is formed by them.  

A special case of neural embedding methods is the family of the conditional neural processes models~\cite{garnelo2018conditional}. In such processes, $\mathcal{I}$ consists of a set of $d$ images $I = (I_i)^{d}_{i=1} \in \mathcal{I}$, and the embedding is computed as an average of the embeddings over the batch, $e(I;\theta_e) := \frac{1}{d} \sum^{d}_{i=1} e(I_i;\theta_e)$. 

A {\bf hypernetwork} $h(x,I) = g(x;f(I;\theta_f))$ is a pair of collaborating neural networks, $f:\mathcal{I} \to \Theta_{\mathscr{g}}$ and $g:\mathcal{X} \to \mathbb{R}$, such that for an input $I$, $f$ produces the weights $\theta_I = f(I;\theta_f)$ of $g$, where $\theta_f \in \Theta_{\mathscr{f}}$ consists of the weights of $f$. The function $f(I;\theta_f)$ takes a conditioning input $I$ and returns the parameters $\theta_I \in \Theta_{\mathscr{g}}$ for $g$. The network $g$ takes an input $x$ and returns an output $g(x;\theta_I)$ that depends on both $x$ and the task specific input $I$. In practice, $f$ is typically a large neural network and $g$ is a small neural network. 

The entire prediction process for hypernetworks is denoted by $h(x,I;\theta_f)$, and the set of functions $h(x,I;\theta_f)$ that are formed by two families $\mathscr{f} := \{f(I;\theta_f) \;\vert\; \theta_f \in \Theta_{\mathscr{f}}\}$ and $\mathscr{g} := \{g(x;\theta_g) \;\vert\; \theta_g \in \Theta_{\mathscr{g}}\}$ as a hypernetwork is denoted by $\mathcal{H}_{\mathscr{f},\mathscr{g}} := \{g(x;f(I;\theta_f)) \;\vert\; \theta_f \in \Theta_{\mathscr{f}}\}$.

    


\subsection{Terminology and Notations} 

We consider $\mathcal{X} = [-1,1]^{m_1}$ and $\mathcal{I} = [-1,1]^{m_2}$ and denote, $m := m_1+m_2$. For a closed set $X\subset \mathbb{R}^n$, we denote by $C^r(X)$ the linear space of all $r$-continuously differentiable functions $h: X \to \mathbb{R}$ on $X$ equipped with the supremum norm $\|h\|_{\infty} := \max_{x \in X} \| h(x) \|_1$. 
We denote parametric classes of functions by calligraphic lower letters, e.g., $\mathscr{f} = \{f(\cdot ;\theta_f) :\mathbb{R}^m \to \mathbb{R}\;\vert\; \theta_f \in \Theta_{\mathscr{f}} \}$. A specific function from the class is denoted by the non-calligraphic lower case version of the letter $f$ or $f(x;\theta_f)$. The notation ``$;$'' separates between direct inputs of the function $f$ and its parameters $\theta_f$. Frequently, we will use the notation $f(\cdot;\theta_f)$, to specify a function $f$ and its parameters $\theta_f$ without specifying a concrete input of this function. 
The set $\Theta_{\mathscr{f}}$ is closed a subset of $\mathbb{R}^{N_{\mathscr{f}}}$ and consists of the various parameterizations of members of $\mathscr{f}$ and $N_{\mathscr{f}}$ is the number of parameters in $\mathscr{f}$, referred to as the complexity of $\mathscr{f}$. 

A class of neural networks $\mathscr{f}$ is a set of functions of the form: 
\begin{equation}\label{eq:net}
f(x;[\bfem{W},\bfem{b}]) := W^{k} \cdot \sigma(W^{k-1} \dots \sigma(W^1x+b^1) + b^{k-1}) 
\end{equation}
with weights $W^i \in \mathbb{R}^{h_{i+1} \times h_{i}}$ and biases $b^i \in \mathbb{R}^{h_{i+1}}$, for some $h_i \in \mathbb{N}$. In addition, $\theta := [\bfem{W},\bfem{b}]$ accumulates the parameters of the network. 
The function $\sigma$ is a non-linear activation function, typically ReLU, logistic function, or the hyperbolic tangent.

We define the spectral complexity of a network $f := f(\cdot;[\bfem{W},\bfem{b}])$ as  $\mathcal{C}(f) := \mathcal{C}([\bfem{W},\bfem{b}]) := L^{k-1}\cdot \prod^{k}_{i=1} \|W^i\|_1$, where $\|W\|_{1}$ is the induced $L_1$ matrix norm 
and $L$ is the Lipschitz constant of $\sigma$. In general, $\mathcal{C}(f)$ upper bounds the Lipschitz constant of $f$ (see Lem.~3 in the appendix).

Throughout the paper, we consider the Sobolev space $\mathcal{W}_{r,n}$ as the set of target functions to be approximated. This class consists of $r$-smooth functions of bounded derivatives. Formally, it consists of functions $h:[-1,1]^n \to \mathbb{R}$ with continuous partial derivatives of orders up to $r$, such that, the Sobolev norm is bounded, $\|h\|^{s}_{r} := \|h\|_{\infty} + \sum_{1 \leq \vert \mathbf{k}\vert_1 \leq r} \|D^{\mathbf{k}} h\|_{\infty}\leq 1$, where $D^{\mathbf{k}}$ denotes the partial derivative indicated by the multi–integer $\mathbf{k} \geq 1$, and $\vert \mathbf{k} \vert_1$ is the sum of the components of $\mathbf{k}$. Members of this class are typically the objective of approximation in the literature~\cite{Mhaskar:1996:NNO:1362203.1362213,10.1006/jath.1998.3304,NIPS2017_7203}. 

In addition, we define the class $\mathcal{P}^{k_1,k_2}_{r,w,c}$ to be the set of functions $h:\mathbb{R}^{k_1} \to \mathbb{R}^{k_2}$ of the form $h(x) = W \cdot P(x)$, where $P:\mathbb{R}^{k_1} \to \mathbb{R}^{w}$ and $W \in \mathbb{R}^{k_2 \times w}$ is some matrix of the bounded induced $L_1$ norm $\|W\|_{1} \leq c$. Each output coordinate $P_{i}$ of $P$ is a member of $\mathcal{W}_{r,k_1}$. The linear transformation on top of these functions serves to enable blowing up the dimension of the produced output. However, the ``effective'' dimensionality of the output is bounded by $w$. For simplicity, when $k_1$ and $k_2$ are clear from context, we simply denote $\mathcal{P}_{r,w,c} := \mathcal{P}^{k_1,k_2}_{r,w,c}$. We can think of the functions in this set as linear projections of a set of features of size $w$.

\noindent{\bf Assumptions\quad} Several assumptions were made to obtain the theoretical results. The first one is not strictly necessary, but significantly reduces the complexity of the proofs: we assume the existence of a unique function $f \in \mathscr{f}$ that best approximates a given target function $y$. It is validated empirically in Sec.~\ref{sec:experiments}.

\begin{assumption}[Unique Approximation]\label{assmp:apprx} 
Let $\mathscr{f}$ be a class of neural networks. Then, for all $y \in \mathbb{Y}$ there is a unique function $f(\cdot;\theta^*) \in \mathscr{f}$ that satisfies: $\|f(\cdot;\theta^*) - y\|_{\infty} = \inf_{\theta \in \Theta_{\mathscr{f}}} \| f(\cdot;\theta) - y\|_{\infty}$. 
\end{assumption}






For simplicity, we also assume that the parameters $\theta^*$ of the best approximators are bounded (uniformly, for all $y\in \mathbb{Y}$). The next assumption is intuitive and asserts that for any target function $y$ that is being approximated by a class of neural networks $\mathscr{f}$, by adding a neuron to the architecture, one can achieve a strictly better approximation to $y$ or $y$ is already perfectly approximated by $\mathscr{f}$. 

\begin{assumption}\label{assmp:improve}
Let $\mathscr{f}$ be a class of neural networks. Let $y \in \mathbb{Y}$ be some function to be approximated. Let $\mathscr{f}'$ be a class of neural networks that resulted by adding a neuron to some hidden layer of $\mathscr{f}$. If $y \notin \mathscr{f}$ then, 
$\inf_{\theta \in \Theta_{\mathscr{f}}} \| f(\cdot;\theta) - y\|_{\infty} > \inf_{\theta \in \Theta_{\mathscr{f}'}} \| f(\cdot;\theta) - y\|_{\infty}$.
\end{assumption}

This assumption is validated empirically in Sec.~1.5 of the appendix. In the following lemma, we prove that Assumption~\ref{assmp:improve} holds for shallow networks for the $L_2$ distance instead of $L_{\infty}$.
\begin{lemma}
Let $\mathbb{Y} = C([-1,1]^m)$ be the class of continuous functions $y:[-1,1]^m \to \mathbb{R}$. Let $\mathscr{f}$ be a class of 2-layered neural networks of width $d$ with $\sigma$ activations, where $\sigma$ is either $\tanh$ or sigmoid. Let $y \in \mathbb{Y}$ be some function to be approximated. Let $\mathscr{f}'$ be a class of neural networks that is resulted by adding a neuron to the hidden layer of $\mathscr{f}$. If $y \notin \mathscr{f}$ then, $\inf_{\theta \in \Theta_{\mathscr{f}}} \| f(\cdot;\theta) - y\|^2_{2} > \inf_{\theta \in \Theta_{\mathscr{f}'}} \| f(\cdot;\theta) - y\|^2_{2}$. The same holds for $\sigma = ReLU$ when $m=1$.
\end{lemma}



\section{Degrees of Approximation}\label{sec:analysis}

We are interested in determining how complex a model ought to be to theoretically guarantee approximation of an unknown target function $y$ up to a given approximation error $\epsilon > 0$. Formally, let $\mathbb{Y}$ be a set of target functions to be approximated. For a set $\mathcal{P}$ of candidate approximators, we measure its ability to approximate $\mathbb{Y}$ as: $d(\mathcal{P};\mathbb{Y}) := \sup_{y \in \mathbb{Y}}\inf_{p \in \mathcal{P}} \|y - p \|_{\infty}$.
This quantity measures the maximal approximation error for approximating a target function $y \in \mathbb{Y}$ using candidates $p$ from $\mathcal{P}$.

Typical approximation results show that the class $\mathbb{Y} = \mathcal{W}_{r,m}$ can be approximated using classes of neural networks $\mathscr{f}$ of sizes $\mathcal{O}(\epsilon^{-m/r})$, where $\epsilon$ is an upper bound on $d(\mathscr{f};\mathbb{Y})$. For instance, in~\cite{Mhaskar:1996:NNO:1362203.1362213} this property is shown for neural networks with activations $\sigma$ that are infinitely differentiable and not polynomial on any interval; ~\cite{hanin2017approximating} prove this property for ReLU neural networks. We call activation functions with this property {\em universal}.

\begin{definition}[Universal activation] An activation function $\sigma$ is universal if for any $r,n \in \mathbb{N}$ and $\epsilon > 0$, there is a class of neural networks $\mathscr{f}$ with $\sigma$ activations, of size $\mathcal{O}(\epsilon^{-n/r})$, such that, $d(\mathscr{f};\mathcal{W}_{r,n})\leq \epsilon$.
\end{definition}

An interesting question is whether this bound is tight. 
We recall the $N$-width framework of~\cite{devore} (see also~\cite{Pinkus1985NWidthsIA}). Let $\mathscr{f}$ be a class of functions (not necessarily neural networks) and $S: \mathbb{Y} \to \mathbb{R}^{N}$ be a continuous mapping between a function $y$ and its approximation, where with $N := N_{\mathscr{f}}$. In this setting, we approximate $y$ using $f(\cdot;S(y))$, where the continuity of $S$ means that the selection of parameters is robust with respect to perturbations in $y$. The nonlinear $N$-width of the compact set $\mathbb{Y} = \mathcal{W}_{r,m}$ is defined as follows:
\begin{equation}
\begin{aligned}
\tilde{d}_N(\mathbb{Y}) := \inf_{\mathscr{f}} \tilde{d}(\mathscr{f};\mathbb{Y})
:= \inf_{\mathscr{f}} \inf_{S} \sup_{y \in \mathbb{Y}} \| f(\cdot;S(y)) - y\|_{\infty},
\end{aligned}
\end{equation}
where the infimum is taken over classes $\mathscr{f}$, such that, $N_{\mathscr{f}} = N$ and $S$ is continuous. {Informally, the $N$-width of the class $\mathbb{Y}$ measures the minimal approximation error achievable by a continuous function $S$ that selects approximators $f(\cdot;S(y))$ for the functions $y \in \mathbb{Y}$.} As shown by~\cite{devore}, $\tilde{d}_N(\mathbb{Y}) = \Omega(N^{-m/r})$, or alternatively, if there exists $\mathscr{f}$, such that, $\tilde{d}(\mathscr{f};\mathbb{Y})\leq \epsilon$ (i.e., $\tilde{d}_{N_{\mathscr{f}}}(\mathbb{Y}) \leq \epsilon$), then $N_{\mathscr{f}} = \Omega(\epsilon^{-m/r})$. We note that since the $N$-width of $\mathbb{Y}$ is oblivious of the class of approximators $\mathscr{f}$ and $d(\mathscr{f};\mathbb{Y}) \leq \tilde{d}(\mathscr{f};\mathbb{Y})$ and, therefore, this analysis does not provide a full solution to this question. Specifically, to answer this question, it requires a nuanced treatment of the considered class of approximators $\mathscr{f}$.

In the following theorem, we show that under certain conditions, the lower bound holds, even when removing the assumption that the selection is robust. 

\begin{restatable}{theorem}{generallower}\label{thm:generallower}
Let $\sigma$ be a piece-wise $C^1(\mathbb{R})$ activation function with $\sigma' \in BV(\mathbb{R})$. Let $\mathscr{f}$ be a class of neural networks with $\sigma$ activations. Let $\mathbb{Y} = \mathcal{W}_{r,m}$. Assume that any non-constant $y \in \mathbb{Y}$ is not a member of $\mathscr{f}$. Then, if $d(\mathscr{f};\mathbb{Y}) \leq \epsilon$, we have $N_{\mathscr{f}} = \Omega(\epsilon^{-m/r})$.
\end{restatable}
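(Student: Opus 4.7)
The strategy is to reduce to the nonlinear $N$-width lower bound $\tilde{d}_N(\mathbb{Y}) = \Omega(N^{-m/r})$ of \cite{devore}. Since in general $d(\mathscr{f};\mathbb{Y}) \leq \tilde{d}(\mathscr{f};\mathbb{Y})$, the $N$-width bound does not apply to $d(\mathscr{f};\mathbb{Y})$ directly. The whole difficulty is, starting from the hypothesis $d(\mathscr{f};\mathbb{Y}) \leq \epsilon$, to produce a \emph{continuous} selector $S : \mathbb{Y} \to \Theta_{\mathscr{f}} \subset \mathbb{R}^{N_{\mathscr{f}}}$ with $\sup_{y \in \mathbb{Y}} \|f(\cdot; S(y)) - y\|_\infty \leq \epsilon$. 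Once such an $S$ exists, $\tilde{d}(\mathscr{f}; \mathbb{Y}) \leq \epsilon$ by the definition of the $N$-width, and the DeVore bound then yields $N_{\mathscr{f}} = \Omega(\epsilon^{-m/r})$.

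The natural candidate is the best-approximation map $S(y) := \theta^*(y)$, where $\theta^*(y)$ parameterizes the unique best approximator granted by Assumption~\ref{assmp:apprx}. The bound $\|f(\cdot;S(y)) - y\|_\infty \leq d(\mathscr{f};\mathbb{Y}) \leq \epsilon$ is automatic, so the real work is to prove continuity of $S$ from $(\mathbb{Y}, \|\cdot\|_\infty)$ to $\mathbb{R}^{N_{\mathscr{f}}}$. I would run a standard ``compactness plus uniqueness'' argument: take $y_n \to y$ in $\mathbb{Y}$ and set $\theta_n := S(y_n)$; by the uniform bound on optimal parameters stated after Assumption~\ref{assmp:apprx}, $(\theta_n)$ lies in a compact subset of $\mathbb{R}^{N_{\mathscr{f}}}$, so every subsequence admits a further convergent subsequence $\theta_{n_k} \to \theta'$. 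Because $\sigma$ is piecewise $C^1$ with $\sigma' \in BV(\mathbb{R})$, $\sigma$ is globally Lipschitz and hence $\theta \mapsto f(\cdot;\theta)$ is continuous in $\|\cdot\|_\infty$. Combining the optimality of $\theta_{n_k}$ with the bound $\|f(\cdot;\theta_{n_k}) - y_{n_k}\|_\infty \leq \|f(\cdot;\tilde\theta) - y_{n_k}\|_\infty$ for every fixed $\tilde\theta$ and passing to the limit yields
\[
\|f(\cdot;\theta') - y\|_\infty \leq \inf_{\theta \in \Theta_{\mathscr{f}}} \|f(\cdot;\theta) - y\|_\infty ,
\]
so $f(\cdot;\theta')$ is a best approximator of $y$. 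By the uniqueness clause of Assumption~\ref{assmp:apprx}, $f(\cdot;\theta') = f(\cdot;S(y))$; the hypothesis that non-constant $y \in \mathbb{Y}$ does not lie in $\mathscr{f}$ is what rules out the degenerate targets at which several distinct parameter vectors realize the same zero-error network, promoting equality of functions to equality of parameters $\theta' = S(y)$. Since every convergent subsequence has the same limit, $\theta_n \to S(y)$, giving continuity of $S$.

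The main obstacle is precisely this continuity step, which splits into (a) lifting the uniqueness of Assumption~\ref{assmp:apprx} from function space to parameter space — where the exclusion of non-constant $\mathbb{Y}$-members from $\mathscr{f}$ is essential to avoid the well-known parameter symmetries of neural networks on targets exactly realized by $\mathscr{f}$ — and (b) exchanging limits with the infimum, which relies on the uniform parameter bound together with the Lipschitz regularity of $\sigma$ inherited from $\sigma' \in BV(\mathbb{R})$. Once continuity of $S$ is in hand, the conclusion is a one-line appeal to the nonlinear $N$-width lower bound of \cite{devore}.
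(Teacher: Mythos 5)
Your reduction to the DeVore width bound via a continuous selector is the right high-level strategy (it is also the paper's), but the continuity argument for $S(y):=\theta^*(y)$ has a genuine gap at its central step. Assumption~\ref{assmp:apprx} gives uniqueness of the best approximating \emph{function}, not of its \emph{parameters}, and your limit argument only delivers $f(\cdot;\theta')=f(\cdot;S(y))$ as functions. The hypothesis that non-constant $y\in\mathbb{Y}$ is not in $\mathscr{f}$ does not promote this to $\theta'=S(y)$: the relevant parameter symmetries are symmetries of the \emph{approximator}, not of the target. For any best approximator, permuting hidden neurons (and, depending on $\sigma$, sign or scaling symmetries) produces distinct parameter vectors realizing the same function, so the argmin in parameter space is never a singleton and the subsequential limits $\theta'$ can land on different isomorphic copies as $n$ varies; your "every convergent subsequence has the same limit" step fails. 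This is exactly the obstruction the paper spends its appendix on: it replaces $\sigma$ by a nearby identifiability-inducing activation $\rho$ (Lem.~\ref{lem:vlacic}, from~\cite{nn-id-2019}), uses Assumption~2 to show the best approximators are \emph{normal} (no clone or zero neurons, Lem.~\ref{lem:normal}) so that uniqueness holds up to a finite group of isomorphisms, proves continuity of the set-valued argmin map in Hausdorff distance (Lem.~\ref{lem:subset2}), and then builds a single-valued continuous selector by local selection and gluing over a finite cover (Lem.~\ref{lem:selection}), finally transferring back from $\rho$ to $\sigma$ at the cost of a factor $2$ and an additive $\hat\epsilon$ (Lems.~\ref{lem:c1}--\ref{lem:selection2}). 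None of this machinery is replaceable by the uniqueness clause of Assumption~\ref{assmp:apprx} alone.

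A second, independent gap: you apply the width lower bound of~\cite{devore} to all of $\mathbb{Y}=\mathcal{W}_{r,m}$, but the non-representability hypothesis only covers non-constant targets, and constant functions both belong to $\mathcal{W}_{r,m}$ and are exactly realizable by $\mathscr{f}$. At such targets the zero-error parameter set is highly non-unique (e.g.\ dead neurons with arbitrary incoming weights), so even the corrected selector cannot be defined there; the selector exists only on a subclass avoiding $\mathscr{f}$. The paper therefore restricts to $\mathcal{W}^{\gamma}_{r,m}$ (functions whose derivative seminorm is bounded below, hence non-constant) and must \emph{re-prove} a width lower bound for this restricted class, since DeVore's Bernstein-width argument does not apply to a set that excludes a neighborhood of $0$: this is the purpose of the extended Bernstein width, the Borsuk--Ulam lemma, and Lem.~\ref{lem:lowerBoundGamma}. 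Your proposal would need an analogous step; as written, the one-line appeal to $\tilde d_N(\mathcal{W}_{r,m})=\Omega(N^{-r/m})$ does not connect to the (restricted) domain on which a continuous selector can actually be constructed.
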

All of the proofs are provided in the appendix. The notation $BV(\mathbb{R})$ stands for the set of functions of bounded variation, 
\begin{equation}
BV(\mathbb{R}) := \left\{ f \in L^1(\mathbb{R}) \mid \| f\|_{BV}  < \infty \right\} \textnormal{ where, } \| f\|_{BV} := \sup\limits_{\substack{\phi \in C^1_c(\mathbb{R}) \\ \|\phi\|_{\infty}\leq 1}} 
\int_{\mathbb{R}} f(x)\cdot \phi(x)\;\textnormal{d}x
\end{equation}
We note that a wide variety of activation functions satisfy the conditions of Thm.~\ref{thm:generallower}, such as, the clipped ReLU, sigmoid, $\tanh$ and $\arctan$. Informally, to prove this theorem, we show the existence of a ``wide'' subclass $\mathbb{Y}' \subset \mathbb{Y}$ and a {\bf continuous selector} $S:\mathbb{Y}' \to \Theta_{\mathscr{f}}$, such that, $\exists\alpha > 0~\forall y\in \mathbb{Y}':\|f(\cdot;S(y)) - y\|_{\infty} \leq \alpha \cdot \inf_{\theta \in \Theta_{\mathscr{f}}} \|f(\cdot;\theta) - y\|_{\infty}$. The class $\mathbb{Y}'$ is considered wide in terms of $N$-width, i.e., $\tilde{d}_N(\mathbb{Y}') = \Omega(N^{-m/r})$. Therefore, we conclude that $d(\mathscr{f};\mathbb{Y}) \geq d(\mathscr{f};\mathbb{Y}') \geq \frac{1}{\alpha} \tilde{d}(\mathscr{f};\mathbb{Y}') = \Omega(N^{-m/r})$. For further details, see the proof sketches in Secs.~3.2-3.3 of the appendix. Finally, we note that the assumption that any non-constant $y \in \mathbb{Y}$ is not a member of $\mathscr{f}$ is rather technical. For a relaxed, for general version of it, see Lem.~18 in the appendix.

%

\section{Expressivity of Hypernetworks}

Using Thm.~\ref{thm:generallower}, the expressive power of hypernetworks is demonstrated. In the first part, we compare the complexities of $\mathscr{g}$ and $\mathscr{q}$. We show that when letting $\mathscr{e}$ and $\mathscr{f}$ be large enough, one can approximate it using a hypernetwork where $\mathscr{g}$ is smaller than $\mathscr{q}$ by orders of magnitude. In the second part, we show that under typical assumptions on $y$, one can approximate $y$ using a hypernetwork with overall much fewer parameters than the number of parameters required for a neural embedding method. It is worth mentioning that our results scale to the multi-dimensional case. In this case, if the output dimension is constant, we get the exact same bounds.

\subsection{Comparing the complexities of $\mathscr{q}$ and $\mathscr{g}$}\label{sec:main1}
We recall that for an arbitrary $r$-smooth function $y\in \mathcal{W}_{r,n}$, the complexity for approximating it is $\mathcal{O}(\epsilon^{-n/r})$. We show that hypernetwork models can effectively learn a different function for each input instance $I$. Specifically, a hypernetwork is able to capture a separate approximator $h_I = g(\cdot;f(I;\theta_f))$ for each $y_I$ that has a minimal complexity $\mathcal{O}(\epsilon^{-m_1/r})$. On the other hand, we show that for a smoothness order of $r=1$, under certain constraints, when applying an embedding method, it is impossible to provide a separate approximator $h_I = q(\cdot,e(I;\theta_e);\theta_q)$ of complexity $\mathcal{O}(\epsilon^{-m_1})$. Therefore, the embedding method does not enjoy the same modular properties of hypernetworks.



The following result demonstrates that the complexity of the main-network $q$ in any embedding method has to be of non-optimal complexity. As we show, it holds regardless of the size of $\mathscr{e}$, as long as the functions $e \in \mathscr{e}$ are of bounded Lipschitzness. 

\begin{restatable}{theorem}{two}\label{thm:2}
Let $\sigma$ be a universal, piece-wise $C^1(\mathbb{R})$ activation function with $\sigma' \in BV(\mathbb{R})$ and $\sigma(0)=0$. Let $\mathcal{E}_{\mathscr{e},\mathscr{q}}$ be a neural embedding method. Assume that $\mathscr{e}$ is a class of continuously differentiable neural network $e$ with zero biases, output dimension $k = \mathcal{O}(1)$ and 
$\mathcal{C}(e) \leq \ell_1$ and $\mathscr{q}$ is a class of neural networks $q$ with $\sigma$ activations and 
$\mathcal{C}(q) \leq \ell_2$. Let $\mathbb{Y} := \mathcal{W}_{1,m}$. Assume that any non-constant $y\in \mathbb{Y}$ cannot be represented as a neural network with $\sigma$ activations. If the embedding method achieves error $d(\mathcal{E}_{\mathscr{e},\mathscr{q}},\mathbb{Y}) \leq \epsilon$, then, the complexity of $\mathscr{q}$ is: $N_{\mathscr{q}} = \Omega\left(\epsilon^{-(m_1+m_2)}\right)$.
\end{restatable}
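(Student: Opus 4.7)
The plan is to reduce Theorem~\ref{thm:2} to Theorem~\ref{thm:generallower} by replacing the (possibly high-complexity) embedding $e$ with a $\sigma$-network of size only $\mathcal{O}(\epsilon^{-m_2})$, without substantially degrading the embedding-method approximation. Since $m_2 < m_1 + m_2 = m$ whenever $m_1 \geq 1$, the cost of approximating $e$ is dominated by the lower bound on total complexity produced by Theorem~\ref{thm:generallower}, and the residual complexity must be paid for by $q$.

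The first step is to exploit the Lipschitz and smoothness structure of $\mathscr{e}$. The assumptions $\mathcal{C}(e) \leq \ell_1$, $\sigma(0)=0$, zero biases, and $C^1$-regularity together imply that each $e \in \mathscr{e}$ is $\ell_1$-Lipschitz with $e(0)=0$, so each of its $k = \mathcal{O}(1)$ output coordinates lies in a scaled Sobolev class of the form $\ell_1 \cdot \mathcal{W}_{1,m_2}$. By universality of $\sigma$, one obtains a single class $\tilde{\mathscr{e}}$ of $\sigma$-networks of complexity $\tilde{N}_e = \mathcal{O}(\epsilon^{-m_2})$ such that every $e \in \mathscr{e}$ admits some $\tilde{e} \in \tilde{\mathscr{e}}$ with $\|e - \tilde{e}\|_{\infty} \leq \epsilon/(2\ell_2)$.

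The second step is to propagate this approximation through $q$. Since $\mathcal{C}(q) \leq \ell_2$ implies that $q(x,\cdot\,;\theta_q)$ is $\ell_2$-Lipschitz in its second argument, swapping $e$ for $\tilde{e}$ inflates the uniform error by at most $\ell_2 \cdot \epsilon/(2\ell_2) = \epsilon/2$. Therefore the composed class $\tilde{\mathcal{E}} := \{(x,I) \mapsto q(x, \tilde{e}(I); \theta_q) : q \in \mathscr{q},\, \tilde{e} \in \tilde{\mathscr{e}}\}$ is itself a class of $\sigma$-networks on $[-1,1]^m$ with $N_q + \tilde{N}_e$ parameters and satisfies $d(\tilde{\mathcal{E}}; \mathcal{W}_{1,m}) \leq 2\epsilon$. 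Applying Theorem~\ref{thm:generallower} to $\tilde{\mathcal{E}}$ with $r=1$, $n=m$ yields $N_q + \tilde{N}_e = \Omega(\epsilon^{-m})$; combined with $\tilde{N}_e = \mathcal{O}(\epsilon^{-m_2}) = o(\epsilon^{-m})$, this delivers the desired $N_q = \Omega(\epsilon^{-(m_1+m_2)})$.

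The most delicate part I anticipate is the first step: universality of $\sigma$ readily provides pointwise approximants for any given $e$, but assembling these into a single fixed architecture of size $\mathcal{O}(\epsilon^{-m_2})$ that uniformly covers all of $\mathscr{e}$ requires either a constructive Sobolev-approximation theorem for vector-valued, Lipschitz targets, or a covering-number argument for the class of $\ell_1$-Lipschitz maps $[-1,1]^{m_2} \to \mathbb{R}^k$. A secondary concern is verifying that the composed class $\tilde{\mathcal{E}}$ meets the hypothesis of Theorem~\ref{thm:generallower} that non-constant members of $\mathcal{W}_{1,m}$ are not $\sigma$-networks; this transfers directly from the corresponding assumption on $\mathbb{Y}$ in the statement of Theorem~\ref{thm:2}.
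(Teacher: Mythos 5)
Your proposal is correct and follows essentially the same route as the paper: the paper's proof (via Lem.~13 and Case~1 of Lem.~12, together with the spectral-complexity-to-Lipschitz translation in the proof of Thm.~3) likewise replaces $e$ by a $\sigma$-network of size $\mathcal{O}(k\,\epsilon^{-m_2})$ using universality, propagates the error through the $\ell_2$-Lipschitz $q$, applies Thm.~1 to the composed network, and subtracts the $o(\epsilon^{-m})$ embedding-approximator size. The "delicate part" you flag is already handled by the paper's definition of a universal activation, which provides a single class covering the whole (rescaled) Sobolev ball containing the coordinates of every $e \in \mathscr{e}$.
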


\begin{figure}
  \begin{minipage}[c]{0.60\textwidth}
  \begin{tabular}{@{}c@{}c@{}}
     \includegraphics[width=0.5\linewidth]{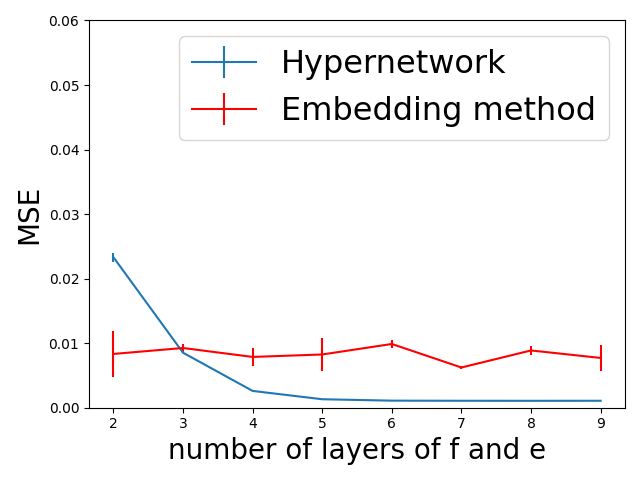} & \includegraphics[width=0.5\linewidth]{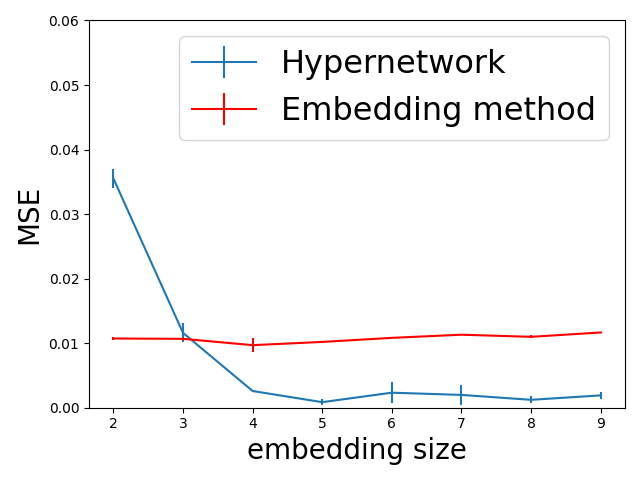} \\
     (a)&(b)\\
  \end{tabular}
 \end{minipage}%
 \hfill
  \begin{minipage}[c]{0.39\textwidth}
    \caption{(a) MSE error obtained by hypernetworks and the embedding method with varying number of layers (x-axis). Synthetic target functions $y(x,I) = \langle x,h(I)\rangle$, for neural network $h$. 
    (b) Varying the embedding layer to be 100/1000 (depending on the method) times the x-axis. error bars are SD over 100 repetitions.}
    \label{fig:comparison}
  \end{minipage}
\end{figure}

The following theorem extends Thm.~\ref{thm:2} to the case where the output dimension of $e$ depends on $\epsilon$. In this case, the parameter complexity is also non-optimal.

\begin{restatable}{theorem}{three}\label{thm:3}
In the setting of Thm.~\ref{thm:2}, except $k$ is not necessarily $\mathcal{O}(1)$. Assume that the first layer of any $q \in \mathscr{q}$ is bounded $\|W^1\|_1 \leq c$, for some constant $c>0$. If the embedding method achieves error $d(\mathcal{E}_{\mathscr{e},\mathscr{q}},\mathbb{Y}) \leq \epsilon$, then, the complexity of $\mathscr{q}$ is: $N_{\mathscr{q}} = \Omega\left(\epsilon^{-\min(m,2m_1) }\right)$.
\end{restatable}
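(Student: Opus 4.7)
The plan is to extend the argument of Thm~\ref{thm:2} (which handled $k=\mathcal{O}(1)$) to allow the embedding output dimension $k$ to grow with $\epsilon$. In that earlier setting the assumption $k=\mathcal{O}(1)$ limited how much information about $I$ could be funnelled through $e$ into $q$; here the same role is played by the new hypothesis $\|W^1\|_1\le c$ on the first layer of $q$, which restricts the Lipschitz contribution of the embedding to the subsequent layers of $q$ regardless of how large $k$ is.

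The central structural step is to absorb the first layer of $q$ into the embedding. For any $q\in\mathscr{q}$, split the first layer weight as $W^1=[W^1_x\mid W^1_z]$ with $W^1_z\in\mathbb{R}^{h_2\times k}$; the hypothesis yields $\|W^1_z\|_1\le c$, so the map $\tilde e(I;\theta_e,\theta_q):= W^1_z\,e(I;\theta_e)+b^1$ sends $\mathcal{I}$ into $\mathbb{R}^{h_2}$ with Lipschitz constant at most $c\ell_1$, a bound independent of $k$. Writing $q(x,e(I);\theta_q)=\tilde q(x;\tilde e(I))$, where $\tilde q$ is the sub-network of $q$ obtained by detaching the first-layer bias and treating it as a free input, we have rewritten $\mathcal{E}_{\mathscr{e},\mathscr{q}}$ as an embedding method whose effective embedding $\tilde e$ has output dimension at most $h_2$ (which is already counted in $N_{\mathscr{q}}$) and Lipschitz constant independent of $k$. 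From this point on, the richness of $e$ into $\mathbb{R}^k$ is irrelevant: only the bounded-Lipschitz mapping $\tilde e$ into $\mathbb{R}^{h_2}$ matters.

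I would then combine two lower bounds on $N_{\mathscr{q}}$. (i) Rerunning the argument of Thm~\ref{thm:2} on the reformulated pair $(\tilde e,\tilde q)$ gives $N_{\mathscr{q}}=\Omega(\epsilon^{-m})$ whenever $m\le 2m_1$, because the bounded-Lipschitz embedding $\tilde e$ cannot compensate for insufficient complexity in $\tilde q$; this is exactly the regime where the Thm~\ref{thm:2} bound goes through without the $k=\mathcal{O}(1)$ assumption. (ii) For $m>2m_1$, I would restrict the target class to the sub-family $\mathbb{Y}'=\{y(x,I)=p(x,I_1,\ldots,I_{m_1}):p\in\mathcal{W}_{1,2m_1}\}$, which, from the point of view of the $(x,\tilde e(I))$-structure of $\tilde q$, has effective dimension $2m_1$; running the continuous-selector construction from the proof of Thm~\ref{thm:generallower} on $\mathbb{Y}'$ and on the modified parameter space forces $N_{\mathscr{q}}=\Omega(\epsilon^{-2m_1})$. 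Taking the better of the two bounds yields $N_{\mathscr{q}}=\Omega(\epsilon^{-\min(m,2m_1)})$.

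The main obstacle will be making the continuous selector rigorous in the reformulation. Passing from $(e,q)$ to $(\tilde e,\tilde q)$ mixes parameters of $e$ and $q$ through $W^1_z$, so one must re-verify Assumption~\ref{assmp:apprx} on the modified parameter space and check that the selector built in the proof of Thm~\ref{thm:generallower} remains continuous there. The Lipschitz bound $c\ell_1$ on $\tilde e$, which is precisely what the hypothesis $\|W^1\|_1\le c$ buys us, is what prevents $\mathcal{E}_{\mathscr{e},\mathscr{q}}$ from using a very large $k$ to evade the $\Omega(\epsilon^{-\min(m,2m_1)})$ bound.
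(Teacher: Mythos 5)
Your first structural move---composing the embedding with the block of $q$'s first layer that acts on $e(I)$, so that the effective embedding $\tilde e=W^{1,2}_q\,e$ has output width $w_1$ (the first hidden width of $q$) and Lipschitz constant at most $c\,\ell_1$ independently of $k$---is exactly the device the paper uses in one branch of its argument, and the role you assign to $\|W^1\|_1\le c$ is the right one. The gap is in how you then extract a bound on $N_{\mathscr{q}}$ alone. In step (ii) you propose to run the continuous-selector construction of Thm.~\ref{thm:generallower} directly on the composite class $\{\tilde q(x;\tilde e(I))\}$ over a restricted target family. That construction lower-bounds the number of parameters of the class it is applied to; for the composite class this is $N_{\mathscr{q}}+N_{\mathscr{e}}$, and $N_{\mathscr{e}}$ is allowed to be arbitrarily large, so no conclusion about $N_{\mathscr{q}}$ alone follows. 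Moreover, $q(x,e(I;\theta_e);\theta_q)$ is not a plain feedforward network with the single activation $\sigma$, so the identifiability-based selector machinery (Lem.~1 and the selector lemmas of the appendix) does not apply to it as is; this is precisely why the paper never applies Thm.~\ref{thm:generallower} to $q\circ e$ directly, but first replaces the embedding part by a $\sigma$-network of explicitly bounded size---$\mathcal{O}(k\,\epsilon^{-m_2})$ when approximating $e$ coordinatewise, or $\mathcal{O}(w_1\,\epsilon^{-m_2})$ when approximating $W^{1,2}_q e$ using $\|W^1\|_1\le c$---and only then invokes Thm.~\ref{thm:generallower} on the resulting single network and counts parameters.

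The second problem is where $\Omega(\epsilon^{-2m_1})$ actually comes from. In the paper it is not an $N$-width of a restricted class of targets but the outcome of a dichotomy on $w_1$ and $k$: assuming $N_{\mathscr{q}}=o(\epsilon^{-m})$, if $w_1=o(\epsilon^{-m_1})$ then replacing $W^{1,2}_q e$ yields a genuine $\sigma$-network of total size $o(\epsilon^{-m})$ approximating $\mathcal{W}_{1,m}$, contradicting Thm.~\ref{thm:generallower}; if instead $w_1=\Omega(\epsilon^{-m_1})$, the coordinatewise replacement of $e$ forces $k=\Omega(\epsilon^{-m_1})$ (otherwise the total is again $o(\epsilon^{-m})$), and then $N_{\mathscr{q}}\ge w_1\cdot k=\Omega(\epsilon^{-2m_1})$, i.e.\ the bound is obtained by counting the entries of the first-layer block $W^{1,2}_q\in\mathbb{R}^{w_1\times k}$ itself. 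By absorbing $W^{1,2}_q$ into $\tilde e$ and declaring the dimension $k$ ``irrelevant,'' you discard exactly the quantity $w_1\cdot k$ that delivers the $\epsilon^{-2m_1}$ term; correspondingly, your claim (i) that the Thm.~\ref{thm:2} argument ``goes through'' for $m\le 2m_1$ silently requires the same dichotomy, since the effective embedding width $w_1$ may itself be $\Omega(\epsilon^{-m_1})$, in which case the $\sigma$-network approximating $\tilde e$ is no longer of negligible size and the contradiction does not appear without bringing $k$ back into the count. A repaired version of your plan would keep both descriptions of the interface ($e$ with output dimension $k$, and $W^{1,2}_q e$ with output dimension $w_1$), replace whichever is cheaper by a $\sigma$-network, and conclude by Thm.~\ref{thm:generallower} plus parameter counting---which is the paper's proof; the restriction to targets depending only on $(x,I_1,\dots,I_{m_1})$ is then unnecessary.
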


{The results in Thms.~\ref{thm:2}-\ref{thm:3} are limited to $r=1$, which in the context of the Sobolev space $r=1$ means bounded, Lipschitz and continuously differentiable functions. The ability to approximate these functions is studied extensively in the literature~\cite{NIPS2017_7203,hanin2017approximating}. Extending the results for $r>1$ is possible but necessitates the introduction of spectral complexities that correspond to higher-order derivatives. }


The following theorem shows that for any function $y \in \mathcal{W}_{r,m}$, there is a large enough hypernetwork, that maps between $I$ and an approximator of $y_I$ of optimal complexity.



\begin{restatable}{theorem}{four}[Modularity of Hypernetworks]\label{thm:4}
Let $\sigma$ be as in Thm.~\ref{thm:2}. Let $y \in \mathbb{Y} = \mathcal{W}_{r,m}$ be a function, such that, $y_I$ cannot be represented as a neural network with $\sigma$ activations for all $I \in \mathcal{I}$. Then, there is a class, $\mathscr{g}$, of neural networks with $\sigma$ activations and a network $f(I;\theta_f)$ with ReLU activations, such that, $h(x,I) = g(x;f(I;\theta_f))$ achieves error $\leq \epsilon$ in approximating $y$ and $N_{\mathscr{g}} = \mathcal{O}\left(\epsilon^{-m_1/r}\right)$.
\end{restatable}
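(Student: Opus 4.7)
The plan is to exploit the universality of $\sigma$ to build, for each conditioning input $I$, an $\epsilon/2$-accurate primary approximator of the slice $y_I := y(\cdot,I)$ whose parameters depend \emph{continuously} on $I$, and then realize this parameter map by a ReLU hypernetwork $f$. The $x$-dependence is thus carried entirely by $\mathscr{g}$ (at the optimal rate), while the $I$-dependence is carried entirely by $f$, whose size is unconstrained by the statement.

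First, observe that for every $I$ the slice $y_I$ lies in $\mathcal{W}_{r,m_1}$, since the Sobolev norm of $y$ on $[-1,1]^m$ controls all purely-$x$ partial derivatives up to order $r$. Since $\sigma$ is universal, I would choose a class $\mathscr{g}$ of $\sigma$-networks with $N_{\mathscr{g}} = \mathcal{O}(\epsilon^{-m_1/r})$ such that $d(\mathscr{g};\mathcal{W}_{r,m_1}) \leq \epsilon/2$. By Assumption~\ref{assmp:apprx}, each $y_I$ admits a \emph{unique} best approximator $g(\cdot;\theta^*_I) \in \mathscr{g}$ with $\|g(\cdot;\theta^*_I) - y_I\|_\infty \leq \epsilon/2$, and by the uniform-boundedness convention on optimal parameters stated in the paper, $K := \{\theta^*_I : I\in\mathcal{I}\}$ is contained in a compact subset of $\Theta_{\mathscr{g}}$.

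Next I would verify that $\Phi : I \mapsto \theta^*_I$ is continuous. If $I_n \to I$, continuity of $y$ gives $y_{I_n} \to y_I$ uniformly in $x$, so along any convergent subsequence $\theta^*_{I_n} \to \bar\theta$ the function $\theta \mapsto \|g(\cdot;\theta) - y_I\|_\infty$ is minimized at $\bar\theta$; uniqueness then forces $\bar\theta = \theta^*_I$, giving continuity. Because $\Phi$ is continuous on the compact cube $\mathcal{I} = [-1,1]^{m_2}$, universal approximation by ReLU networks produces, for any $\delta > 0$, a ReLU network $f(\cdot;\theta_f)$ with $\|f(I;\theta_f) - \Phi(I)\|_\infty \leq \delta$ on $\mathcal{I}$, and one may take its output to stay in a slight enlargement of $K$. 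Using Lem.~3 of the appendix to bound the Lipschitz constant of $g(x;\cdot)$ over this enlargement by some $L$ independent of $\epsilon$, the triangle inequality yields
\begin{equation*}
|g(x;f(I;\theta_f)) - y(x,I)| \leq L\delta + \epsilon/2,
\end{equation*}
and setting $\delta = \epsilon/(2L)$ gives the target uniform error while leaving $N_{\mathscr{g}} = \mathcal{O}(\epsilon^{-m_1/r})$ unchanged.

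The main obstacle is establishing the continuity of $\Phi$. This is precisely the ``continuous selector'' step the paper highlights as the cornerstone of Thms.~\ref{thm:2}--\ref{thm:5}, and which~\cite{devore} obtains only under a restrictive bi-Lipschitz parameterization hypothesis that fails for neural networks. Here it is driven by Assumption~\ref{assmp:apprx} together with the hypothesis that no $y_I$ is representable by a $\sigma$-network in $\mathscr{g}$, which rules out degenerate cases where the best approximator is non-unique because the function lies on the boundary of $\mathscr{g}$. A secondary subtlety is that the Lipschitz constant $L$ used to convert parameter error into function error must be uniform over a compact region containing the image of $f$; this is controlled via the spectral-complexity bound applied on that region.
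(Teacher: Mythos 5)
There is a genuine gap at the step you yourself flag as the crux: the continuity (and even the well-definedness) of the parameter map $\Phi : I \mapsto \theta^*_I$. Assumption~\ref{assmp:apprx} gives a unique best approximating \emph{function} $g(\cdot;\theta^*_I)$, not a unique \emph{parameter vector}: a neural network admits many parameterizations of the same function (permutations of hidden neurons, cloned or zero neurons), which is exactly why the bi-Lipschitz condition of~\cite{devore} fails for networks. Consequently your subsequence argument only shows that any cluster point $\bar\theta$ of $\theta^*_{I_n}$ lies in the argmin set $M[y_I;\mathscr{g}]$, i.e.\ $g(\cdot;\bar\theta)=g(\cdot;\theta^*_I)$; uniqueness of the function does not force $\bar\theta$ to coincide with whichever representative you chose, so continuity of $\Phi$ does not follow. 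Your suggestion that non-representability of $y_I$ ``rules out degenerate cases where the best approximator is non-unique'' misidentifies the source of non-uniqueness: it is a symmetry of the parameterization, not a boundary phenomenon. The paper's proof spends most of its effort precisely here: it perturbs $\sigma$ to an identifiability-inducing activation $\rho$ via Lem.~\ref{lem:vlacic}, uses the non-representability hypothesis together with Assumption~\ref{assmp:improve} to show the best approximators are \emph{normal} networks (Lem.~\ref{lem:normal}, Lem.~\ref{lem:c2}), so that minimizing parameters are unique up to a finite isomorphism group, proves continuity of the multi-valued argmin map, builds a continuous selector locally and patches it over a finite subcover (Lem.~\ref{lem:selection}), and finally transfers back from $\rho$ to $\sigma$ at the cost of a factor $2$ and an additive term (Lems.~\ref{lem:c1},~\ref{lem:c3}, yielding Lem.~\ref{lem:selection2}). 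Without this (or an equivalent) argument, your proof does not go through.

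Aside from this, the remainder of your outline matches the paper's route (slices $y_I\in\mathcal{W}_{r,m_1}$, a universal class $\mathscr{g}$ of size $\mathcal{O}(\epsilon^{-m_1/r})$, ReLU approximation of the selector via~\cite{hanin2017approximating}, and a perturbation bound to convert parameter error into function error), with one smaller inaccuracy: Lem.~\ref{lem:boundlip} bounds the Lipschitz constant of $g$ with respect to its \emph{input} $x$, not with respect to the parameters $\theta_g$; the paper instead invokes uniform continuity of $(x,\theta_g)\mapsto g(x;\theta_g)$ on the compact set $\mathcal{X}\times\mathbb{B}$ (Heine--Cantor). A Lipschitz-in-$\theta$ bound over a bounded parameter region can be proved, but it is not the cited lemma and would need its own short argument.
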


{Recall that Thm.~\ref{thm:generallower} shows that the minimal complexity for approximating each  individual smooth target function $y_I$ is $\mathcal{O}(\epsilon^{-m_1/r})$. Besides, Thm.~\ref{thm:4} shows that this level of fitting is attainable by a hypernetwork for all $y_I$. Therefore, we conclude that hypernetworks are modular. On the other hand, from Thms.~\ref{thm:2}-\ref{thm:3} we conclude that this is not the case for the embedding method.}

When comparing the results in Thms.~\ref{thm:2},~\ref{thm:3} and~\ref{thm:4} in the case of $r=1$, we notice that in the hypernetworks case, $\mathscr{g}$ can be of complexity $\mathcal{O}(\epsilon^{-m_1})$ in order to achieve approximation error $\leq \epsilon$. On the other hand, for the embedding method case, the complexity of the primary-network $q$ is at least $\Omega(\epsilon^{-(m_1+m_2)})$ when the embedding dimension is of constant size and at least $\Omega\left(\epsilon^{-\min(m,2m_1) }\right)$ when it is unbounded to achieve approximation error $\leq \epsilon$. In both cases, the primary network of the embedding method is larger by orders of magnitude than the primary network of the hypernetwork.

Note that the embedding method can be viewed as a simple hypernetwork, where only the biases of the first layer of $g$ are given by $f$. Therefore, the above results show that the modular property of hypernetworks, which enables $g$ to be of small complexity, emerges only when letting $f$ produce the whole set of weights of $g$. We note that this kind of emulation is not symmetric, as it is impossible to emulate a hypernetwork with the embedding method as it is bound to a specific structure defined by $q$ being a neural network (as in Eq.~\ref{eq:net}) that takes the concatenation of $x$ and $e(I;\theta_e)$ as its input.

{{\bf Proof sketch of Thm.~\ref{thm:4}\quad} 
Informally, the theorem follows from three main arguments: {\bf(i)} we treat $y(x,I)$ as a class of functions $\mathcal{Y}:=\{y_I\}_{I\in \mathcal{I}}$ and take a class $\mathscr{g}$ of neural networks of size $\mathcal{O}(\epsilon^{-m_2/r})$, that achieves $d(\mathscr{g};\mathcal{Y})\leq \epsilon$,
{\bf(ii)} we prove the existence of a continuous selector for $\mathcal{Y}$ within $\mathscr{g}$ and {\bf (iii)} we draw a correspondence between the continuous selector and modeling $y$ using a hypernetwork. 

We want to show the existence of a class $\mathscr{g}$ of size $\mathcal{O}(\epsilon^{-m_2/r})$ and a network $f(I;\theta_f)$, such that,
\begin{equation}
\sup_{I}\|g(\cdot ;f(I;\theta_f)) - y_I\|_{\infty} \leq 3 \sup_{I}\inf_{\theta_g}\|g(\cdot ;\theta_g) - y_I\|_{\infty} \leq 3\epsilon
\end{equation}
We note that this expression is very similar to a robust approximation of the class $\mathcal{Y}$, except the selector $S(y_I)$ is replaced with a network $f(I;\theta_f)$. Since $\sigma$ is universal, there exists an architecture $\mathscr{g}$ of size $\mathcal{O}(\epsilon^{-m_1/r})$, such that,  $d(\mathscr{g};\mathcal{Y})  \leq \epsilon$. In addition, we prove the existence of a continuous selector $S:\mathcal{Y} \to \Theta_{\mathscr{g}}$, i.e., $\sup_I\| g(\cdot;S(y_I))-y_I\|_{\infty} \leq 2d(\mathscr{g};\mathcal{Y}) \leq 2\epsilon$. 

As a next step, we replace $S$ with a neural network $f(I;\theta_f)$. Since $I \mapsto y_I$ is a continuous function, the function $\hat{S}(I) := S(y_I)$ is continuous as well. Furthermore, as we show, $g$ is uniformly continuous with respect to both $x$ and $\theta_g$, and therefore, by ensuring that $\inf_{\theta_f} \|f(\cdot;\theta_f)-S(\cdot)\|_{\infty}$ is small enough, we can guarantee that $\inf_{\theta_f}\sup_I\|g(\cdot; f(I;\theta_f)) - g(\cdot;\hat{S}(I))\|_{\infty} \leq \epsilon$. Indeed, by~\cite{hanin2017approximating}, if $\mathscr{f}$ is a class of large enough ReLU neural networks, we can ensure that $\inf_{\theta_f} \|f(\cdot;\theta_f)-S(\cdot)\|_{\infty}$ is as small as we wish. Hence, by the triangle inequality, we have: $\inf_{\theta_f}\sup_I\| g(\cdot;f(I;\theta_f))-y_I\|_{\infty} \leq 3\epsilon$.
}

\subsection{Parameter Complexity of Meta-Networks}\label{sec:main2}


{As  discussed in Sec.~\ref{sec:main1}, there exists a selection function $S:\mathcal{I} \to \Theta_{\mathscr{g}}$ that takes $I$ and returns parameters of $g$, such that, $g(\cdot;S(I))$ well approximate $y_I$.} In common practical scenarios, the typical assumption regarding the selection function $S(I)$ is that it takes the form $W \cdot h$, for some continuous function $h:\mathcal{I} \to \mathbb{R}^w$ for some relatively small $w>0$ and $W$ is a linear mapping~\cite{pmlr-v95-ukai18a,lorraine2018stochastic,chang2020principled,Littwin_2019_ICCV}. In this section, we show that for functions $y$ with a continuous selector $S$ of this type, the complexity of the function $f$ can be reduced from $\mathcal{O}(\epsilon^{-m/r})$ to $\mathcal{O}(\epsilon^{-m_2/r} + \epsilon^{-m_1/r})$.

\begin{restatable}{theorem}{five}\label{thm:5}
Let $\sigma$ be a in Thm.~\ref{thm:2}. 
Let $\mathscr{g}$ be a class of neural networks with $\sigma$ activations. Let $y\in \mathbb{Y} := \mathcal{W}_{r,m}$ be a target function. Assume that there is a continuous selector $S \in \mathcal{P}_{r,w,c}$ for the class $\{y_I\}_{I \in \mathcal{I}}$ within $\mathscr{g}$. Then, there is a hypernetwork $h(x,I) = g(x;f(I;\theta_f))$ that achieves error $\leq \epsilon$ in approximating $y$, such that: $N_{\mathscr{f}} = \mathcal{O}(w^{1+m_2/r} \cdot \epsilon^{-m_2/r} + w \cdot N_{\mathscr{g}}) = \mathcal{O}(\epsilon^{-m_2/r} + \epsilon^{-m_1/r})$.
\end{restatable}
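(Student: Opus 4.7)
The plan is to build $f$ as a neural network that approximates the feature part of the given selector and then applies the selector's fixed linear factor as its last layer. Since $S \in \mathcal{P}_{r,w,c}$, write $S(I) = W \cdot P(I)$ with $W \in \mathbb{R}^{N_{\mathscr{g}} \times w}$ satisfying $\|W\|_1 \leq c$, and each coordinate $P_i \in \mathcal{W}_{r,m_2}$. Approximate $P$ coordinate-wise by a neural network $\tilde{P}(I;\theta_{\tilde P})$ (using universality of $\sigma$), and define $f(I;\theta_f) := W \cdot \tilde{P}(I;\theta_{\tilde P})$, with $\theta_f$ collecting both $\theta_{\tilde P}$ and the $wN_{\mathscr{g}}$ entries of $W$, which are realized as the last linear layer of $f$.

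\textbf{Error decomposition.} Apply the triangle inequality,
\begin{equation}
\|g(\cdot;f(I;\theta_f)) - y_I\|_{\infty}
\leq \|g(\cdot;f(I;\theta_f)) - g(\cdot;S(I))\|_{\infty}
+ \|g(\cdot;S(I)) - y_I\|_{\infty}.
\end{equation}
The second term is $\mathcal{O}(\epsilon)$ by the selector hypothesis, once $\mathscr{g}$ is taken of the universal size $N_{\mathscr{g}} = \mathcal{O}(\epsilon^{-m_1/r})$ (as in Thm.~\ref{thm:4}). For the first term I invoke parameter-Lipschitzness of $g$: on bounded parameter sets, $\|g(\cdot;\theta_1)-g(\cdot;\theta_2)\|_{\infty} \leq L \|\theta_1-\theta_2\|_1$ for a constant $L$ depending only on the architecture and the weight bounds (the parameter-side analog of Lem.~3 in the appendix, provable by the same layerwise product estimate used for spectral complexity). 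Setting $\delta' := \max_i \|\tilde P_i - P_i\|_{\infty}$, the induced matrix-norm inequality gives $\|f(I;\theta_f) - S(I)\|_1 = \|W(\tilde P(I)-P(I))\|_1 \leq c w \delta'$, so choosing $\delta' = \Theta(\epsilon/(Lcw))$ makes the first term $\leq \epsilon/2$ and the total error $\leq \epsilon$.

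\textbf{Parameter count and obstacle.} Universality of $\sigma$ gives a network approximating each $P_i \in \mathcal{W}_{r,m_2}$ to accuracy $\delta'$ with $\mathcal{O}((\delta')^{-m_2/r})$ parameters; stacking $w$ of them in parallel, $\tilde{P}$ costs $\mathcal{O}(w (\delta')^{-m_2/r}) = \mathcal{O}(w^{1+m_2/r}\epsilon^{-m_2/r})$ parameters. Adding the $w N_{\mathscr{g}}$ entries of the final linear layer $W$ yields $N_{\mathscr{f}} = \mathcal{O}(w^{1+m_2/r}\epsilon^{-m_2/r} + w N_{\mathscr{g}})$; since $w = \mathcal{O}(1)$ and $N_{\mathscr{g}} = \mathcal{O}(\epsilon^{-m_1/r})$ this collapses to $\mathcal{O}(\epsilon^{-m_2/r}+\epsilon^{-m_1/r})$, as claimed. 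The main obstacle is the parameter-Lipschitzness estimate: the spectral complexity $\mathcal{C}(g)$ in the paper bounds Lipschitzness only in $x$, so one needs a companion product estimate for perturbations in $\theta_g$ whose constant stays bounded as the architecture of $\mathscr{g}$ scales with $\epsilon$. A secondary care point is norm matching across $W$: pairing the induced $\|W\|_1$ norm with the $\ell_1$ norm of $\tilde P - P$ forces per-coordinate accuracy to scale like $1/w$, which is exactly what produces the extra $w^{1+m_2/r}$ factor in the final count.
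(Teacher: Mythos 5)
Your proposal follows essentially the same route as the paper: factor $S(I) = W \cdot P(I)$, approximate each coordinate $P_i \in \mathcal{W}_{r,m_2}$ to per-coordinate accuracy $\Theta(\epsilon/w)$ by a network of size $\mathcal{O}(w^{m_2/r}\epsilon^{-m_2/r})$, realize $W$ as the final linear layer of $f$, split the error by the triangle inequality, and control $\|g(\cdot;f(I;\theta_f)) - g(\cdot;S(I))\|_{\infty}$ by a parameter-perturbation estimate, yielding exactly the stated count $N_{\mathscr{f}} = \mathcal{O}(w^{1+m_2/r}\epsilon^{-m_2/r} + w N_{\mathscr{g}})$. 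The ``main obstacle'' you flag is precisely what the paper resolves with an explicit layer-by-layer induction: using $\sigma(0)=0$, the Lipschitzness of $\sigma$, and the bound $\|S(I)\|_1 \leq c\cdot w$ on the generated weight matrices and biases, it shows the activations of $g(x;S(I))$ and $g(x;f(I))$ remain $\mathcal{O}(\epsilon)$-close with constants depending only on $L$, $c$, $w$ and the fixed depth of $g$ (not on its width), which is exactly the width-independence your sketch requires.
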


We note that the number of trainable parameters in a hypernetwork is measured by $N_{\mathscr{f}}$. By Thm.~\ref{thm:generallower}, the number of trainable parameters in a neural network is $\Omega(\epsilon^{-(m_1+m_2)/r})$ in order to be able to approximate any function $y \in \mathcal{W}_{r,m}$. Thm.~\ref{thm:3} shows that in the case of the common hypernetwork structure, the number of trainable parameters of the hypernetwork is reduced to $\mathcal{O}(\epsilon^{-m_2/r} + \epsilon^{-m_1/r})$. While for embedding methods, where the total number of parameters combines those of both $q$ and $e$, it is evident that the overall number of trainable parameters is $\Omega(\epsilon^{-(m_1+m_2)/r})$. In particular, when equating the number of trainable parameters of a hypernetwork with the size of an embedding method, the hypernetworks' approximation error is significantly lower. This kind of stronger rates of realizability is typically associated with an enhanced generalization performance~\cite{NIPS2010_3894}.

\section{Experiments}\label{sec:experiments}


\noindent{\bf Validating Assumption~\ref{assmp:apprx}\quad} Informally, Assumption~\ref{assmp:apprx} claims that for any target function $y \in \mathbb{Y}$, and class $\mathscr{f}$ of neural networks with an activation function $\sigma$, there is a unique global approximator $f^* \in \mathscr{f}$, such that, $f^* \in \arg\inf_{f \in \mathscr{f}} \| f - y\|_{\infty}$. To empirically validate the assumption, we take a high complexity target function $y$ and approximate it using two neural network approximators $f_1$ and $f_2$ of the same architecture $\mathscr{f}$. The goal is to show that when $f_1$ and $f_2$ are best approximators of $y$ within $\mathscr{f}$, then, they have similar input-output relations, regardless of approximation error. 

Three input spaces are considered: (i) the CIFAR10 dataset, (ii) the MNIST dataset and (iii) the set $[-1,1]^{28 \times 28}$. The functions $f_1$ and $f_2$ are shallow ReLU MLP neural networks with $100$ hidden neurons and $10$ output neurons. The target function $y$ is a convolutional neural network of the form:
\begin{equation}
y(x) =  \textnormal{fc}_1\circ  \textnormal{ReLU}\circ \textnormal{conv}_2 \circ \textnormal{ReLU}\circ \textnormal{conv}_1(x)
\end{equation}
where $\textnormal{conv}_1$ ($\textnormal{conv}_2$) is a convolutional layer with $1$ or $3$ ($20$) input channels, $20$ ($50$) output channels, kernel size $10$ and stride $2$ and $\textnormal{fc}_1$ with $10$ outputs.

To study the convergence between $f_1$ and $f_2$, we train them independently to minimize the MSE loss to match the output of $y$ on random samples from the input space. The training was done using the SGD method with a learning rate $\mu=0.01$ and momentum $\gamma = 0.5$, for $50$ epochs. We initialized $f_1$ and $f_2$ using different initializations.

In Fig.~\ref{fig:assmp} we observe that the distance between $f_1$ and $f_2$ tends to be significantly smaller than their distances from $y$. Therefore, we conclude that regardless of the approximation error of $y$ within $\mathscr{f}$, any two best approximators $f_1,f_2\in \mathscr{f}$ of $y$ are identical. 


    

\begin{figure*}[t]
    \centering
    \begin{tabular}{ccc}
    \includegraphics[width=.2532\linewidth]{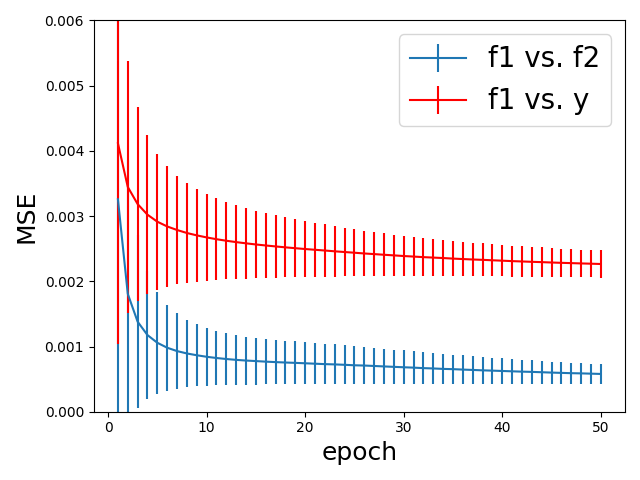}&
    \includegraphics[width=.2532\linewidth]{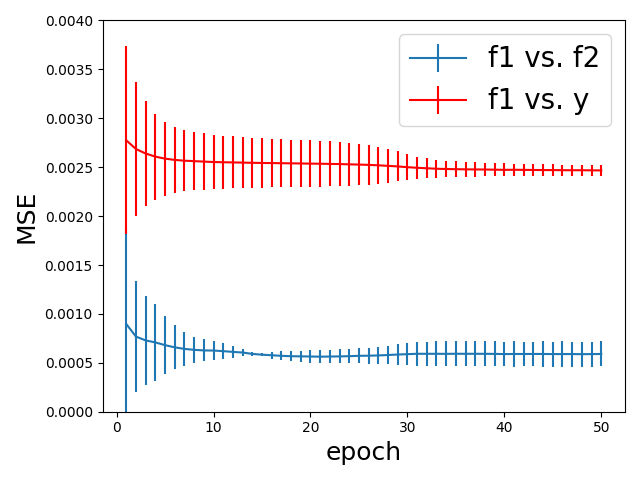}&
    \includegraphics[width=.2532\linewidth]{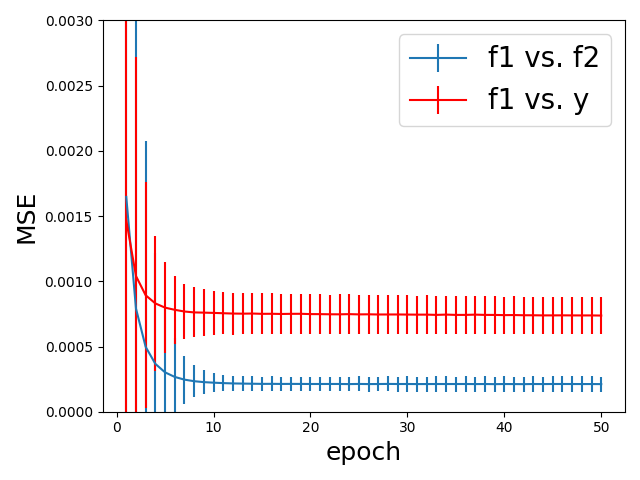}\\
    (a) MNIST & (b) CIFAR10 & (c) The set $[-1,1]^{28\times 28}$
    \end{tabular}
    \caption{{\bf Validating Assumption~\ref{assmp:apprx}.} MSE between $f_1$ and $f_2$ (blue), and between $f_1$ and $y$ (red), when approximating $y$, as a function of epoch.} 
    \label{fig:assmp}
\end{figure*}

\noindent{\bf Synthetic Experiments\quad} We experimented with the following class of target functions. 
The dimensions of $x$ and $I$ are denoted by $d_x$ and $d_I$ (resp.). The target functions is of the form $y(x,I) := \langle x, h(I) \rangle$, $h$ is a three-layers fully-connected sigmoid neural network. 
See the appendix for further details and experiments with two additional classes of target functions.

\noindent{\em Varying the number of layers\quad}  To compare between the two models, we took the primary-networks $g$ and $q$ to be neural networks with two layers of dimensions $d_{\textnormal{in}} \to 10 \to 1$ and ReLU activation within the hidden layer. The input dimension of $g$ is $d_{\textnormal{in}}= d_x = 10^3$ and for $q$ is $d_{\textnormal{in}}=d_x + E = 10^3+10^4$. In addition, $f$ and $e$ are neural networks with $k=2,\dots,9$ layers, each layer of width $100$. {The output dimension of $e$ is $E=10^4$. In this case, the size of $q$ is $N_{\mathscr{q}} = 10^4+10 E + 10$, which is larger than the size of $g$, $N_{\mathscr{g}} = 10^4 + 10$. The sizes of $f$ and $e$ are $N_{\mathscr{f}} = 10^5 + 10^4 \cdot (k-2) + 10^2 \cdot N_{\mathscr{g}}$ and $N_{\mathscr{e}} = 10^5 + 10^4 \cdot (k-2) + 10^6$, which are both of order $10^6$. 
} 

We compared the MSE losses at the test time of the hypernetwork and the embedding method in approximating the target function $y$. The training was done over $30000$ samples $(x,I,y(x,I))$, with $x$ and $I$ taken from a standard normal distribution. The samples are divided into batches of size $200$ and the optimization is done using the SGD method with a learning rate $\mu = 0.01$.

As can be seen in Fig.~\ref{fig:comparison}(a), when the number of layers of $f$ and $e$ are $\geq 3$, the hypernetwork model outperforms the embedding method.
It is also evident that the approximation error of hypernetworks improves, as long as we increase the number of layers of $f$. This is in contrast to the case of the embedding method, the approximation error does not improve when increasing $e$'s number of layers. 
These results are very much in line with the theorems in Sec.~\ref{sec:main2}. As can be seen in Thms.~\ref{thm:2} and~\ref{thm:4}, when fixing the sizes of $g$ and $q$, while letting $f$ and $e$ be as large as we wish we can achieve a much better approximation with the hypernetwork model.  

\noindent{\em Varying the embedding dimension \quad} Next, we investigate the effect of varying the embedding dimension in both models to be $10^2i$, for $i \in [8]$. In this experiment, $d_x=d_I=100$, the primary-networks $g$ and $q$ are set to be ReLU networks with two layers of dimensions $d_{\textnormal{in}} \to 10 \to 1$. The input dimension of $g$ is $d_{\textnormal{in}}=d_x=100$ and for $q$ is $d_{\textnormal{in}}=d_x + 100 i$. The functions $f$ and $e$ are fully connected networks with three layers. The dimensions of $f$ are $10^2 \to 10^2 \to 10^2i \to N_{\mathscr{g}}$ and the dimensions of $e$ are $10^2 \to 10^2 \to 10^2 \to 10^3i$. The overall size of $g$ is $N_{\mathscr{g}} = 1010$ which is smaller than the size of $q$, $N_{\mathscr{q}} = 10^4(i+1)+10$. The size of $f$ is $N_{\mathscr{f}} = 10^4 + 10^4 i + 10^5i$ and the size of $e$ is $N_{\mathscr{e}} = 2\cdot 10^4+10^5 i$ which are both $\approx 10^5 i$.

As can be seen from Fig.~\ref{fig:comparison}(b), the performance of the embedding method does not improve when increasing the embedding dimension. Also, the overall performance is much worse than the performance of hypernetworks with deeper or wider $f$. 
This result verifies the claim in Thm.~\ref{thm:3} that by increasing the embedding dimension the embedding model is unable to achieve the same rate of approximation as the hypernetwork model. 


\begin{figure}
  \begin{minipage}[c]{0.60\textwidth}
  \begin{tabular}{@{}c@{}c@{}}
    \includegraphics[width=.5\linewidth]{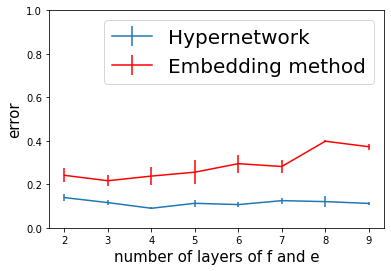}&
    \includegraphics[width=.5\linewidth]{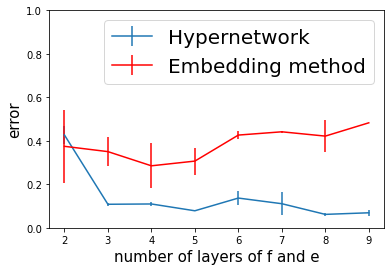}  \\ 
    (a) MNIST &(b) CIFAR10 
    \end{tabular}
 \end{minipage}%
 \hfill
  \begin{minipage}[c]{0.39\textwidth}
    \caption{{\bf Predicting image rotations.} (a-b) The error obtained by hypernetworks and the embedding method with a varying number of layers (x-axis).}
    \label{fig:comparison2}
  \end{minipage}
\end{figure}

\noindent{\bf Experiments on Real-world Datasets\quad}  To validate the prediction in Sec.~\ref{sec:main1}, we experimented with comparing the ability of hypernetworks and embedding methods of similar complexities in approximating the target function. We experimented with the MNIST~\cite{mnist} and CIFAR10 datasets~\cite{cifar} on two self-supervised learning tasks: predicting image rotations, described below and image colorization (Sec.~1.3 in the appendix). For image rotation, the target functions are $y(x,I)$, where $I$ is a sample from the dataset and $x$ is a rotated version of it with a random angle $\alpha$, which is a self-supervised task~\cite{NIPS2019_9697,gidaris2018unsupervised,8953870,10.5555/3327546.3327644}. The function $y$ is the closest value to $\alpha/360$ within $\{\alpha_i = 30i/360 \mid i = 0,\dots,11\}$. The inputs $x$ and $I$ are flattened and their dimensions are $d_x=d_I=h^2 c$, where $h,c$ are the height and number of channels of the images. 

\noindent{\em Varying the number of layers \quad} In this case, the primary-networks $g$ and $q$ are fully connected. 
The input dimension of $g$ is $d_{\textnormal{in}} = d_x$ and of $q$ is $d_{\textnormal{in}} = d_x + N_{\mathscr{g}} = 11h^2 c + 10$. The functions $f$ and $e$ are ReLU neural networks with a varying number of layers $k=2,\dots,9$. Their input dimensions are $d_I$ and each hidden layer in $e$ and $f$ is of dimension $d$. We took $d=50$ for MNIST and $d=100$ for CIFAR10. The output dimensions of $e$ and $f$ are $10h^2c+10$. In this case, the numbers of parameters and output dimensions of $e$ and $f$ are the same, since they share the same architecture. In addition, the number of parameters in $g$ is $N_{\mathscr{g}}=10h^2c+10$, while the number of parameters in $q$ is $N_{\mathscr{q}} = 10(11h^2c+10)+10 \approx 10N_{\mathscr{g}}$. 

We compare the classification errors over the test data. The networks are trained with the negative log loss for $10$ epochs using SGD with a learning rate of $\mu = 0.01$. We did not apply any regularization or normalization on the two models to minimize the influence of hyperparameters on the comparison.

As can be seen in Fig.~\ref{fig:comparison2}, the hypernetwork outperforms the embedding method by a wide margin. In contrast to the embedding method, the hypernetwork's performance improves when increasing its depth. For additional experiments on studying the effect of the embedding dimension, see Sec.~1.2 in the appendix. Finally, since the learning rate is the only hyperparameter in the optimization process, we conducted a sensitivity test, showing that the results are consistent when varying the learning rate (see Sec.~1.4 in the appendix).




\section{Conclusions}

We aim to understand the success of hypernetworks from a theoretical standpoint and compared the complexity of hypernetworks and embedding methods in terms of the number of trainable parameters. To achieve error $\leq \epsilon$ when modeling a function $y(x,I)$ using hypernetworks, the primary-network can be selected to be of a much smaller family of networks than the primary-network of an embedding method. This result manifests the ability of hypernetworks to effectively learn distinct functions for each $y_I$ separately. While our analysis points to the existence of modularity in hypernetworks, it does not mean that this modularity is achievable through SGD optimization. However, our experiments as well as the successful application of this technology in practice, specifically using a large $f$ and a small $g$, indicate that this is indeed the case, and the optimization methods are likely to converge to modular solutions. 



\clearpage

\section*{Broader Impact}
Understanding modular models, in which learning is replaced by meta-learning, can lead to an ease in which models are designed and combined at an abstract level. This way, deep learning technology can be made more accessible. Beyond that, this work falls under the category of basic research and does not seem to have particular societal or ethical implications.

\section*{Acknowledgements and Funding Disclosure}

This project has received funding from the European Research Council (ERC) under the European
Union’s Horizon 2020 research and innovation programme (grant ERC CoG 725974). The contribution of Tomer Galanti is part of Ph.D. thesis research conducted at Tel Aviv University.

\bibliography{hyper}
\bibliographystyle{plain}

\newpage

\section{Additional Experiments}




\subsection{Synthetic Experiments} 

As an additional experiment, we repeated the same experiment (i.e., varying the number of layers of $f$ and $e$ or the embedding dimension) in Sec.~\ref{sec:experiments} with two different classes of target functions (type II and III). The experiments with Type I functions are presented in the main text.

\noindent{\bf Type I\quad}  The target functions is of the form $y(x,I) := \langle x, h(I) \rangle$. Here, $h$ is a three-layers fully-connected neural network of dimensions $d_I \to 300 \to 300 \to 10^3$ and applies sigmoid activations within the two hidden layers and softmax on top of the network. The reason we apply softmax on top of the network is to restrict its output to be bounded. 

\noindent{\bf Type II\quad} The second group of functions consists of randomly initialized fully connected neural networks $y(x,I)$. The neural network has four layers of dimensions $(d_x+d_I)\to 100 \to 50 \to 50 \to 1$ and applies ELU activations. 

\noindent{\bf Type III\quad} The second type of target functions $y(x,I) := h(x \odot I)$ consists of fully-connected neural network applied on top of the element-wise multiplication between $x$ and $I$. The neural network consists of four layers of dimensions $d_I \to 100 \to 100 \to 50 \to 1$ and applies ELU activations. The third type of target functions is of the form $y(x,I) := \langle x, h(I) \rangle$. Here, $h$ is a three-layers fully-connected neural network of dimensions $d_I \to 300 \to 300 \to 1000$ and applies sigmoid activations within the two hidden layers and softmax on top of the network. The reason we apply softmax on top of the network is to restrict its output to be bounded. 

In all of the experiments, the weights of $y$ are set using the He uniform initialization~\cite{10.1109/ICCV.2015.123}.

In Fig.~\ref{fig:supp_comparison1}, we plot the results for varying the number of layers/embedding dimensions of hypernetworks and embedding methods. As can be seen, the performance of hypernetworks improves as a result of increasing the number of layers, despite the embedding method. On the other hand, for both models, increasing the embedding dimension seems ineffective.

\begin{figure*}[t]
    \centering
    \begin{tabular}{ccc}
    \includegraphics[width=.3182\linewidth]{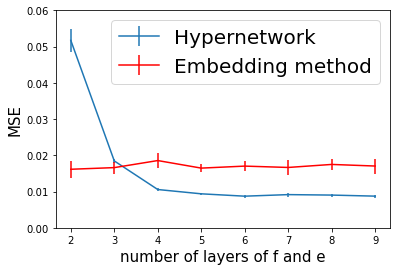}&
    \includegraphics[width=.3182\linewidth]{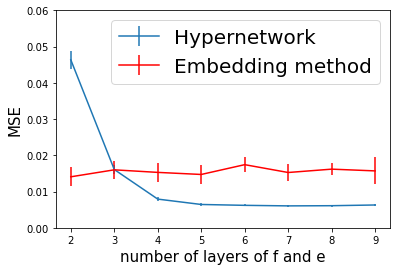}&
    \\
    (a)&(b)
    \\
    \includegraphics[width=.3182\linewidth]{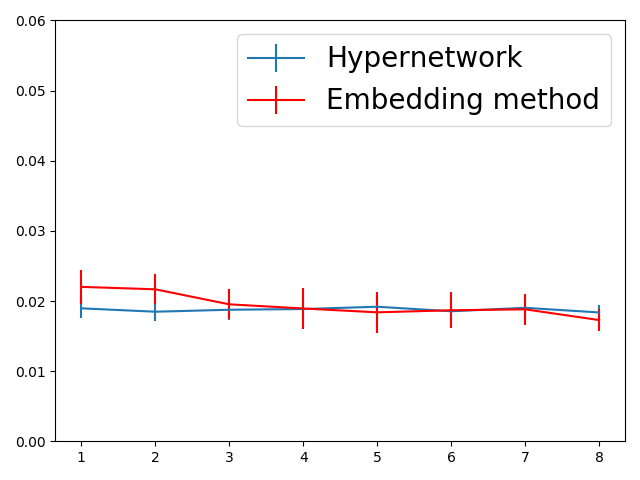}&
    \includegraphics[width=.3182\linewidth]{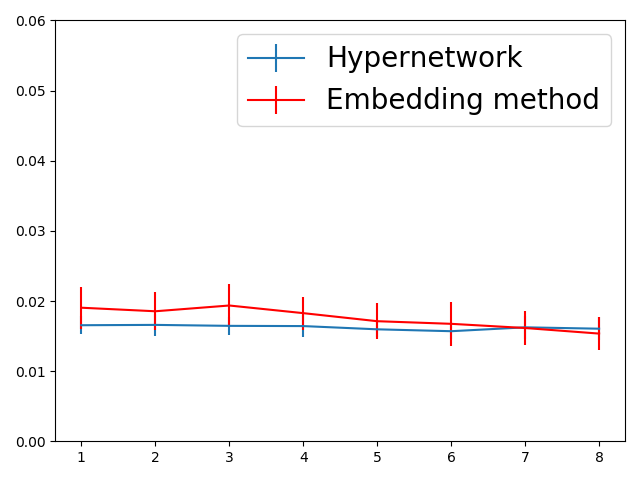}&
    \\
    (d)&(e)
    \\
    \end{tabular}
    \caption{(a-b) The error obtained by hypernetworks and the embedding method with varying number of layers (x-axis). The MSE (y-axis) is computed between the learned function and the target function at test time. The blue curve stands for the performance of the hypernetwork model and the red one for the neural embedding method. (a) Target functions of neural network type, 
    (b) Functions of the form $y(x,I) = h(x \odot I)$, where $h$ is a neural network.
     (d-e) Measuring the performance for the same three target functions when varying the size of the embedding layer to be 100/1000 (depending on the method) times the value on the x-axis. The error bars depict the variance across 100 repetitions of the experiment.}
    \label{fig:supp_comparison1}
\end{figure*}


\subsection{Predicting Image Rotations}

As an additional experiment on predicting image rotations, we studied the effect of the embedding dimension on the performance of the embedding method, we varied the embedding dimension $E_i=10^4i$ for $i\in [8]$. The primary-network $q$ has dimensions $d_{\textnormal{in}} \to 10 \to 12$ with $d_{\textnormal{in}} = d_I + E_i$ and the embedding network $e$ has architecture $d_x \to 100 \to E_i$. We compared the performance to a hypernetwork with $g$ of architecture $d_x \to 10 \to 12$ and $f$ of architecture $d_I \to 100 \to N_{\mathscr{g}}$. We note that $q$ is larger than $g$, the embedding dimension $E_i$ exceeds $N_{\mathscr{g}} = 30840$ for any $i>3$ and therefore, $e$ is of larger size than $f$ for $i>3$.

As can be seen in Fig.~\ref{fig:comparison3}, the hypernetwork outperforms the embedding method by a wide margin and the performance of the embedding method does not improve when increasing its embedding dimension.

\begin{figure*}[t]
    \centering
    \begin{tabular}{c@{~}c@{~}c@{~}c}
    \includegraphics[width=.34832\linewidth]{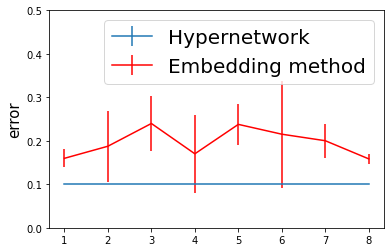}& 
    \includegraphics[width=.34832\linewidth]{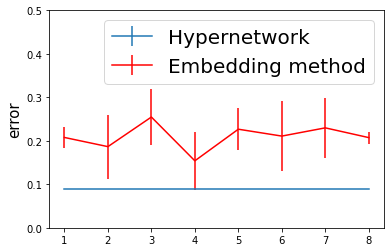} \\
    (a) MNIST & (b) CIFAR10
    \end{tabular}
    \caption{{\bf Predicting image rotations.}  varying the embedding dimension of the embedding method to be $10^4$ times the value of the x-axis, compared to the results of hypernetworks. The error bars depict the variance across 100 repetitions of the experiment. }
    \label{fig:comparison3}
\end{figure*}

\subsection{Image Colorization} 

The second type of target functions are $y(x,I)$, where $I$ is a sample gray-scaled version of an image $\hat{I}$ from the dataset and $x=(i_1,i_2)$ is a tuple of coordinates, specifying a certain pixel in the image $I$. The function $y(x,I)$ returns the RGB values of $\hat{I}$ in the pixel $x = (i_1,i_2)$ (normalized between $[-1,1]$). For this self-supervised task we employ CIFAR10 dataset, since the MNIST has grayscale images.  

For the purpose of comparison, we considered the following setting. The inputs of the networks are $x' = (i_1,i_2) \| (i^k_1 + i_2,i^k_2 + i_1,i^k_1 - i_2,i^k_2 - i_1)^{9}_{k=0}$ and a flattened version of the gray-scaled image $I$ of dimensions $d_{x'}=42$ and $d_I=1024$. The functions $f$ and $e$ are fully connected neural networks of the same architecture with a varying number of layers $k=2,\dots,7$. Their input dimension is $d_I$, each hidden layer is of dimension $100$ and their output dimensions are $450$. We took primary networks $g$ and $q$ to be fully connected neural networks with two layers $d_{\textnormal{in}} \to 10 \to 3$ and ELU activations within their hidden layers. For the hypernetwork case, we have: $d_{\textnormal{in}} = 42$ and for the embedding method $d_{\textnormal{in}}=42+450 = 492$, since the input of $q$ is a concatenation of $x'$ (of dimension $42$) and $e(I)$ which is of dimension $450$. 

The overall number of trainable parameters in $e$ and $f$ is the same, as they share the same architecture. The number of trainable parameters in $q$ is $492 \cdot 10 + 10 \cdot 3 = 4950$ and in $g$ is $42 \cdot 10 + 10 \cdot 3 = 450$. Therefore, the embedding method is provided with a larger number of trainable parameters as $q$ is $10$ times larger than $g$. The comparison is depicted in Fig.~\ref{fig:supp_comparison2}. As can be seen, the results of hypernetworks outperform the embedding method by a large margin, and the results improve when increasing the number of layers. 

\begin{figure*}[ht]
    \centering
    \begin{tabular}{ccc}
    \includegraphics[width=.3132\linewidth]{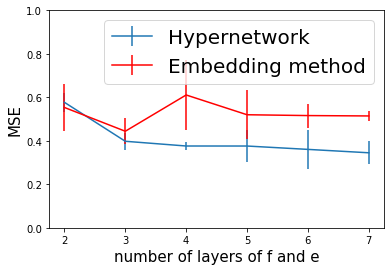}\\
    \end{tabular}
    \caption{{\bf Colorization.} The error obtained by hypernetworks and the embedding method with varying number of layers (x-axis). The error rate (y-axis) is computed between the learned function and the target function at test time. The blue curve stands for the performance of the hypernetwork model and the red one for the neural embedding method.}
    \label{fig:supp_comparison2}
\end{figure*}

\subsection{Sensitivity Experiment}

\begin{figure*}[ht]
    \centering
    \begin{tabular}{cc}
    \includegraphics[width=.3832\linewidth]{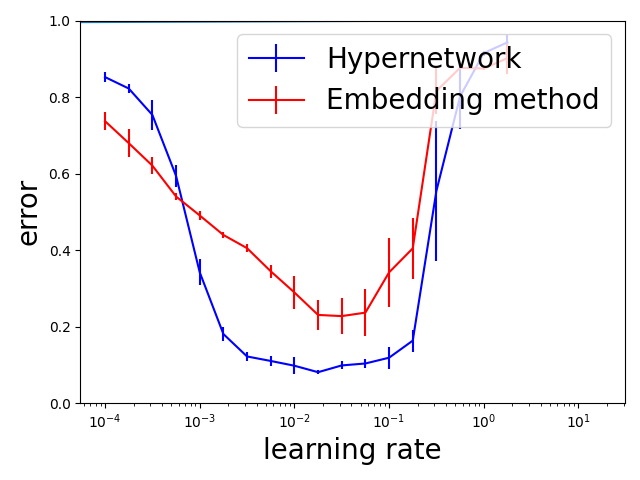} & 
    \includegraphics[width=.3832\linewidth]{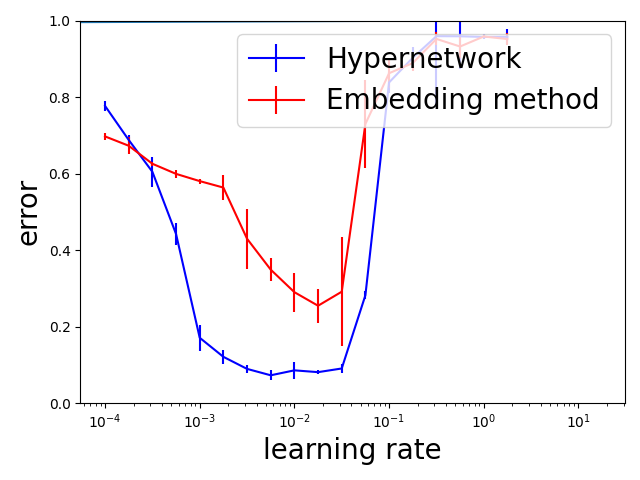} \\
    (a) & (b)
    \end{tabular}
    \caption{{\bf Comparing the performance of a hypernetwork and the embedding method when varying the learning rate.} The x-axis stands for the value of the learning rate and the y-axis stands for the averaged accuracy rate at test time. (a) Results on MNIST and (b) Results on CIFAR10. }
    \label{fig:sensitivity}
\end{figure*}

In the rotations prediction experiment in Sec.~\ref{sec:experiments}, we did not apply any regularization or normalization on the two models to minimize the number of hyperparameters. Therefore, the only hyperparameter we used during the experiment is the learning rate. We conducted a hyperparameter sensitivity test for the learning rate. We compared the two models in the configuration of Sec.~\ref{sec:experiments} when fixing the depths of $f$ and $e$ to be $4$ and varying the learning rate. As can be seen in Fig.~\ref{fig:sensitivity}, the hypernetwork outperforms the baseline for every learning rate in which the networks provide non-trivial error rates.

\subsection{Validating Assumption~2}

To empirically justify Assumption~2, we trained shallow neural networks on MNIST and Fashion MNIST classification with a varying number of hidden neurons. The optimization was done using the MSE loss, where the labels are cast into one-hot encoding. The network is trained using Adadelta with a learning rate of $\mu=1.0$ and batch size $64$ for $2$ epochs. As can be seen in Fig.~\ref{fig:assmp2}, the MSE loss strictly decreases when increasing the number of hidden neurons. This is true for a variety of activation functions.

\begin{figure*}[t]
    \centering
    \begin{tabular}{ccc}
    \includegraphics[width=.3532\linewidth]{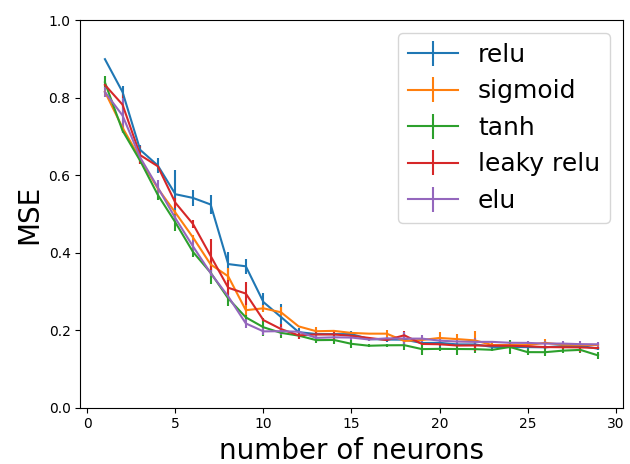}&
    \includegraphics[width=.3532\linewidth]{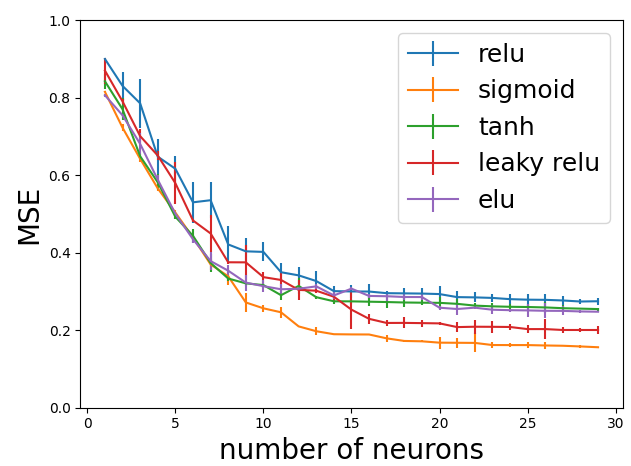}\\
    (a) MNIST & (b) Fashion MNIST
    \end{tabular}
    \caption{{\bf Validating Assumption~2.} The MSE loss at test time strictly decreases when increasing the number of hidden neurons.} 
    \label{fig:assmp2}
\end{figure*}

\newpage

\section{Preliminaries}

\subsection{Identifiability}\label{sec:identify}

Neural network identifiability is the property in which the input-output map realized by a feed-forward neural network with respect to a given activation function uniquely specifies the network architecture, weights, and biases of the neural network up to neural network isomorphisms (i.e., re-ordering the neurons in the hidden layers). Several publications investigate this property. For instance, \cite{Albertini93uniquenessof,Sussmann1992UniquenessOT} show that shallow neural networks are identifiable. The main result of~\cite{fefferman} considers feed-forward neural networks with the $\tanh$ activation functions are shows that these are identifiable when the networks satisfy certain ``genericity assumptions``. In~\cite{nn-id-2019} it is shown that for a wide class of activation functions, one can find an arbitrarily close function that induces identifiability (see Lem.~\ref{lem:vlacic}). Throughout the proofs of our Thm.~\ref{thm:generallower}, we make use of this last result in order to construct a robust approximator for the target functions of interest.  

We recall the terminology of identifiability from~\cite{fefferman,nn-id-2019}. 

\begin{definition}[Identifiability]
A class $\mathscr{f} = \{f(\cdot;\theta_f):A \to B \;\vert\; \theta_f \in \Theta_{\mathscr{f}}\}$ is identifiable up to (invariance) continuous functions $\Pi = \{\pi:\Theta_{\mathscr{f}} \to \Theta_{\mathscr{f}}\}$, if
\begin{equation}
f(\cdot;\theta_f) \equiv_A f(\cdot;\theta'_f) \iff \exists \pi \in \Pi  \textnormal{ s.t } \theta'_f = \pi(\theta_f)
\end{equation}
where the equivalence $\equiv_A$ is equality for all $x \in A$.
\end{definition}

A special case of identifiability is identifiability up to isomorphisms. Informally, we say that two neural networks are isomorphic if they share the same architecture and are equivalent up to permuting the neurons in each layer (excluding the input and output layers). 

\begin{definition}[Isomorphism]\label{def:isom}
Let $\mathscr{f}$ be a class of neural networks. Two neural networks $f(x;[\bfem{W},\bfem{b}])$ and $f(x;[\bfem{V},\bfem{d}])$ of the same class $\mathscr{f}$ are isomorphic if there are permutations $\{\gamma_i:[h_i] \to [h_i]\}^{k+1}_{i=1}$, such that,
\begin{enumerate}
\item $\gamma_1$ and $\gamma_{k+1}$ are the identity permutations.
\item For all $i \in [k]$, $j \in [h_{i+1}]$ and $l \in [h_{i}]$, we have: $V^i_{j,l} = W^i_{\gamma_{i+1}(j),\gamma_{i}(l)} \textnormal{ and } d^i_j = b^i_{\gamma_{i+1}(j)}$.
\end{enumerate} 
An isomorphism $\pi$ is specified by permutation functions $\gamma_1,\dots,\gamma_{k+1}$ that satisfy conditions (1) and (2). For a given neural network $f(x;[\bfem{W},\bfem{b}])$ and isomorphism $\pi$, we denote by $\pi \circ [\bfem{W},\bfem{b}]$ the parameters of a neural network produced by the isomorphism $\pi$.
\end{definition}

As noted by~\cite{fefferman,nn-id-2019}, for a given class of neural networks, $\mathscr{f}$, there are several ways to construct pairs of non-isomorphic neural networks that are equivalent as functions. 

In the first approach, suppose that we have a neural network with depth $k \geq 2$,
and there exist indices $i, j_1, j_2$ with $1 \leq i \leq k-1$ and $1 \leq j_1 < j_2 \leq h_{i+1}$, such that, $b^i_{j_1} = b^i_{j_2}$ and $W^{i}_{j_1,t} = W^{i}_{j_2,t}$ for all $t \in [h_{i}]$. Then, if we construct a second neural network that shares the same weights and biases, except replacing $W^{i+1}_{1,j_1}$ and $W^{i+1}_{1,j_2}$ with a pair $\tilde{W}^{i+1}_{1,j_1}$ and $\tilde{W}^{i+1}_{1,j_2}$, such that, $\tilde{W}^{i+1}_{1,j_1}+\tilde{W}^{i+1}_{1,j_2} = W^{i+1}_{1,j_1}+W^{i+1}_{1,j_2}$. Then, the two neural networks are equivalent, regardless of the activation function. The $j_1$ and $j_2$ neurons in the $i$'th layer are called clones and are defined formally in the following manner.

\begin{definition}[No-clones condition] Let class of neural networks $\mathscr{f}$. Let $f(x;[\bfem{W},\bfem{b}]) \in \mathscr{f}$ be a neural network. We say that $f$ has clone neurons if there are: $i \in [k]$, $j_1\neq j_2 \in [h_{i+1}]$, such that:
\begin{equation}
(b^i_{j_1},W^i_{j_1,1},\dots,W^i_{j_1,h_{i}}) = (b^i_{j_2},W^i_{j_2,1},\dots,W^i_{j_2,h_{i}})
\end{equation}
If $f$ does not have a clone, we say that $f$ satisfies the no-clones condition.  
\end{definition}

A different setting in which uniqueness up to isomorphism is broken, results when taking a neural network that has a ``zero'' neuron. Suppose that we have a neural network with depth $k \geq 2$,
and there exist indices $i, j$ with $1 \leq i \leq k-1$ and $1 \leq j \leq h_{i+1}$, such that, $W^{i}_{j,t} = 0$ for all $t \in [h_{i}]$ or $W^{i+1}_{t,j} = 0$ for all $t \in [h_{i+2}]$. In the first case, one can replace any $W^{i+1}_{1,j}$ with any number $\tilde{W}^{i+1}_{1,j}$ if $\sigma(b_{i,j}) = 0$ to get a non-isomorphic equivalent neural network. In the other case, one can replace $W^{i}_{j,1}$ with any number $\tilde{W}^{i+1}_{j,1}$ to get non-isomorphic equivalent neural network. 

\begin{definition}[Minimality] Let $f(x;[\bfem{W},\bfem{b}])$ be a neural network. We say that $f$ is minimal, if for all $i \in [k]$, each matrix $W^i$ has no identically zero row or an identically zero column.
\end{definition}

A normal neural network satisfies both minimality and the no-clones condition.
\begin{definition}[Normal neural network]\label{def:normal}
Let $f(x;[\bfem{W},\bfem{b}])$ be a neural network. We say that $f$ is normal, if it has no-clones and is minimal. The set of normal neural networks within $\mathscr{f}$ is denoted by $\mathscr{f}_n$.
\end{definition}

An interesting question regarding identifiability is whether a given activation $\sigma:\mathbb{R} \to \mathbb{R}$ function implies the identifiability property of any class of normal neural networks $\mathscr{f}_n$ with the given activation function are equivalent up to isomorphisms. An activation function of this kind will be called {\em identifiability inducing}.
It has been shown by~\cite{fefferman} that the $\tanh$ is identifiability inducing up to additional restrictions on the weights. In~\cite{Sussmann1992UniquenessOT} and in~\cite{Albertini93uniquenessof} they show that shallow neural networks are identifiable.

\begin{definition}[Identifiability inducing activation]\label{def:induc} Let $\sigma: \mathbb{R} \to \mathbb{R}$ be an activation function. We say that $\sigma$ is identifiability inducing if for any class of neural networks $\mathscr{f}$ with $\sigma$ activations, we have: $f(\cdot;\theta_1) = f(\cdot;\theta_2) \in \mathscr{f}_n$ if and only if they are isomorphic. 
\end{definition}

The following theorem by~\cite{nn-id-2019} shows that any piece-wise $C^1(\mathbb{R})$ activation function $\sigma$ with $\sigma' \in BV(\mathbb{R})$ can be approximated by an identifiability inducing activation function $\rho$. 

\begin{lemma}[\cite{nn-id-2019}]\label{lem:vlacic}  Let $\sigma:\mathbb{R} \to \mathbb{R}$ be a piece-wise $C^1(\mathbb{R})$ with $\sigma' \in BV (\mathbb{R})$ and let $\epsilon > 0$. Then, there exists a meromorphic function $\rho : D \to \mathbb{C}$, $\mathbb{R} \subset D$, $\rho(\mathbb{R}) \subset \mathbb{R}$, such that, $\| \sigma - \rho\|_{\infty} < \epsilon$ and $\rho$ is identifiability inducing.
\end{lemma}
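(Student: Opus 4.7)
My plan is to build $\rho$ in two stages: first a uniform meromorphic approximation $\rho_0$ of $\sigma$ on $\mathbb{R}$ with $\|\rho_0 - \sigma\|_\infty < \epsilon/2$, then a small meromorphic perturbation $\eta\psi$ whose singularity structure forces identifiability, with $\|\rho_0 + \eta\psi - \sigma\|_\infty < \epsilon$. The domain $D$ can be taken to be a horizontal strip $\{z \in \mathbb{C} : |\mathrm{Im}\, z| < \delta\}$ minus the resulting discrete pole set, for some small $\delta > 0$.

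\textbf{Step 1 (meromorphic approximation).} The hypothesis $\sigma' \in BV(\mathbb{R}) \subset L^1(\mathbb{R})$ implies that $\sigma$ is bounded, uniformly continuous, and has limits $\sigma(\pm\infty) \in \mathbb{R}$. I would split $\sigma = \sigma_{\mathrm{tail}} + \sigma_{\mathrm{cpt}}$, where $\sigma_{\mathrm{tail}}(x) = \tfrac12(\sigma(+\infty)+\sigma(-\infty)) + \tfrac12(\sigma(+\infty)-\sigma(-\infty))\tanh(x)$ is meromorphic on $\mathbb{C}$ and real on $\mathbb{R}$, and $\sigma_{\mathrm{cpt}}$ is a remainder that decays to $0$ at $\pm\infty$. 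Mollify $\sigma_{\mathrm{cpt}}$ with a narrow Gaussian, truncate the Fourier transform of the result to a compact frequency band, and invoke Paley--Wiener to obtain an entire function of exponential type within $\epsilon/4$ of $\sigma_{\mathrm{cpt}}$ uniformly. Summing the two pieces yields $\rho_0$ meromorphic on a strip $D \supset \mathbb{R}$ with $\rho_0(\mathbb{R}) \subset \mathbb{R}$ and $\|\rho_0 - \sigma\|_\infty < \epsilon/2$.

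\textbf{Step 2 (perturbation for identifiability).} Choose an auxiliary meromorphic $\psi$ on $D$ with conjugate pairs of simple poles $\{p_k, \bar p_k\} \subset D \setminus \mathbb{R}$ whose locations and residues are algebraically independent over $\mathbb{Q}$, normalized so that $\|\psi\|_\infty \le 1$ on $\mathbb{R}$. Set $\rho = \rho_0 + \eta\psi$ with $\eta \in (0, \epsilon/2)$. To verify identifiability, suppose two normal $\rho$-networks satisfy $f(\cdot;\theta_1) \equiv f(\cdot;\theta_2)$ on $\mathbb{R}$; analytically continue both to $D$ and match pole sets. In depth one, $\sum_i c_i \rho(a_i z + b_i)$ has simple poles exactly at $(p_k - b_i)/a_i$, and algebraic independence of the $p_k$ rules out coincidences between distinct $(a_i, b_i)$, recovering the multiset $\{(a_i, b_i)\}$ up to permutation; matching residues then recovers the $c_i$. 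For depth $\geq 2$, induct on depth: after fixing the outer linear combination using its pole structure, invert $\rho$ on a suitably chosen branch to expose the hidden pre-activations and apply the shallow case inductively to each hidden neuron.

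\textbf{Main obstacle.} Step~2 is the crux. For any fixed architecture, non-isomorphic equivalence of two normal networks imposes a nontrivial polynomial relation on the pole locations and residues of $\psi$, which algebraic independence of the $\{p_k\}$ rules out. The subtlety is verifying this \emph{uniformly across all architectures and depths} with a single $\rho$: at each added layer, composition of meromorphic functions can create essential singularities at accumulation points of poles, which may mask the clean pole signature that the induction relies on. I would need to arrange the locations of the $p_k$ so that the pole signature of each hidden neuron survives composition with the outer layers, and then argue that the set of ``bad'' $\eta$ across all architectures is a countable union of proper algebraic subsets of $(0, \epsilon/2)$ and hence negligible. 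Making this genericity argument rigorous for every depth simultaneously is where the bulk of the technical work lies.
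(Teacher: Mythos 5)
This lemma is not proved in the paper at all: it is imported verbatim from~\cite{nn-id-2019}, where it is the main theorem of a long and technically heavy paper. So the relevant comparison is between your sketch and that external proof, and your sketch does not close the gap — you say so yourself in the ``Main obstacle'' paragraph, and that obstacle is precisely the content of the cited result. Concretely, the argument you defer (that a single meromorphic $\rho$ forces identifiability \emph{uniformly over all depths and architectures}, controlling how pole signatures survive composition and analytic continuation through several layers) is not a routine genericity step: network equivalence $f(\cdot;\theta_1)\equiv f(\cdot;\theta_2)$ is a functional identity, and you have not shown it reduces, for each fixed architecture, to a nontrivial polynomial (or even analytic) condition on the pole locations and residues of $\psi$, so the claim that the ``bad'' set of $\eta$ is a countable union of proper algebraic subsets is unsubstantiated. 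Your Step~2 inversion of $\rho$ ``on a suitably chosen branch to expose the hidden pre-activations'' is exactly where essential singularities and accumulation of poles can destroy the induction, and in the actual proof of~\cite{nn-id-2019} (building on Fefferman's analysis for $\tanh$) the singularity structure of $\rho$ is engineered very specifically rather than chosen generically.

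A second, more structural issue: identifiability in the sense of Def.~6 of the paper means equivalence of normal networks \emph{only up to permutation isomorphisms}, so $\rho$ must in particular have no affine self-symmetries — no relations of the form $\rho(ax+b)=c\,\rho(x)+d$ beyond the trivial one — since any such relation produces equivalent, non-isomorphic normal networks (the sign-flip symmetry of $\tanh$ is the standard example, and your $\sigma_{\mathrm{tail}}$ is built from $\tanh$). Ruling out coincidences of poles \emph{between distinct neurons} does not by itself rule out these self-symmetries of $\rho$, so even the depth-one case of your Step~2 needs an additional argument. Your Step~1 (meromorphic uniform approximation via a $\tanh$ tail plus a Paley--Wiener entire part) is plausible and broadly in the spirit of the construction in~\cite{nn-id-2019}, but as it stands the proposal is an outline of the theorem's statement-level strategy rather than a proof, with the decisive steps missing.
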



\subsection{Multi-valued Functions}

Throughout the proofs, we will make use of the notion of multi-valued functions and their continuity. A multi-valued function is a mapping $F:A \to \mathcal{P}(B)$ from a set $A$ to the power set $\mathcal{P}(B)$ of some set $B$. To define the continuity of $F$, we recall the Hausdorff distance~\cite{GlossarWiki:Hausdorff:1914,RockWets98} between sets. Let $d_B$ be a distance function over a set $B$, the Hausdorff distance between two subsets $E_1,E_2$ of $B$ is defined as follows:
\begin{equation}
\begin{aligned}
d_{\mathcal{H}}(E_1,E_2) := \max \Big\{\sup_{b_1 \in E_1} \inf_{b_2 \in E_2} d_B(b_1,b_2) , \sup_{b_2 \in E_2} \inf_{b_1 \in E_1} d_B(b_1,b_2) \Big\}
\end{aligned}
\end{equation}
In general, the Hausdorff distance serves as an extended pseudo-metric, i.e., satisfies $d_{\mathcal{H}}(E,E)=0$ for all $E$, is symmetric and satisfies the triangle inequality, however, it can attain infinite values and there might be $E_1\neq E_2$, such that, $d_{\mathcal{H}}(E_1,E_2)=0$. When considering the space $\mathcal{C}(B)$ of non-empty compact subsets of $B$, the Hausdorff distance becomes a metric. 

\begin{definition}[Continuous multi-valued functions] Let metric spaces $(A,d_A)$ and $(B,d_B)$ and multi-valued function $F:A \to \mathcal{C}(B)$. Then, we define:
\begin{enumerate}[leftmargin=*]
\item {\bf Convergence:} we denote $E = \lim_{a \to a_0} F(a)$, if $E$ is a compact subset of $B$ and it satisfies:
\begin{equation}
\begin{aligned}
\lim_{a \to a_0} d_{\mathcal{H}}(F(a),E) = 0
\end{aligned}
\end{equation}
\item {\bf Continuity:} we say that $F$ is continuous in $a_0$, if  $\lim_{a \to a_0} F(a) = F(a_0)$. 
\end{enumerate}
\end{definition}

\subsection{Lemmas}

In this section, we provide several lemmas that will be useful throughout the proofs of the main results.

Let $[\bfem{W}^1,\bfem{b}^1]$ and $[\bfem{W}^2,\bfem{b}^2]$ be two parameterizations. We denote by $[\bfem{W}^1,\bfem{b}^1] - [\bfem{W}^2,\bfem{b}^2] = [\bfem{W}^1-\bfem{W}^2,\bfem{b}^1-\bfem{b}^2]$ the element-wise subtraction between the two parameterizations. In addition, we define the $L_2$-norm of $[\bfem{W},\bfem{b}]$ to be:
\begin{equation}
\big\|[\bfem{W},\bfem{b}]\big\|_2 := \|\textnormal{vec}([\bfem{W},\bfem{b}])\|_2 := \sqrt{\sum^{k}_{i=1}(\| W^i\|^2_2 + \|b^i\|^2_2)} 
\end{equation}

\begin{lemma}\label{lem:isoBasics}
Let $f(x;[\bfem{W}^1,\bfem{b}^1])$ and $f(x;[\bfem{W}^2,\bfem{b}^2])$ be two neural networks. Then, for a given isomorphism $\pi$, we have:
\begin{equation}
\pi \circ [\bfem{W}^1,\bfem{b}^1] - \pi \circ [\bfem{W}^2,\bfem{b}^2] = \pi\circ [\bfem{W}^1-\bfem{W}^2,\bfem{b}^1-\bfem{b}^2]
\end{equation}
and
\begin{equation}
\big\| \pi \circ [\bfem{W},\bfem{b}] \big\|_2 = \big\|[\bfem{W},\bfem{b}]\big\|_2
\end{equation}
\end{lemma}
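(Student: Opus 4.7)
The plan is to unpack Definition~\ref{def:isom} and observe that an isomorphism $\pi$ acts on a parameterization $[\bfem{W},\bfem{b}]$ as nothing more than a coordinate-wise re-indexing governed by the layer permutations $\gamma_1,\ldots,\gamma_{k+1}$ (with $\gamma_1$ and $\gamma_{k+1}$ the identity). Both identities should then follow from two elementary facts: (a) a permutation of entries commutes with coordinate-wise linear operations, and (b) a permutation of entries preserves the Euclidean norm.

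For the first identity, I would verify the equality layer by layer and entry by entry. Writing $(\pi\circ W)^i_{j,l}=W^i_{\gamma_{i+1}(j),\gamma_i(l)}$ and $(\pi\circ b)^i_j=b^i_{\gamma_{i+1}(j)}$ as given by Definition~\ref{def:isom}, the $(j,l)$-entry of the $i$-th layer of $\pi\circ[\bfem{W}^1,\bfem{b}^1]-\pi\circ[\bfem{W}^2,\bfem{b}^2]$ equals $W^{1,i}_{\gamma_{i+1}(j),\gamma_i(l)}-W^{2,i}_{\gamma_{i+1}(j),\gamma_i(l)}=(W^1-W^2)^i_{\gamma_{i+1}(j),\gamma_i(l)}$, which is exactly the $(j,l)$-entry of the $i$-th layer of $\pi\circ[\bfem{W}^1-\bfem{W}^2,\bfem{b}^1-\bfem{b}^2]$; the bias case is identical. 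This yields entry-wise equality throughout, and hence the first claim.

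For the norm identity, the key observation is that since $\gamma_i$ and $\gamma_{i+1}$ are bijections on $[h_i]$ and $[h_{i+1}]$ respectively, the map $(j,l)\mapsto(\gamma_{i+1}(j),\gamma_i(l))$ is a bijection on $[h_{i+1}]\times[h_i]$. Hence $\|\pi\circ W^i\|_2^2=\sum_{j,l}(W^i_{\gamma_{i+1}(j),\gamma_i(l)})^2$ is a reordering of $\sum_{j',l'}(W^i_{j',l'})^2=\|W^i\|_2^2$, and the same argument gives $\|\pi\circ b^i\|_2=\|b^i\|_2$. Summing the squared layer norms and taking the square root recovers $\|\pi\circ[\bfem{W},\bfem{b}]\|_2=\|[\bfem{W},\bfem{b}]\|_2$.

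Since the statement reduces to two simple facts about permutations of real-valued tensors, I do not expect any substantive obstacle. The only care needed is bookkeeping: keeping the layer index $i$ separate from the superscripts labelling the two parameterizations, and remembering that $\gamma_1$ and $\gamma_{k+1}$ are the identity so the input and output dimensions are never touched and the decomposition of each $W^i\in\mathbb{R}^{h_{i+1}\times h_i}$ used in defining $\|\cdot\|_2$ is preserved throughout.
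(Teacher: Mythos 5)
Your proposal is correct and follows the same route as the paper, which simply notes that both identities are immediate from the definition of isomorphisms as coordinate re-indexings; your entry-wise verification and the bijection argument for norm preservation are exactly the details being left implicit there.
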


\proof{Follows immediately from the definition of isomorphisms.}

\begin{lemma}\label{lem:boundnet}
Let $\sigma:\mathbb{R} \to \mathbb{R}$ be a $L$-Lipschitz continuous activation function, such that, $\sigma(0)=0$. Let $f(\cdot;[\bfem{W},0]):\mathbb{R}^m \to \mathbb{R}$ be a neural network with zero biases. Then, for any $x \in \mathbb{R}^{m}$, we have:
\begin{equation}
\|f(x;[\bfem{W},0])\|_{1} \leq L^{k-1}\cdot \|x\|_1 \prod^{k}_{i=1}  \|W^i\|_{1}
\end{equation}
\end{lemma}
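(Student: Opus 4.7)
The plan is a straightforward induction on the depth $k$, driven by two elementary estimates. First I would introduce notation for the intermediate activations: set $z^0 := x$ and, for $i = 1, \dots, k-1$, let $z^i := \sigma(W^i z^{i-1})$ where $\sigma$ is applied coordinate-wise; since all biases are zero, the network output is simply $f(x;[\bfem{W},0]) = W^k z^{k-1}$.

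Two basic inequalities do all the work. For any matrix $W$ and vector $v$, the induced $L_1$ operator norm gives $\|Wv\|_1 \leq \|W\|_1 \|v\|_1$. And because $\sigma(0) = 0$ and $\sigma$ is $L$-Lipschitz, $|\sigma(t)| = |\sigma(t) - \sigma(0)| \leq L|t|$ for every $t \in \mathbb{R}$, so applying $\sigma$ entry-wise yields $\|\sigma(v)\|_1 \leq L \|v\|_1$ for every vector $v$. Combining the two estimates produces the one-step recursion
\begin{equation}
\|z^i\|_1 \leq L \|W^i\|_1 \|z^{i-1}\|_1, \qquad i = 1, \dots, k-1.
\end{equation}

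Iterating this recursion from $i=1$ up to $i=k-1$ gives $\|z^{k-1}\|_1 \leq L^{k-1} \|x\|_1 \prod_{i=1}^{k-1} \|W^i\|_1$. Applying the operator-norm bound one more time to the linear output layer $W^k z^{k-1}$ — which does not contribute an extra factor of $L$, since $\sigma$ is not applied after it — yields exactly the claimed bound $L^{k-1} \|x\|_1 \prod_{i=1}^{k} \|W^i\|_1$. There is no real obstacle here; the only item that requires care is the book-keeping that makes the exponent of $L$ equal to $k-1$ rather than $k$, which reflects the fact that only the $k-1$ hidden layers pass their pre-activations through $\sigma$.
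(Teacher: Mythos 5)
Your proof is correct and follows essentially the same route as the paper: both arguments combine the induced-$L_1$ operator bound with the estimate $\|\sigma(v)-\sigma(0)\|_1 \leq L\|v\|_1$ and iterate through the layers (you unroll forward from the input, the paper peels backward from the output, which is only a cosmetic difference). The bookkeeping giving $L^{k-1}$ rather than $L^k$ is handled correctly.
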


\begin{proof}
Let $z = W^{k-1} \cdot \sigma(\dots \sigma(W^1 x))$. We have:
\begin{equation}
\begin{aligned}
\|f(x;[\bfem{W},0])\|_{1} &\leq \|W^{k} \cdot \sigma(z)\|_{1} \\
&\leq \|W^{k} \cdot \sigma(z)\|_{1} \\
&= \|W^{k}\|_1 \cdot \| \sigma(z)- \sigma(0)\|_{1} \\
&\leq \|W^{k}\|_1 \cdot L \cdot \| z\|_{1} \\
\end{aligned}
\end{equation}
and by induction we have the desired.
\end{proof}

\begin{lemma}\label{lem:boundlip}
Let $\sigma:\mathbb{R} \to \mathbb{R}$ be a $L$-Lipschitz continuous activation function, such that, $\sigma(0)=0$. Let $f(\cdot;[\bfem{W},\bfem{b}])$ be a neural network. Then, the Lipschitzness of $f(\cdot;[\bfem{W},\bfem{b}])$ is given by:
\begin{equation}
\textnormal{Lip}(f(\cdot;[\bfem{W},\bfem{b}])) \leq L^{k-1} \cdot \prod^{k}_{i=1}  \|W^i\|_{1}
\end{equation}
\end{lemma}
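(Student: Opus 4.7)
The plan is to reduce the statement to the standard composition-of-Lipschitz-functions argument, applied layer by layer, using the fact that the coordinate-wise application of the $L$-Lipschitz scalar function $\sigma$ is itself $L$-Lipschitz in the $L_1$ norm. Note that the hypothesis $\sigma(0)=0$ is not actually needed for this statement (it was needed for Lem.~\ref{lem:boundnet}, which bounds $\|f(x)\|_1$, but here we only bound differences); I would simply carry out the proof without using $\sigma(0)=0$.

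First, I would introduce the intermediate layer activations $z_0(x) := x$ and, for $i = 1,\dots,k-1$,
\begin{equation}
z_i(x) := \sigma\bigl(W^i z_{i-1}(x) + b^i\bigr),
\end{equation}
so that $f(x;[\bfem{W},\bfem{b}]) = W^k z_{k-1}(x)$. I would then show the key one-step inequality: for any two inputs $x, x'$,
\begin{equation}
\|z_i(x) - z_i(x')\|_1 \leq L\, \|W^i\|_1 \, \|z_{i-1}(x) - z_{i-1}(x')\|_1.
\end{equation}
This uses two facts: (i) the biases cancel out, so $z_i(x) - z_i(x') = \sigma(W^i z_{i-1}(x) + b^i) - \sigma(W^i z_{i-1}(x') + b^i)$, and (ii) since $\sigma$ is $L$-Lipschitz scalar-wise, one has $\|\sigma(a) - \sigma(a')\|_1 = \sum_j |\sigma(a_j) - \sigma(a'_j)| \leq L \sum_j |a_j - a'_j| = L\,\|a - a'\|_1$, combined with the defining property of the induced $L_1$ norm $\|W^i u\|_1 \leq \|W^i\|_1 \|u\|_1$.

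An easy induction on $i$ then gives
\begin{equation}
\|z_{k-1}(x) - z_{k-1}(x')\|_1 \leq L^{k-1} \prod_{i=1}^{k-1} \|W^i\|_1 \cdot \|x - x'\|_1,
\end{equation}
and applying the induced-norm inequality once more to the final affine step $W^k z_{k-1}$ yields
\begin{equation}
\|f(x;[\bfem{W},\bfem{b}]) - f(x';[\bfem{W},\bfem{b}])\|_1 \leq L^{k-1} \prod_{i=1}^{k} \|W^i\|_1 \cdot \|x - x'\|_1,
\end{equation}
which is exactly the claimed Lipschitz bound. There is no real obstacle here; the only point worth articulating carefully is that the coordinate-wise Lipschitzness of $\sigma$ transfers to an $L_1$-to-$L_1$ Lipschitz bound with the same constant $L$, which is why the $L_1$ induced matrix norm is the right quantity to use.
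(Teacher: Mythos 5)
Your proof is correct and follows essentially the same route as the paper's: peel off one layer at a time, use that the biases cancel, that coordinate-wise $L$-Lipschitzness of $\sigma$ gives an $L_1$-to-$L_1$ bound, and that $\|W u\|_1 \leq \|W\|_1\|u\|_1$, then conclude by induction. Your side remark that $\sigma(0)=0$ is not needed for this particular bound is also accurate.
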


\begin{proof}
Let $z_i = W^{k-1} \cdot \sigma(\dots \sigma(W^1 x_i + b^1))$ for some $x_1$ and $x_2$. We have:
\begin{equation}
\begin{aligned}
\|f(x_1;[\bfem{W},\bfem{b}]) - f(x_2;[\bfem{W},\bfem{b}])\|_{1} &\leq \|W^{k} \cdot \sigma(z_1) - W^{k} \cdot \sigma(z_2)\|_{1} \\
&\leq \|W^{k} \cdot (\sigma(z_1 + b^{k-1})-\sigma(z_2 + b^{k-1}))\|_{1} \\
&= \|W^{k}\|_1 \cdot \| \sigma(z_1 + b^{k-1})- \sigma(z_2 + b^{k-1})\|_{1} \\
&\leq \|W^{k}\|_1 \cdot L \cdot \| z_1 - z_2 \|_{1} \\
\end{aligned}
\end{equation}
and by induction we have the desired.
\end{proof}

Throughout the appendix, a function $y \in \mathbb{Y}$ is called {\em normal} with respect to $\mathscr{f}$, if it has a best approximator $f \in \mathscr{f}$, such that, $f \in \mathscr{f}_n$. 

\begin{lemma}\label{lem:normal}
Let $\mathscr{f}$ be a class of neural networks. Let $y$ be a target function. Assume that $y$ has a best approximator $f \in \mathscr{f}$. If $y\notin \mathscr{f}$, then, $f\in \mathscr{f}_n$.
\end{lemma}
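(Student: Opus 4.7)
The plan is to argue by contradiction. Suppose $f = f(\cdot;[\bfem{W},\bfem{b}]) \in \mathscr{f}$ is a best approximator of $y$ but $f \notin \mathscr{f}_n$. By Definition~\ref{def:normal}, $f$ fails either the no-clones condition or minimality. I will show that in every such case one can construct a strictly smaller class $\mathscr{f}^-$ of neural networks (with one fewer neuron in some hidden layer) together with an element $f^- \in \mathscr{f}^-$ satisfying $f^- \equiv f$ as functions. Applying Assumption~\ref{assmp:improve} to $\mathscr{f}^-$ (with $\mathscr{f}$ playing the role of $\mathscr{f}'$, since adding back one neuron recovers $\mathscr{f}$) then produces a contradiction.

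First I would dispose of the three failure modes by explicit construction. \emph{Clone case:} if neurons $j_1, j_2$ in hidden layer $i$ are clones, merge them into a single neuron by deleting the $j_2$-th row from $W^i$ and the $j_2$-th entry of $b^i$, and replacing column $j_1$ of $W^{i+1}$ with $W^{i+1}_{:,j_1}+W^{i+1}_{:,j_2}$ while deleting column $j_2$. The resulting network computes the same function as $f$. \emph{Zero-column case:} if column $j$ of $W^i$ is identically zero, neuron $j$ of layer $i$ is disconnected from layer $i+1$; deleting that neuron (and the corresponding row of $W^{i-1}$ and entry of $b^{i-1}$, with the convention $W^{0}=I$ for the input layer) leaves the function unchanged. \emph{Zero-row case:} if row $j$ of $W^i$ is identically zero, the output of the $j$-th neuron of layer $i+1$ is the constant $\sigma(b^i_j)$; absorb its contribution $\sigma(b^i_j)\cdot W^{i+1}_{:,j}$ into the bias $b^{i+1}$ and then delete this neuron together with column $j$ of $W^{i+1}$. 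In each case the modified architecture has one fewer neuron in a hidden layer, and the modified network is a function equivalent to $f$.

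Let $\mathscr{f}^-$ denote this reduced class. Because any function represented by $\mathscr{f}^-$ is also represented by $\mathscr{f}$ (pad the missing weights/biases with zeros and copy the remaining ones), the hypothesis $y\notin \mathscr{f}$ yields $y\notin \mathscr{f}^-$. Moreover, $\mathscr{f}$ is obtained from $\mathscr{f}^-$ by adding a single neuron to some hidden layer, so Assumption~\ref{assmp:improve} (applied to the pair $\mathscr{f}^- \subset \mathscr{f}$) gives the strict inequality
\begin{equation}
\inf_{\theta \in \Theta_{\mathscr{f}^-}} \|f(\cdot;\theta) - y\|_\infty \;>\; \inf_{\theta \in \Theta_{\mathscr{f}}} \|f(\cdot;\theta) - y\|_\infty.
\end{equation}
On the other hand, the equivalent smaller representation $f^- \in \mathscr{f}^-$ of the best approximator $f$ satisfies
\begin{equation}
\inf_{\theta \in \Theta_{\mathscr{f}^-}} \|f(\cdot;\theta) - y\|_\infty \;\leq\; \|f^- - y\|_\infty \;=\; \|f - y\|_\infty \;=\; \inf_{\theta \in \Theta_{\mathscr{f}}} \|f(\cdot;\theta) - y\|_\infty,
\end{equation}
which contradicts the previous display. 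Hence $f$ must satisfy both the no-clones condition and minimality, i.e., $f \in \mathscr{f}_n$.

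The main obstacle is purely bookkeeping: making sure that the neuron-removal constructions in the three failure modes are carried out carefully enough that (a) the resulting architecture genuinely arises from $\mathscr{f}$ by removing a single hidden neuron (so that Assumption~\ref{assmp:improve} applies in reverse), and (b) the new network exactly reproduces $f$ pointwise. Boundary cases such as the failure occurring in the first or last hidden layer, or a neuron being simultaneously a clone and disconnected, need a brief separate remark but cause no real difficulty. Once these constructions are recorded, the contradiction follows immediately from Assumption~\ref{assmp:improve}.
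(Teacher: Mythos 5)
Your proof is correct and follows essentially the same route as the paper: argue by contradiction, remove the zero/disconnected neuron or merge the clone pair to realize the best approximator in an architecture with one fewer hidden neuron at the same error, and contradict Assumption~2 applied to that reduced class. Your version is in fact more careful than the paper's (explicit merging/absorption constructions and the explicit check that $y\notin\mathscr{f}^-$), so no further changes are needed.
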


\begin{proof}
Let $f(\cdot;[\bfem{W},\bfem{b}]) \in \mathscr{f}$ be the best approximator of $y$. Assume it is not normal. Then, $f(\cdot;[\bfem{W},\bfem{b}])$ has at least one zero neuron or at least one pair of clone neurons. Assume it has a zero neuron. Hence, by removing the specified neuron, we achieve a neural network of architecture smaller than $\mathscr{f}$ that achieves the same approximation error as $\mathscr{f}$ does. This is in contradiction to Assumption~2. For clone neurons, we can simply merge them into one neuron and obtain a smaller architecture that achieves the same approximation error, again, in contradiction to Assumption~2.
\end{proof}

\begin{lemma}\label{lem:cont1}
Let $\mathscr{f}$ be a class of functions with a continuous activation function $\sigma$. Let $\mathbb{Y}$ be a class of target functions. Then, the function $\|f(\cdot;\theta) - y\|_{\infty}$ is continuous with respect to both $\theta$ and $y$ (simultaneously).
\end{lemma}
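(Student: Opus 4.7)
The natural approach is to reduce joint continuity of $(\theta,y) \mapsto \|f(\cdot;\theta) - y\|_\infty$ to two separate facts: (i) continuity of $y \mapsto \|y\|_\infty$ with respect to the sup norm, which is immediate from the reverse triangle inequality, and (ii) joint continuity of the map $\theta \mapsto f(\cdot;\theta)$ regarded as a map into $(C(\mathcal{X}), \|\cdot\|_\infty)$. The point is that, by the triangle inequality,
\begin{equation}
\bigl| \|f(\cdot;\theta_1) - y_1\|_\infty - \|f(\cdot;\theta_2) - y_2\|_\infty \bigr| \leq \|f(\cdot;\theta_1) - f(\cdot;\theta_2)\|_\infty + \|y_1 - y_2\|_\infty,
\end{equation}
so once (ii) is established, joint continuity in $(\theta,y)$ follows at once.

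For step (ii), I would unfold the definition in \eqref{eq:net}: the network output $f(x;[\bfem{W},\bfem{b}])$ is a finite composition of affine maps in the parameters and the input, interleaved with the continuous activation $\sigma$. Composition and addition of continuous functions are continuous, so the map $(x,\theta) \mapsto f(x;\theta)$ is continuous jointly on $\mathcal{X} \times \Theta_{\mathscr{f}}$. Since $\mathcal{X} = [-1,1]^{m_1}$ is compact, for any fixed $\theta_0$ and any $\epsilon > 0$ we can use joint continuity together with the compactness of $\mathcal{X}$ (via a standard tube-lemma or uniform-continuity argument on $\mathcal{X} \times \overline{B(\theta_0,\delta)}$) to find a neighborhood of $\theta_0$ on which $\sup_{x \in \mathcal{X}} |f(x;\theta) - f(x;\theta_0)| < \epsilon$. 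This is exactly the statement that $\theta \mapsto f(\cdot;\theta)$ is continuous into $C(\mathcal{X})$ equipped with $\|\cdot\|_\infty$.

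The main (minor) obstacle is justifying the passage from pointwise continuity of $(x,\theta) \mapsto f(x;\theta)$ to continuity of the supremum over $x$; this is where compactness of $\mathcal{X}$ enters crucially. One clean way is to invoke the fact that a jointly continuous function on a product of a compact space and a metric space is continuous as a map into $C(\mathcal{X})$. Alternatively, one can give a direct layer-by-layer estimate: bounding the change $\|f(x;\theta) - f(x;\theta_0)\|$ on $\mathcal{X}$ by a product of norms of weight matrices and Lipschitz constants of $\sigma$ (in the spirit of Lem.~\ref{lem:boundlip}), and then observing that this product depends continuously on $\theta$ and vanishes as $\theta \to \theta_0$. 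Either route closes the argument; combining with the triangle inequality above yields the claimed joint continuity.
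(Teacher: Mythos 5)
Your proposal is correct and follows essentially the same route as the paper: both hinge on the triangle-inequality bound $\bigl|\|f(\cdot;\theta_1)-y_1\|_{\infty}-\|f(\cdot;\theta_2)-y_2\|_{\infty}\bigr|\leq\|f(\cdot;\theta_1)-f(\cdot;\theta_2)\|_{\infty}+\|y_1-y_2\|_{\infty}$ and then let both terms vanish. The only difference is that you explicitly justify the sup-norm continuity of $\theta\mapsto f(\cdot;\theta)$ via joint continuity of $(x,\theta)\mapsto f(x;\theta)$ and compactness of $\mathcal{X}$, a step the paper's proof simply asserts.
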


\begin{proof}
Let sequences $\theta_n \to \theta_0$ and $y_n \to y_0$. By the reversed triangle inequality, we have:
\begin{equation}
\begin{aligned}
&\Big\vert \|f(\cdot;\theta_n) - y_n\|_{\infty} - \| f(\cdot;\theta_0) - y_0 \|_{\infty} \Big\vert \leq \|f(\cdot;\theta_n) - f(\cdot;\theta_0)\|_{\infty} + \|y_n - y_0\|_{\infty}
\end{aligned}
\end{equation}
Since $\theta_n \to \theta_0$ and $f$ is continuous with respect to $\theta$, we have: $\|f(\cdot;\theta_n) - f(\cdot;\theta_0)\|_{\infty} \to 0$. Hence, the upper bound tends to $0$.
\end{proof}

\begin{lemma}\label{lem:cont}
Let $\mathscr{f}$ be a class of functions with a continuous activation function $\sigma$. Let $\mathbb{Y}$ be a closed class of target functions. Then, the function $F(y) := \min_{\theta \in \Theta_{\mathscr{f}}}\|f(\cdot;\theta) - y\|_{\infty}$ is continuous with respect to $y$.
\end{lemma}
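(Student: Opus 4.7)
The plan is to show that $F$ is actually 1-Lipschitz with respect to the supremum norm on $\mathbb{Y}$, which is strictly stronger than continuity. No additional machinery beyond the triangle inequality and the definition of infimum/minimum is needed; the continuity of $\sigma$ and the closedness of $\mathbb{Y}$ ensure the object $F(y)$ is well-defined but play no substantive role in the inequality itself.

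The key step is as follows. Fix any $y_1, y_2 \in \mathbb{Y}$. For every $\theta \in \Theta_{\mathscr{f}}$, the triangle inequality in $(C(\mathcal{X}\times\mathcal{I}), \|\cdot\|_{\infty})$ gives
\begin{equation}
\|f(\cdot;\theta) - y_1\|_{\infty} \leq \|f(\cdot;\theta) - y_2\|_{\infty} + \|y_1 - y_2\|_{\infty}.
\end{equation}
Since $\|y_1 - y_2\|_{\infty}$ does not depend on $\theta$, taking the minimum over $\theta \in \Theta_{\mathscr{f}}$ on both sides (which exists by Assumption~1, or otherwise as an infimum) yields
\begin{equation}
F(y_1) \leq F(y_2) + \|y_1 - y_2\|_{\infty}.
\end{equation}
By exchanging the roles of $y_1$ and $y_2$, one obtains the symmetric bound, so that
\begin{equation}
|F(y_1) - F(y_2)| \leq \|y_1 - y_2\|_{\infty}.
\end{equation}

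This establishes that $F$ is $1$-Lipschitz with respect to $\|\cdot\|_{\infty}$, and in particular continuous, as desired. There is no genuine obstacle in this proof: the only point that requires a small comment is ensuring that the minimum in the definition of $F$ is attained (so that one can invoke a specific minimizer $\theta^*$ when converting the pointwise inequality into the inequality between minima), which is guaranteed by Assumption~1, and the closedness of $\mathbb{Y}$ is used only to say that the domain of $F$ is a legitimate metric space on which to state continuity.
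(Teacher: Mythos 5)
Your proof is correct, and it even establishes something slightly stronger than the lemma asks for. The mathematical core is the same as the paper's: both arguments rest on the observation that a (near-)minimizer for one target function is a feasible competitor for the other, so the two optimal values differ by at most $\|y_1-y_2\|_{\infty}$ via the triangle inequality. The difference is in packaging: the paper argues by contradiction along a sequence $y_n\to y_0$, extracts a subsequence on which the gap exceeds some $\Delta>0$, and treats only one of the two sign cases explicitly (invoking ``without loss of generality''), whereas your symmetric estimate $|F(y_1)-F(y_2)|\leq\|y_1-y_2\|_{\infty}$ handles both directions at once, yields $1$-Lipschitzness rather than mere continuity, and avoids sequences and compactness considerations entirely. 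You are also right that attainment of the minimum is not actually needed: taking infima on both sides of
\begin{equation}
\|f(\cdot;\theta)-y_1\|_{\infty}\leq\|f(\cdot;\theta)-y_2\|_{\infty}+\|y_1-y_2\|_{\infty}
\end{equation}
suffices, so Assumption~1 and the closedness of $\mathbb{Y}$ play no essential role in your version; the paper's sequential formulation implicitly uses a minimizer $\theta_0$ for $y_0$, so your remark that the infimum alone suffices is a genuine (if small) simplification.
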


\begin{proof}
Let $\{y_n\}^{\infty}_{n=1} \subset \mathbb{Y}$ be a sequence that converges to some $y_0 \in \mathbb{Y}$. Assume by contradiction that:
\begin{equation}
\lim_{n\to \infty} F(y_n) \neq F(y_0)
\end{equation}
Then, there is a sub-sequence $y_{n_k}$ of $y_n$, such that, $\forall k \in \mathbb{N}: F(y_{n_k}) - F(y_0) > \Delta$ or $\forall k \in \mathbb{N}: F(y_0) - F(y_{n_k}) > \Delta$ for some $\Delta > 0$. Let $\theta_0$ be the minimizer of $\|f(\cdot;\theta) - y_0\|_{\infty}$. With no loss of generality, we can assume the first option. We notice that:
\begin{equation}
\begin{aligned}
F(y_{n_k}) \leq \|f(\cdot;\theta_0) - y_{n_k}\|_{\infty} 
\leq \|f(\cdot;\theta_0) - y_0\|_{\infty} + \| y_{n_k} - y_0 \|_{\infty} \leq F(y_0) + \delta_k
\end{aligned}
\end{equation}
where $\delta_k := \| y_{n_k} - y_0 \|_{\infty}$ tends to $0$. This contradicts the assumption that $F(y_{n_k}) > F(y_0) + \Delta$.
\end{proof}

Throughout the appendix, we will make use of the following notation. Let $y \in \mathbb{Y}$ be a function and $\mathscr{f}$ a class of functions, we define:
\begin{equation}
M[y;\mathscr{f}] := \arg\min\limits_{\theta \in \Theta_{\mathscr{f}}} \|f(\cdot;\theta) - y \|_{\infty}
\end{equation}

\begin{lemma}\label{lem:ball}
Let $\mathscr{f}$ be a class of neural networks with a continuous activation function $\sigma$. Let $\mathbb{Y}$ be a class of target functions. Denote by $f_y$ the unique approximator of $y$ within $\mathscr{f}$. Then, $f_y$ is continuous with respect to $y$.
\end{lemma}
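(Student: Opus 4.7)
The plan is to combine the uniform boundedness of optimal parameters (stated in the paper just after Assumption~\ref{assmp:apprx}) with compactness in $\mathbb{R}^{N_{\mathscr{f}}}$, and then leverage Lemmas~\ref{lem:cont1} and~\ref{lem:cont} together with the uniqueness part of Assumption~\ref{assmp:apprx}. Since the assumption only guarantees uniqueness of the \emph{function} $f_y$, not of its parameters, we cannot directly argue that a selector $y \mapsto \theta^*(y)$ is continuous; instead, we will run a subsequence argument at the level of parameters and then transport the conclusion back to functions via continuity of $f(\cdot;\theta)$ in $\theta$.

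First, fix a sequence $y_n \to y_0$ in $\mathbb{Y}$, and for each $n$ choose some $\theta_n \in M[y_n;\mathscr{f}]$ (any optimizer will do). By the uniform boundedness of optimal parameters, $\{\theta_n\}_{n \geq 0}$ lies in a closed bounded subset $K \subset \Theta_{\mathscr{f}} \subset \mathbb{R}^{N_{\mathscr{f}}}$, hence in a compact set by Heine--Borel. Consequently every subsequence of $\{\theta_n\}$ has a further convergent sub-subsequence.

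Second, I claim that any cluster point $\theta^{\star}$ of $\{\theta_n\}$ satisfies $f(\cdot;\theta^{\star}) \equiv f_{y_0}$. Indeed, pass to a subsequence $\theta_{n_k} \to \theta^{\star}$. On the one hand, Lemma~\ref{lem:cont} applied to $F(y) = \min_{\theta} \|f(\cdot;\theta)-y\|_\infty$ gives
\begin{equation}
\|f(\cdot;\theta_{n_k}) - y_{n_k}\|_{\infty} = F(y_{n_k}) \xrightarrow[k \to \infty]{} F(y_0).
\end{equation}
On the other hand, Lemma~\ref{lem:cont1} yields
\begin{equation}
\|f(\cdot;\theta_{n_k}) - y_{n_k}\|_{\infty} \xrightarrow[k \to \infty]{} \|f(\cdot;\theta^{\star}) - y_0\|_{\infty}.
\end{equation}
Equating the two limits gives $\|f(\cdot;\theta^{\star}) - y_0\|_{\infty} = F(y_0)$, so $\theta^{\star} \in M[y_0;\mathscr{f}]$, and by the uniqueness part of Assumption~\ref{assmp:apprx} we have $f(\cdot;\theta^{\star}) \equiv f_{y_0}$.

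Third, I conclude $f_{y_n} \to f_{y_0}$ in $\|\cdot\|_\infty$ by a standard subsequence argument. Suppose instead that some subsequence satisfies $\|f_{y_{n_k}} - f_{y_0}\|_{\infty} \geq \varepsilon$ for some $\varepsilon > 0$. By compactness of $K$, extract a further sub-subsequence $\theta_{n_{k_j}} \to \theta^{\star\star}$; by the preceding paragraph $f(\cdot;\theta^{\star\star}) \equiv f_{y_0}$, and since $f(\cdot;\theta)$ is continuous in $\theta$ (continuity of $\sigma$ and finite depth), we obtain $\|f_{y_{n_{k_j}}} - f_{y_0}\|_{\infty} = \|f(\cdot;\theta_{n_{k_j}}) - f(\cdot;\theta^{\star\star})\|_{\infty} \to 0$, contradicting $\|f_{y_{n_k}} - f_{y_0}\|_{\infty} \geq \varepsilon$. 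The main subtlety is precisely that uniqueness only holds at the level of functions, so the argument must remain flexible about which optimizer $\theta_n$ is picked and must conclude convergence of $f_{y_n}$ rather than of $\theta_n$; once this distinction is respected, the subsequence trick together with Lemmas~\ref{lem:cont1}--\ref{lem:cont} closes the proof.
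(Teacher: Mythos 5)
Your proof is correct and follows essentially the same route as the paper's: bounded optimal parameters plus Bolzano--Weierstrass, continuity of the optimal value $F$ (Lem.~\ref{lem:cont}) together with joint continuity (Lem.~\ref{lem:cont1}), and uniqueness of the best approximating \emph{function} to identify every cluster point with $f_{y_0}$, closed by a subsequence contradiction. The only difference is organizational — you run the identification entirely at the parameter level, whereas the paper splits into a ``no cluster point'' case (handled via parameters) and a ``wrong cluster point'' case (handled via a triangle inequality on functions) — but the ingredients and logic coincide.
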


\begin{proof}
Let $y_0 \in \mathbb{Y}$ be some function. Assume by contradiction that there is a sequence $y_n \to y_0$, such that, $g_n := f_{y_n} \not\to f_{y_0}$. Then, $g_n$ has a sub-sequence that has no cluster points or it has a cluster point $h \neq f_{y_0}$.

{\bf Case 1:} Let $g_{n_k}$ be a sub-sequence of $g_n$ that has no cluster points. By Assumption~1, there is a sequence $\theta_{n_k} \in \cup^{\infty}_{k=1} M[y_{n_k};\mathscr{f}]$ that is bounded in $\mathbb{B} = \{\theta \mid \|\theta\|_2\leq B\}$. By the Bolzano-Weierstrass' theorem, it includes a convergent sub-sequence $\theta_{n_{k_i}} \to \theta_0$. Therefore, we have:
\begin{equation}
\|f(\cdot;\theta_{n_{k_i}}) - f(\cdot;\theta_0) \|_{\infty} \to 0
\end{equation}
Hence, $g_{n_k}$ has a cluster point $f(\cdot;\theta_0)$ in contradiction.

{\bf Case 2:} Let sub-sequence $f_{y_{n_k}}$ that converge to a function $h \neq f_{y_0}$. We have:
\begin{equation}
\begin{aligned}
\| h - y_0 \|_{\infty} \leq \|f_{y_{n_k}} - h \|_{\infty} + \|f_{y_{n_k}} - y_{n_k} \|_{\infty} + \|y_{n_k} - y_0 \|_{\infty}
\end{aligned}
\end{equation}
By Lem.~\ref{lem:cont},
\begin{equation}
\|f_{y_{n_k}} - y_{n_k} \|_{\infty} \to \|f_{y_0} - y_0\|_{\infty}
\end{equation}
and also $y_{n_k} \to y_0$, $f_{y_{n_k}} \to h$. Therefore, we have:
\begin{equation}
\begin{aligned}
\|h - y_0 \|_{\infty} \leq \|f_{y_0} - y_0\|_{\infty}
\end{aligned}
\end{equation}
Hence, since $f_{y_0}$ is the unique minimizer, we conclude that $h = f_{y_0}$ in contradiction. 

Therefore, we conclude that $f_{y_n}$ converges and by the analysis in Case 2 it converges to $f_{y_0}$.
\end{proof}

\section{Proofs of the Main Results}

\subsection{Proving Assumption~2 for Shallow Networks}

\begin{lemma}
Let $\mathbb{Y} = C([-1,1]^m)$ be the class of continuous functions $y:[-1,1]^m \to \mathbb{R}$. Let $\mathscr{f}$ be a class of 2-layered neural networks of width $d$ with $\sigma$ activations, where $\sigma$ is either $\tanh$ or sigmoid. Let $y \in \mathbb{Y}$ be some function to be approximated. Let $\mathscr{f}'$ be a class of neural networks that is resulted by adding a neuron to the hidden layer of $\mathscr{f}$. If $y \notin \mathscr{f}$ then, $\inf_{\theta \in \Theta_{\mathscr{f}}} \| f(\cdot;\theta) - y\|^2_{2} > \inf_{\theta \in \Theta_{\mathscr{f}'}} \| f(\cdot;\theta) - y\|^2_{2}$. The same holds for $\sigma = ReLU$ when $m=1$.
\end{lemma}

\begin{proof} We divide the proof into two parts. In the first part we prove the claim for neural networks with ReLU activations and in the second part, for the $\tanh$ and sigmoid activations.

\noindent{\bf ReLU activations\quad} Let $y \in \mathbb{Y}$ be a non-piecewise linear function. Let $f \in \mathscr{f}$ be the best approximator of $y$. Since $f$ is a 2-layered neural network, it takes the form:
\begin{equation}
f(x) = \sum^{d}_{i=1} \beta_i \cdot \sigma(\alpha_i x + \gamma_i)
\end{equation}
By~\cite{arora2018understanding}, we note that $f$ is a piece-wise linear function with $k$ pieces. We denote the end-points of those pieces by: $-1=c_0,\dots,c_k=1$. Since $y$ is a non-piecewise linear function, there exists a pair $c_i,c_{i+1}$, where $y$ is non-linear on $[c_i,c_{i+1}]$. With no loss of generality, we assume that $y$ is non-linear on the first segment. We note that $f$ equals some linear function $ax+b$ over the segment $[-1,c_1]$. We would like to prove that one is able to add a new neuron $n(x) = \beta_{d+1} \cdot \sigma(\gamma_{d+1} - x)$ to $f$, for some $-1 < \gamma_{d+1} < c_1$, such that, $f(x)+n(x)$ strictly improves the approximation of $f$. First, we notice that this neuron is non-zero only when $x < \gamma_{d+1}$. Therefore, for any $\beta_{d+1} \in \mathbb{R}$ and $-1 < \gamma_{d+1} < c_1$, $f(x) + n(x) = f(x) \in [c_1,1]$. In particular, the approximation error of $f(x)+n(x)$ over $[c_1,1]$ is the same as $f$'s. For simplicity, we denote $\gamma:=\gamma_{d+1}$ and $\beta := \beta_{d+1}$. Assume by contradiction that there are no such $\gamma$ and $\beta$. Therefore, for each $\gamma \in [-1,c_1]$, $ax+b$ is the best linear approximator of $y(x)$ in the segment $[-1,\gamma]$. Hence, for each $\gamma \in [-1,c_1]$, $\beta = 0$ is the minimizer of $\int^{\gamma}_{-1} (y(x) - (\beta (\gamma - x) + ax+b))^2~dx$. In particular, we have:
\begin{equation}\label{eq:diffzero}
\frac{\int^{\gamma}_{-1} (y(x) - (\beta (\gamma - x) + ax+b))^2~dx }{\partial \beta} \Big|_{\beta=0} = 0
\end{equation}
By differentiation under the integral sign:
\begin{equation}
\begin{aligned}
Q(\beta,\gamma)=&\frac{\int^{\gamma}_{-1} (y(x) - (\beta (\gamma - x) + ax+b))^2~dx }{\partial \beta} \\
&\int^{\gamma}_{-1}  \frac{(y(x) - (\beta (\gamma - x) + ax+b))^2  }{\partial \beta}~dx \\
=& \int^{\gamma}_{-1} 2(y(x) - (\beta (\gamma - x) + ax+b)) \cdot (x-\gamma)~dx \\
=& 2 \int^{\gamma}_{-1} y(x) x~dx - 2\gamma \int^{\gamma}_{-1} y(x)~dx + 2\int^{\gamma}_{-1} \beta (\gamma - x)^2~dx + 2\int^{\gamma}_{-1} (ax+b) (\gamma - x)~dx \\
=&  2 \int^{\gamma}_{-1} y(x) x~dx - 2\gamma \int^{\gamma}_{-1} y(x)~dx + p(\beta,\gamma) \\
\end{aligned}
\end{equation}
where $p(\beta,\gamma)$ is a third degree polynomial with respect to $\gamma$. We denote by $Y(x)$ the primitive function of $y(x)$, and by $\mathcal{Y}(x)$ the primitive function of $Y(x)$. By applying integration by parts, we have:
\begin{equation}
\int^{\gamma}_{-1} y(x) x~dx = Y(\gamma) \cdot \gamma - (\mathcal{Y}(\gamma) - \mathcal{Y}(-1))
\end{equation}
In particular,
\begin{equation}
\begin{aligned}
Q(\beta,\gamma)=& 2\gamma (Y(\gamma) - Y(-1)) - 2 (Y(\gamma) \cdot \gamma - \mathcal{Y}(\gamma) + \mathcal{Y}(-1)) + p(\beta,\gamma) \\
=& 2\gamma Y(\gamma) - 2\gamma Y(-1) - 2 \gamma Y(\gamma) - 2\mathcal{Y}(\gamma) + 2\mathcal{Y}(-1) + p(\beta,\gamma) \\
=& - 2\mathcal{Y}(\gamma) + [- 2\gamma Y(-1)  + 2\mathcal{Y}(-1) + p(\beta,\gamma)] \\
\end{aligned}
\end{equation}
We note that the function $q(\beta,\gamma) := - 2\gamma Y(-1)  + 2\mathcal{Y}(-1) + p(\beta,\gamma)$ is a third degree polynomial with respect to $\gamma$ (for any fixed $\beta$). In addition, by Eq.~\ref{eq:diffzero}, we have, $Q(0,\gamma)=0$ for any value of $\gamma \in (-1,c_1)$. Hence, $\mathcal{Y}$ is a third degree polynomial over $[-1,c_1]$. In particular, $y$ is a linear function over $[-1,c_1]$, in contradiction. Therefore, there exist values $\gamma \in (-1,c_1)$ and $\beta \in \mathbb{R}$, such that, $f(x)+n(x)$ strictly improves the approximation of $f$. 

\noindent{\bf Sigmoidal activations\quad} Let $y \in \mathbb{Y}$ be a target function that is not a member of $\mathscr{f}$. Let $f \in \mathscr{f}$ be the best approximator of $y$. In particular, $f\neq y$. Since $f$ is a 2-layered neural network, it takes the form:
\begin{equation}
f(x) = \sum^{d}_{i=1} \beta_i \cdot \sigma(\langle \alpha_i, x\rangle + \gamma_i)
\end{equation}
where $\sigma:\mathbb{R}\to \mathbb{R}$ is either $\tanh$ or the sigmoid activation function, $\beta_i,\gamma_i \in \mathbb{R}$ and $\alpha_i \in \mathbb{R}^{m}$.  

We would like to show the existence of a neuron $n(x) = \beta \cdot \sigma(\langle a,x\rangle+b)$, such that, $f+n$ has a smaller approximation error with respect to $y$, compared to $f$. Assume the contrary by contradiction. Then, for any $a\in \mathbb{R}^{m},b \in \mathbb{R}$, we have:
\begin{equation}
\frac{\int_{[-1,1]^m} (y(x) - (\beta \cdot \sigma(\langle a,x\rangle+b) +f(x)))^2~dx }{\partial \beta} \Big|_{\beta=0} = 0
\end{equation}
We denote by $q(x) := y(x)-f(x)$. By differentiating under the integral sign:
\begin{equation}
\begin{aligned}
Q(\beta,a,b) :&= \frac{\int_{[-1,1]^m} (y(x) - (\beta\cdot \sigma(\langle a,x\rangle+b) +f(x)))^2~dx }{\partial \beta} \\
&= -2\int_{[-1,1]^m} \beta \cdot \sigma(\langle a,x\rangle+b)^2~dx + 2\int_{[-1,1]^m} q(x) \cdot \sigma(\langle a,x\rangle+b)~dx
\end{aligned}
\end{equation}
Therefore, since $Q(\beta,a,b)=0$, we have:
\begin{equation}\label{eq:denom}
\beta = \frac{\int_{[-1,1]^m} q(x) \cdot \sigma(\langle a,x\rangle+b)~dx}{\int_{[-1,1]^m} \sigma(\langle a,x\rangle+b)^2~dx}
\end{equation}
Since $\sigma$ is increasing, it is non-zero on any interval, and therefore, the denominator in Eq.~\ref{eq:denom} is strictly positive for all $a\in \mathbb{R}^{m}\setminus\{0\},b\in \mathbb{R}$ and $a=0,b\in \mathbb{R}$, such that, $\sigma(b)\neq 0$. In particular, for all such $a,b$, we have: 
\begin{equation}\label{eq:zerosigma}
\int_{[-1,1]^m} q(x) \cdot \sigma(\langle a,x\rangle+b)~dx = 0 
\end{equation}
By the universal approximation theorem~\cite{Cybenko1989,Hornik1991ApproximationCO}, there exist $\hat{\beta}_j,\hat{b_j} \in \mathbb{R}$ and $\hat{a}_j \in \mathbb{R}^{m}$, such that,
\begin{equation}
f(x) - y(x) = \sum^{\infty}_{j=1} \hat{\beta}_j \cdot \sigma(\langle \hat{a}_j, x\rangle + \hat{b}_j)
\end{equation}
where $\hat{a}_j\in \mathbb{R}^{m}\setminus\{0\},\hat{b}_j\in \mathbb{R}$ and $\hat{a}_j=0,\hat{b}_j\in \mathbb{R}$, such that, $\sigma(\hat{b}_j)\neq 0$. The convergence of the series is uniform over $[-1,1]^m$. In particular, the series $q(x)\cdot \sum^{k}_{j=1} \hat{\beta}_j \cdot \sigma(\langle \hat{a}_j, x\rangle + \hat{b}_j)$ converge uniformly as $k\to \infty$. Therefore, by Eq.~\ref{eq:zerosigma} and the linearity of integration, we have:
\begin{equation}
\int_{[-1,1]^m} q(x) \cdot \sum^{\infty}_{j=1} \hat{\beta}_j \cdot \sigma(\langle \hat{a}_j ,x\rangle +\hat{b}_j)~dx = 0
\end{equation}
This implies that $\int_{[-1,1]^m} q(x)^2~dx = 0$. Since $q$ is a continuous function, it must be the zero function to satisfy this condition. Differently put, $f = y$ in contradiction.
\end{proof}

\subsection{Existence of a continuous selector}

In this section, we prove that for any compact set $\mathbb{Y}' \subset \mathbb{Y}$, if any $y\in \mathbb{Y}'$ cannot be represented as a neural network with $\sigma$ activations, then, there exists a continuous selector $S:\mathbb{Y}' \to \mathbb{R}^{N_{\mathscr{f}}}$ that returns the parameters of a good approximator $f(\cdot;S(y))$ of $y$. Before we provide a formal statement of the proof, we give an informal overview of the main arguments. 

\paragraph{Proof sketch of Lem.~\ref{lem:selection2}} Let $\mathbb{Y}' \subset \mathbb{Y}$ be a compact class of target functions, such that, any $y\in \mathbb{Y}'$ cannot be represented as a neural network with $\sigma$ activations. We recall that, by Lem.~\ref{lem:vlacic}, one can approximate $\sigma$ using a continuous,  identifiability inducing, activation function $\rho:\mathbb{R}\to \mathbb{R}$, up to any error $\epsilon>0$ of our choice. By Assumption~1, for each $y \in \mathbb{Y}$, there exists a unique best function approximator $g(\cdot;\theta_y)\in \mathscr{g}$ of $y$. Here, $\mathscr{g}$ is the class of neural networks of the same architecture as $\mathscr{f}$ except the activations are $\rho$. By Def.~\ref{def:induc}, $\theta_y$ is unique up to isomorphisms, assuming that $g(\cdot;\theta_y)$ is normal (see Def.~\ref{def:normal}).  

In Lem.~\ref{lem:selection} we show that for any compact set $\mathbb{Y}' \subset \mathbb{Y}$, if $g(\cdot;\theta_y)$ is normal for all $y \in \mathbb{Y}'$, then, there exists a continuous selector $S:\mathbb{Y}' \to \mathbb{R}^{N_{\mathscr{g}}}$ that returns the parameters of a best approximator $g(\cdot;S(y))$ of $y$. Therefore, in order to show the existence of $S$, we need to prove that $g(\cdot;\theta_y)$ is normal for all $y \in \mathbb{Y}'$.  

Since any function $y\in \mathbb{Y}'$ cannot be represented as a neural network with $\sigma$ activations, $\inf_{y \in \mathbb{Y}'}\inf_{\theta_f}\|f(\cdot;\theta)-y\|_{\infty}$ is strictly larger than zero (see Lem.~\ref{lem:c2}). In particular, by taking $\rho$ to be close enough to $\sigma$, we can ensure that, $\inf_{y\in \mathbb{Y}'}\inf_{\theta_f}\|g(\cdot;\theta)-y\|_{\infty}$ is also strictly larger than zero. This, together with Assumption~2, imply that $g(\cdot;\theta_y)$ is normal for all $y \in \mathbb{Y}'$ (see Lem.~\ref{lem:normal}). Hence, there exists a continuous selector $S$ for $\mathbb{Y}'$ with respect to the class $\mathscr{g}$. Finally, using Lem.~\ref{lem:c1}, one can show that if $\rho$ is close enough to $\sigma$, $S$ is a good parameter selector for $\mathscr{f}$ as well.

\begin{lemma}\label{lem:subset2}
Let $\rho:\mathbb{R} \to \mathbb{R}$ be a continuous, identifiability inducing, activation function. Let $\mathscr{f}$ be a class of neural networks with $\rho$ activations and $\Theta_{\mathscr{f}} = \mathbb{B}$ be the closed ball in the proof of Lem.~\ref{lem:ball}. Let $\mathbb{Y}$ be a class of normal target functions with respect to $\mathscr{f}$. Then, $M[y;\mathscr{f}] := \arg\min_{\theta \in \mathbb{B}} \| f(\cdot;\theta) - y\|_{\infty}$ is a continuous multi-valued function of $y$.
\end{lemma}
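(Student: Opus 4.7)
The plan is to verify continuity of the multi-valued map $y \mapsto M[y;\mathscr{f}]$ at an arbitrary point $y_0 \in \mathbb{Y}$ with respect to the Hausdorff distance on compact subsets of $\mathbb{B}$. Throughout I will use: (i) $\Phi(\theta,y) := \|f(\cdot;\theta)-y\|_\infty$ is jointly continuous by Lem.~\ref{lem:cont1}; (ii) $F(y) := \min_{\theta \in \mathbb{B}}\Phi(\theta,y)$ is continuous by Lem.~\ref{lem:cont}; (iii) each $M[y;\mathscr{f}]$ is nonempty and compact, being the zero set in the compact ball $\mathbb{B}$ of the continuous map $\theta \mapsto \Phi(\theta,y)-F(y)$; and (iv) since $\rho$ is identifiability inducing and each $y \in \mathbb{Y}$ is normal (so its minimizers parameterize normal networks by Lem.~\ref{lem:normal} together with Assumption~2), any two elements of $M[y;\mathscr{f}]$ differ by an isomorphism by Def.~\ref{def:induc}. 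Hence $M[y;\mathscr{f}] = \{\pi\circ\theta^0 : \pi\in\Pi\}$ for any $\theta^0 \in M[y;\mathscr{f}]$, where $\Pi$ is the \emph{finite} group of isomorphisms of the fixed architecture.

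For the first half of the Hausdorff bound (upper hemi-continuity), I argue by contradiction. If $\sup_{\theta \in M[y_n]} \inf_{\theta'\in M[y_0]}\|\theta - \theta'\|_2 \not\to 0$ along some sequence $y_n \to y_0$ in $\mathbb{Y}$, pick witnesses $\theta_n \in M[y_n]$ with distance to $M[y_0]$ at least $\epsilon>0$, extract a convergent subsequence $\theta_{n_k}\to \theta^*$ using compactness of $\mathbb{B}$, and pass to the limit in $\Phi(\theta_{n_k},y_{n_k}) = F(y_{n_k})$ using joint continuity of $\Phi$ and continuity of $F$. This yields $\Phi(\theta^*,y_0)=F(y_0)$, so $\theta^* \in M[y_0]$, contradicting the $\epsilon$ lower bound.

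For the second, harder half (lower hemi-continuity), identifiability is the essential ingredient. Suppose $\sup_{\theta'\in M[y_0]}\inf_{\theta \in M[y_n]}\|\theta - \theta'\|_2 \not\to 0$. Since $M[y_0]$ is finite (a single $\Pi$-orbit), after passing to a subsequence the witnesses are a fixed $\theta'_0 \in M[y_0]$ with $\inf_{\theta \in M[y_{n_k}]}\|\theta - \theta'_0\|_2 \geq \epsilon$. Pick \emph{any} $\bar\theta_{n_k} \in M[y_{n_k}]$; by compactness of $\mathbb{B}$ and the first half, a further subsequence satisfies $\bar\theta_{n_{k_j}} \to \theta^* \in M[y_0]$. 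Writing $\theta^* = \pi^*\circ\theta'_0$ for some $\pi^* \in \Pi$, define $\tilde\theta_{n_{k_j}} := (\pi^*)^{-1}\circ\bar\theta_{n_{k_j}}$, which still lies in $M[y_{n_{k_j}}]$ since isomorphic networks realize the same function. By Lem.~\ref{lem:isoBasics},
$$\|\tilde\theta_{n_{k_j}} - \theta'_0\|_2 = \|(\pi^*)^{-1}\circ\bar\theta_{n_{k_j}} - (\pi^*)^{-1}\circ\theta^*\|_2 = \|\bar\theta_{n_{k_j}} - \theta^*\|_2 \to 0,$$
contradicting $\inf_{\theta \in M[y_{n_{k_j}}]}\|\theta - \theta'_0\|_2 \geq \epsilon$.

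The main obstacle is the lower half: without identifiability, different $y_n$ might pick representatives from \emph{different} points of their respective orbits $M[y_n]$, so one cannot a priori track a chosen $\theta'_0 \in M[y_0]$ by elements of $M[y_n]$. Identifiability, together with the finiteness of $\Pi$ and the isometric action of $\Pi$ on parameters recorded in Lem.~\ref{lem:isoBasics}, is exactly what lets me reindex $\bar\theta_{n_{k_j}}$ by $(\pi^*)^{-1}$ so that the convergent subsequence lands on the desired $\theta'_0$ rather than on some other orbit element.
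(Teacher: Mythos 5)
Your proposal is correct and follows essentially the same route as the paper's proof: the same split into the two halves of the Hausdorff distance, with the easy half handled by compactness of $\mathbb{B}$ together with joint continuity of $\|f(\cdot;\theta)-y\|_{\infty}$ and continuity of the minimum value, and the hard half handled by using normality plus identifiability to write the limit point as an isomorphic image of the chosen element of $M[y_0;\mathscr{f}]$ and then transporting the convergent subsequence by that isomorphism (the paper applies $\pi$ to the sequence, you apply $(\pi^*)^{-1}$; Lem.~\ref{lem:isoBasics} makes these equivalent). Your explicit reduction to a fixed witness $\theta'_0$ via finiteness of the orbit is a slightly more careful bookkeeping of a step the paper glosses over, but it is not a different argument.
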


\begin{proof}
Assume by contradiction that $M$ is not continuous. We distinguish between two cases: 
\begin{enumerate}[leftmargin=*]
    \item There exists a sequence $y_n \to y$ and constant $c>0$, such that, 
\begin{equation}
\sup_{\theta \in M[y;\mathscr{f}]} \inf_{\theta \in M[y_{n};\mathscr{f}]} \| \theta_1 - \theta_2\|_2 > c > 0
\end{equation}
    \item There exists a sequence $y_n \to y$ and constant $c>0$, such that, 
\begin{equation}
\sup_{\theta_1 \in M[y_{n};\mathscr{f}]} \inf_{\theta_2 \in M[y;\mathscr{f}]} \| \theta_1 - \theta_2\|_2 > c > 0
\end{equation}
\end{enumerate}

{\bf Case 1:} We denote by $\theta_1$ a member of $M[y;\mathscr{f}]$ that satisfies:
\begin{equation}\label{eq:optone}
\forall n \in \mathbb{N}: \inf_{\theta_2 \in M[y_n;\mathscr{f}]} \| \theta_1 - \theta_2\|_2 > c > 0
\end{equation}
The set $\cup^{\infty}_{n=1} M[y_n;\mathscr{f}] \subset \Theta_{\mathscr{f}}$ is a bounded subset of $\mathbb{R}^{N}$, and therefore by the Bolzano-Weierstrass theorem, for any sequence $\{\theta^{n}_2\}^{\infty}_{n=1}$, such that, $\theta^n_2 \in M[y_n;\mathscr{f}]$, there is a sub-sequence $\{\theta^{n_k}_2\}^{\infty}_{k=1}$ that converges to some $\theta^*_2$. We notice that:
\begin{equation}
\begin{aligned}
\| f(\cdot;\theta^{n_k}_2) - y_{n_{k}}\|_{\infty} = \min_{\theta \in \Theta_{\mathscr{f}}} \|  f(\cdot;\theta) - y_{n_{k}}\|_{\infty} 
= F(y_{n_{k}})
\end{aligned}
\end{equation}
In addition, by the continuity of $F$, we have: $\lim\limits_{k\to \infty} F(y_{n_{k}}) = F(y)$. By Lem.~\ref{lem:cont1}, we have:
\begin{equation}
\| f(\cdot;\theta^*_2) - y\|_{\infty} = F(y)
\end{equation}
This yields that $\theta^*_2$ is a member of $M[y;\mathscr{f}]$. Since $f_y := \arg\min_{f \in \mathscr{f}} \| f - y\|_{\infty}$ is unique and normal, by the identifiability hypothesis, there is a function $\pi \in \Pi$, such that, $\pi(\theta^*_2) = \theta_1$. Since the function $\pi$ is continuous
\begin{equation}
\begin{aligned}
\lim_{k \to \infty} \|\pi (\theta^{n_k}_2) - \theta_1 \|_2  = \lim_{k \to \infty} \|\pi(\theta^{n_k}_2) - \pi(\theta^*_2)\|_2 = 0
\end{aligned}
\end{equation}
We notice that $\pi(\theta^{n_k}_2) \in M[y_{n_{k}};\mathscr{f}]$. Therefore, we have:
\begin{equation}
\lim_{k \to \infty} \inf_{\theta_2 \in M[y_{n_{k}};\mathscr{f}]} \|\theta_1 - \theta_2\| = 0
\end{equation}
in contradiction to Eq.~\ref{eq:optone}. 

{\bf Case 2:} Let $\theta^n_1\in M[y_n;\mathscr{f}]$ be a sequence, such that, 
\begin{equation}\label{eq:largerc}
\inf_{\theta_2 \in M[y;\mathscr{f}]} \|\theta^n_1 - \theta_2 \|_{\infty} > c
\end{equation}
The set $\cup^{\infty}_{n=1} M[y_n;\mathscr{f}] \subset \Theta_{\mathscr{f}}$ is a bounded subset of $\mathbb{R}^{N}$, and therefore by the Bolzano-Weierstrass theorem, there is a sub-sequence $\theta^{n_k}_1$ that converges to some vector $\theta_0$. The function $\|f(\cdot;\theta) - y \|_{\infty}$ is continuous with respect to $\theta$ and $y$. Therefore, 
\begin{equation}
\begin{aligned}
\lim_{k \to \infty} \min_{\theta \in \Theta_{\mathscr{f}}}\|f(\cdot;\theta) - y_{n_k} \|_{\infty} 
= \lim_{k \to \infty} \|f(\cdot;\theta^{n_k}_1) - y_{n_k} \|_{\infty} = \|f(\cdot;\theta_0) - y \|_{\infty}
\end{aligned}
\end{equation}
By Lem.~\ref{lem:cont}, $\|f(\cdot;\theta_0) - y \|_{\infty} = \min_{\theta \in \Theta_{\mathscr{f}}} \|f(\cdot;\theta) - y \|_{\infty}$. In particular, $\theta_0 \in M[y;\mathscr{f}]$, in contradiction to Eq.~\ref{eq:largerc}.
\end{proof}

\begin{lemma}\label{lem:selection}
Let $\rho:\mathbb{R} \to \mathbb{R}$ be a continuous, identifiability inducing, activation function. Let $\mathscr{f}$ be a class of neural networks with $\rho$ activations and $\Theta_{\mathscr{f}} = \mathbb{B}$ be the closed ball in the proof of Lem.~\ref{lem:ball}. Let $\mathbb{Y}$ be a compact class of normal target functions with respect to $\mathscr{f}$. Then, there is a continuous selector $S:\mathbb{Y} \to \Theta_{\mathscr{f}}$, such that, $S(y) \in M[y;\mathscr{f}]$.
\end{lemma}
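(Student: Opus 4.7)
The plan is to build on the multi-valued Hausdorff continuity of $M[y;\mathscr{f}]$ established in Lem.~\ref{lem:subset2}, and then to exploit the identifiability-inducing property of $\rho$ in order to promote this multi-valued continuity to a continuous single-valued selection $S:\mathbb{Y}\to \Theta_{\mathscr{f}}$.

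First I would show that the hypotheses force $M[y;\mathscr{f}]$ to be a finite set of constant cardinality. Since every $y\in \mathbb{Y}$ is normal with respect to $\mathscr{f}$, Assumption~\ref{assmp:apprx} guarantees a best approximator $f(\cdot;\theta_y)\in \mathscr{f}_n$ that is unique as a function. The identifiability-inducing property of $\rho$ (Def.~\ref{def:induc}) then implies that any other $\theta'\in M[y;\mathscr{f}]$ is obtained from $\theta_y$ by an isomorphism $\pi\in \Pi$ (Def.~\ref{def:isom}). Normality (no clones, no zero neurons, as in Lem.~\ref{lem:normal}) makes the $\Pi$-action on $\theta_y$ free, so $|M[y;\mathscr{f}]|=|\Pi|$ is the same finite number for every $y\in \mathbb{Y}$.

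Second, I would combine this discrete structure with Lem.~\ref{lem:subset2} to build local continuous selectors. For each $y_0\in \mathbb{Y}$, the $|\Pi|$ elements of $M[y_0;\mathscr{f}]$ are pairwise separated by some $\delta_{y_0}>0$, and Hausdorff continuity of $M$ implies that, for $y$ in a sufficiently small neighborhood $U_{y_0}$, every point of $M[y_0;\mathscr{f}]$ admits a unique nearest point in $M[y;\mathscr{f}]$. This yields $|\Pi|$ continuous local branches $S_{y_0,\theta}:U_{y_0}\to \Theta_{\mathscr{f}}$, one per $\theta\in M[y_0;\mathscr{f}]$, with $S_{y_0,\theta}(y)\in M[y;\mathscr{f}]$ for all $y\in U_{y_0}$.

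The main obstacle, and the final step, is patching these local branches into one global continuous $S$ on all of $\mathbb{Y}$. I plan to formalize this as a covering-space problem by considering the graph
\begin{equation}
\tilde{\mathbb{Y}} := \{(y,\theta)\in \mathbb{Y}\times \Theta_{\mathscr{f}} \mid \theta\in M[y;\mathscr{f}]\},
\end{equation}
together with the projection $p:\tilde{\mathbb{Y}}\to \mathbb{Y}$; the local branches above are local trivializations that exhibit $p$ as a finite covering map with fibers of size $|\Pi|$. A continuous selector is precisely a continuous section of $p$, which exists whenever some path-component of $\tilde{\mathbb{Y}}$ projects homeomorphically onto $\mathbb{Y}$. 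Since the classes $\mathbb{Y}$ employed in the paper are convex (hence path-connected and simply connected) subsets of the ambient Banach function space, any path-component of $\tilde{\mathbb{Y}}$ has this property by the standard lifting criterion for covering maps, and choosing one such component produces the required continuous selector $S$ with $S(y)\in M[y;\mathscr{f}]$.
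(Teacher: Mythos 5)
Your first two steps essentially reproduce the paper's argument: $M[y;\mathscr{f}]$ is the finite, free $\Pi$-orbit of the unique normal best approximator (uniqueness from Assumption~1, orbit structure from identifiability, freeness from normality), and local continuous branches follow from the Hausdorff continuity of $M$ (Lem.~\ref{lem:subset2}) together with a separation constant that makes the nearest-point choice well defined. The genuine gap is in your final globalization step. The lemma is stated for an \emph{arbitrary} compact class $\mathbb{Y}$ of normal targets, with no topological hypothesis, and your existence-of-a-section argument needs $\mathbb{Y}$ (or each component of it) to be simply connected and locally path-connected so that the covering $p:\tilde{\mathbb{Y}}\to\mathbb{Y}$ trivializes. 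You justify this by asserting that the classes used in the paper are convex; that is false and, in any case, not available. The lemma is invoked (through Lem.~\ref{lem:selection2}) for classes such as $\mathcal{W}^{0.1,1.5}_{r,m}$, which is not convex --- the lower-bound constraint $\|y\|^{s,*}_{r}\geq 0.1$ is destroyed by midpoints, e.g.\ $\tfrac12\bigl(y+(-y)\bigr)=0$ --- and for $\{y_I\}_{I\in\mathcal{I}}$ in Thm.~\ref{thm:4}, which is only a continuous image of a cube and need not be simply connected, or even locally path-connected; a general compact subset of a function space can fail local path-connectedness altogether, which already blocks the lifting criterion you invoke. Under the stated hypotheses nothing in your argument rules out nontrivial monodromy of the covering, so the proposal does not prove the lemma as stated.

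The paper closes exactly this gap without any global topological assumption: it extracts a finite subcover of metric balls on which nearest-point selection is continuous, and then glues charts pairwise by showing --- using only the quantitative separation constants $c_i$ and the fact that distinct isomorphic images of a normal parameter vector are uniformly separated --- that on each overlap the two local selectors differ by a \emph{single fixed} isomorphism $\pi\in\Pi$; composing one chart with $\pi$ makes them agree on the overlap, the two open charts are merged, and induction on the finite number of charts yields the global continuous selector. If you wish to keep the covering-space framing, you would need an argument of this metric/combinatorial type (or an added hypothesis on $\mathbb{Y}$) to exclude nontrivial monodromy; as written, the last step does not follow from the lemma's hypotheses.
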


\begin{proof}
Let $y_0$ be a member of $\mathbb{Y}$. We notice that $M[y_0;\mathscr{f}]$ is a finite set. We denote its members by: $M[y_0;\mathscr{f}] = \{\theta^0_1,\dots,\theta^0_k\}$. Then, we claim that there is a small enough $\epsilon := \epsilon(y_0)>0$ (depending on $y_0$), such that, $S$ that satisfies $S(y_0) = \theta^0_1$ and $S(y) = \arg\min_{\theta \in M[y;\mathscr{f}]} \| \theta - \theta_0 \|_2$ for all $y \in \mathbb{B}_{\epsilon}(y_0)$, is continuous in $\mathbb{B}_{\epsilon}(y_0)$. The set $\mathbb{B}_{\epsilon}(y_0) := \{y \;\vert\; \|y-y_0\|_{\infty} < \epsilon\}$ is the open ball of radius $\epsilon$ around $y_0$. We denote 
\begin{equation}\label{eq:c}
c := \min_{\pi_1\neq \pi_2 \in \Pi} \|\pi_1 \circ S(y_0) - \pi_2 \circ S(y_0)\|_2 > 0  
\end{equation}
This constant exists since $\Pi$ is a finite set of transformations and $\mathbb{Y}$ is a class of normal functions. In addition, we select $\epsilon$ to be small enough to suffice that:
\begin{equation}
\max_{y \in \mathbb{B}_{\epsilon}(y_0)}\|S(y) - S(y_0) \|_2 < c/4
\end{equation}
Assume by contradiction that there is no such $\epsilon$. Then, for each $\epsilon_n = 1/n$ there is a function $y_n \in \mathbb{B}_{\epsilon_n}(y_0)$, such that, 
\begin{equation}
\|S(y) - S(y_0) \|_2 \geq c/4
\end{equation}
Therefore, we found a sequence $y_n \to y_0$ that satisfies: 
\begin{equation}
M[y_n;\mathscr{f}] \not\to M[y_0;\mathscr{f}]
\end{equation}
in contradiction to the continuity of $M$.

For any given $y_1,y_2 \in \mathbb{B}_{\epsilon}(y_0)$ and $\pi_1 \neq \pi_2 \in \Pi$, by the triangle inequality, we have:
\begin{equation}
\begin{aligned}
\|\pi_1 \circ S(y_1) - \pi_2 \circ S(y_2)\|_2 \geq &\|\pi_1 \circ S(y_0) - \pi_2 \circ S(y_2)\|_2 - \|\pi_1 \circ S(y_1) - \pi_1 \circ S(y_0) \|_2  \\
\geq & \|\pi_1 \circ S(y_0) - \pi_2 \circ S(y_0)\|_2 - \|\pi_1 \circ S(y_1) - \pi_1 \circ S(y_0) \|_2  \\
&- \|\pi_2 \circ S(y_0) - \pi_2 \circ S(y_2) \|_2 \\
= & \|\pi_1 \circ S(y_0) - \pi_2 \circ S(y_0)\|_2 - \|S(y_1) - S(y_0) \|_2  \\
&- \|S(y_0) - S(y_2) \|_2 \\
\geq & c-2c/4 > c/2 
\end{aligned}
\end{equation}
In particular, $\|\pi \circ S(y_1) - S(y_2)\|_2 > c/2$ for every $\pi \neq \textnormal{Id}$. 

Since $M$ is continuous, for any sequence $y_n \to y \in \mathbb{B}_{\epsilon}(y_0)$, there are $\pi_n \in \Pi$, such that:
\begin{equation}
\lim_{n \to \infty} \pi_n\circ S(y_n) = S(y) 
\end{equation}
Therefore, by the above inequality, we address that for any large enough $n$, $\pi_n=\textnormal{Id}$. In particular, for any sequence $y_n \to y$, we have:
\begin{equation}
\lim_{n \to \infty} S(y_n) = S(y) 
\end{equation}
This implies that $S$ is continuous in any $y \in \mathbb{B}_{\epsilon}(y_0)$.

We note that  $\{\mathbb{B}_{\epsilon(y_0)}(y_0)\}_{y_0 \in \mathbb{Y}}$ is an open cover of $\mathbb{Y}$. In particular, since $\mathbb{Y}$ is compact, there is a finite sub-cover $\{C_i\}^{T}_{i=1}$ of $\mathbb{Y}$. In addition, we denote by $\{c_i\}^{T}_{i=1}$ the corresponding constants in Eq.~\ref{eq:c}. Next, we construct the continuous function $S$ inductively. We denote by $S_i$ the locally continuous function that corresponds to $C_i$. For a given pair of sets $C_{i_1}$ and $C_{i_2}$ that intersect, we would like to construct a continuous function over $C_{i_1} \cup C_{i_2}$. First, we would like to show that there is an isomorphism $\pi$, such that, $\pi\circ S_{i_2}(y) = S_{i_1}(y)$ for all $y \in C_{i_1} \cap C_{i_2}$. Assume by contradiction that there is no such $\pi$. Then, let $y_1 \in C_{i_1} \cap C_{i_2}$ and $\pi_1$, such that, $\pi_1\circ S_{i_2}(y_1) = S_{i_1}(y_1)$. We denote by $y_2 \in C_{i_1} \cap C_{i_2}$ a member, such that, $\pi_1\circ S_{i_2}(y_2) \neq S_{i_1}(y_2)$. Therefore, we take a isomorphism $\pi_2 \neq \pi_1$, that satisfies $\pi_2\circ S_{i_2}(y_2) = S_{i_1}(y_2)$. We note that:
\begin{equation}
\| \pi_1\circ S_{i_2}(y_1) - \pi_2\circ S_{i_2}(y_2) \|_2 > \max\{c_{i_1},c_{i_2}\}/2
\end{equation}
on the other hand:
\begin{equation}
\begin{aligned}
\| \pi_1\circ S_{i_2}(y_1) - \pi_2\circ S_{i_2}(y_2) \|_2 = \| S_{i_1}(y_1) - S_{i_1}(y_2)\|_2 < c_{i_1}/4
\end{aligned}
\end{equation}
in contradiction. 

Hence, let $\pi$ be such isomorphism. To construct a continuous function over $C_{i_1} \cup C_{i_2}$ we proceed as follows. First, we replace $S_{i_2}$ with $\pi \circ S_{i_2}$ and define a selection function $S_{i_1,i_2}$ over $C_{i_1} \cup C_{i_2}$ to be:
\begin{equation}
S_{i_1,i_2}(y) := 
\begin{cases}
       S_{i_1}(y) &\quad\text{if }, y \in C_{i_1}\\
       \pi \circ S_{i_2}(y) &\quad\text{if }, y \in C_{i_2} \\
     \end{cases}
\end{equation}
Since each one of the functions $S_{i_1}$ and $\pi \circ S_{i_2}$ are continuous, they conform on $C_{i_1} \cap C_{i_2}$ and the sets $C_{i_1}$ and $C_{i_2}$ are open, $S_{i_1,i_2}$ is continuous over $C_{i_1} \cup C_{i_2}$. We define a new cover $(\{C_i\}^{T}_{i=1} \setminus \{C_{i_1},C_{i_2}\}) \cup \{C_{i_1} \cup C_{i_2}\}$ of size $T-1$ with locally continuous selection functions $S'_1, \dots, S'_{T-1}$. By induction, we can construct $S$ over $\mathbb{Y}$.
\end{proof}

\begin{lemma}\label{lem:c2}
Let $\mathscr{f}$ be a class of neural networks with a continuous activation function $\sigma$. Let $\mathbb{Y}$ be a compact class of target functions. Assume that any $y \in \mathbb{Y}$ cannot be represented as a neural network with $\sigma$ activations. Then, 
\begin{equation}
\inf_{y \in \mathbb{Y}}\inf_{\theta \in \Theta_{\mathscr{f}}} \| f(\cdot;\theta) - y\|_{\infty} > c_2 
\end{equation}
for some constant $c_2>0$.
\end{lemma}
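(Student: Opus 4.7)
The plan is to reduce the double infimum to a continuous function on a compact set and then use the hypothesis $y \notin \mathscr{f}$ to rule out a zero minimum. Concretely, define
\begin{equation}
F(y) := \inf_{\theta \in \Theta_{\mathscr{f}}} \| f(\cdot;\theta) - y\|_{\infty},
\end{equation}
so that the quantity to be bounded below is $\inf_{y \in \mathbb{Y}} F(y)$. First I would invoke Assumption~1 (together with the paper's standing assumption that the parameters of best approximators lie in a bounded, hence compact, subset of $\Theta_{\mathscr{f}}$) to replace the $\inf$ over $\theta$ by a $\min$, so that $F(y) = \min_{\theta} \| f(\cdot;\theta) - y\|_{\infty}$ is well-defined and attained at some $\theta_y$ for every $y \in \mathbb{Y}$. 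Then Lemma~\ref{lem:cont} applies and yields that $F$ is continuous on $\mathbb{Y}$.

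Next I would use compactness of $\mathbb{Y}$: a continuous real-valued function on a compact set attains its infimum, so there exists $y^{\star} \in \mathbb{Y}$ with $F(y^{\star}) = \inf_{y \in \mathbb{Y}} F(y)$. The key step is to show $F(y^{\star}) > 0$. Suppose for contradiction that $F(y^{\star}) = 0$. Since the infimum in the definition of $F(y^{\star})$ is attained, there is some $\theta^{\star} \in \Theta_{\mathscr{f}}$ with $\|f(\cdot;\theta^{\star}) - y^{\star}\|_{\infty} = 0$, i.e.\ $f(\cdot;\theta^{\star}) = y^{\star}$ pointwise. This exhibits $y^{\star}$ as a neural network with $\sigma$ activations, contradicting the assumption that no member of $\mathbb{Y}$ admits such a representation. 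Hence $F(y^{\star}) > 0$, and we can set $c_2 := F(y^{\star})/2 > 0$ to obtain the strict inequality $\inf_{y \in \mathbb{Y}} F(y) = F(y^{\star}) > c_2$.

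The only subtlety, and the step most worth double-checking, is the availability of a \emph{minimizer} rather than merely a minimizing sequence in both the $\theta$-inf and the argument that $F(y^{\star})=0$ forces $y^{\star}\in \mathscr{f}$. This relies on Assumption~1 plus the boundedness convention on best-approximator parameters; absent this, one would have to argue by a Bolzano--Weierstrass extraction on $\theta_{n}$ achieving $\|f(\cdot;\theta_n) - y^{\star}\|_{\infty} \to 0$, use that $\Theta_{\mathscr{f}}$ restricted to approximately-optimal parameters is bounded to pass to a convergent subsequence $\theta_{n_k} \to \theta^{\star}$, and invoke continuity of $f$ in its parameters (Lemma~\ref{lem:cont1}) to conclude $f(\cdot;\theta^{\star}) = y^{\star}$. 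Either route produces the same contradiction and the same conclusion.
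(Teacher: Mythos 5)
Your proof is correct and follows essentially the same route as the paper: both rest on the continuity of $F(y)=\inf_{\theta}\|f(\cdot;\theta)-y\|_{\infty}$ (Lemma~\ref{lem:cont}), compactness of $\mathbb{Y}$, and attainment of the $\theta$-infimum (Assumption~1 with bounded best-approximator parameters) to turn $F(y^\star)=0$ into an exact representation of $y^\star$ as a network, contradicting the hypothesis. Your use of the extreme value theorem is just a packaged form of the paper's subsequence-extraction argument, and your explicit handling of the attainment subtlety is, if anything, slightly more careful than the paper's.
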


\begin{proof}
Assume by contradiction that:
\begin{equation}
\inf_{y \in \mathbb{Y}}\inf_{\theta \in \Theta_{\mathscr{f}}} \| f(\cdot;\theta) - y\|_{\infty} = 0  
\end{equation}
Then, there is a sequence $y_n \in \mathbb{Y}$, such that:
\begin{equation}
\inf_{\theta \in \Theta_{\mathscr{f}}} \| f(\cdot;\theta) - y_{n}\|_{\infty}  \to 0
\end{equation}
Since $\mathbb{Y}$ is compact, there exists a converging sub-sequence $y_{n_k} \to y_0 \in \mathbb{Y}$. By Lem.~\ref{lem:cont}, we have: 
\begin{equation}
\inf_{\theta \in \Theta_{\mathscr{f}}} \| f(\cdot;\theta) - y_{0} \|_{\infty} = 0 
\end{equation}
This is in contradiction to the assumption that any $y\in \mathbb{Y}$ cannot be represented as a neural network with $\sigma$ activations.
\end{proof}

\begin{lemma}\label{lem:bounded2}
Let $\mathscr{f}$ be a class of neural networks with a continuous activation function $\sigma$. Let $\mathbb{Y}$ be a compact class of target functions. Assume that any $y \in \mathbb{Y}$ cannot be represented as a neural network with $\sigma$ activations. Then, there exists a closed ball $\mathbb{B}$ around $0$ in the Euclidean space $\mathbb{R}^{N_{\mathscr{f}}}$, such that:
\begin{equation}
\min_{\theta \in \mathbb{B}} \|f(\cdot;\theta) - y\|_{\infty} \leq 2\inf_{\theta \in \Theta_{\mathscr{f}}}\|f(\cdot;\theta) - y\|_{\infty}
\end{equation}
\end{lemma}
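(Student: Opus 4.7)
The plan is to argue by contradiction using compactness of $\mathbb{Y}$ together with the fact that $F(y) := \inf_{\theta \in \Theta_{\mathscr{f}}} \|f(\cdot;\theta) - y\|_{\infty}$ is continuous and bounded away from zero on $\mathbb{Y}$. Suppose that no closed ball works; then for each $n \in \mathbb{N}$ there exists some $y_n \in \mathbb{Y}$ with
$$\min_{\|\theta\|_2 \leq n} \|f(\cdot;\theta) - y_n\|_{\infty} > 2 F(y_n).$$
By compactness of $\mathbb{Y}$ in the sup-norm topology, extract a convergent subsequence $y_{n_k} \to y_0 \in \mathbb{Y}$.

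Next, I would invoke Assumption~1 to obtain a unique best approximator $f(\cdot;\theta^*_0)$ for the limit $y_0$, with $\|f(\cdot;\theta^*_0) - y_0\|_{\infty} = F(y_0)$, and set $B_0 := \|\theta^*_0\|_2$. Using $\theta^*_0$ as a trial parameter for the nearby $y_{n_k}$, the triangle inequality gives
$$\|f(\cdot;\theta^*_0) - y_{n_k}\|_{\infty} \leq F(y_0) + \|y_0 - y_{n_k}\|_{\infty}.$$
Continuity of $F$ (Lem.~\ref{lem:cont}) yields $F(y_{n_k}) \to F(y_0)$, while the hypothesis that no $y \in \mathbb{Y}$ is representable as a neural network with $\sigma$ activations allows me to apply Lem.~\ref{lem:c2} to conclude $F(y_0) \geq c_2 > 0$. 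Consequently $2 F(y_{n_k}) - F(y_0) \to F(y_0) > 0$, so for all sufficiently large $k$ we have $F(y_0) + \|y_0 - y_{n_k}\|_{\infty} < 2 F(y_{n_k})$.

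Finally, for $k$ large enough we also have $n_k \geq B_0$, so $\theta^*_0$ lies in the closed ball of radius $n_k$, and therefore
$$\min_{\|\theta\|_2 \leq n_k} \|f(\cdot;\theta) - y_{n_k}\|_{\infty} \leq \|f(\cdot;\theta^*_0) - y_{n_k}\|_{\infty} < 2 F(y_{n_k}),$$
contradicting the defining property of $y_{n_k}$. The main obstacle is ensuring that the triangle-inequality slack $\|y_0 - y_{n_k}\|_{\infty}$ can be absorbed into the factor-$2$ gap; this is precisely where the strict positivity of $F$ from Lem.~\ref{lem:c2} is essential, since without it the gap $2F(y_{n_k}) - F(y_0)$ would not be uniformly positive in the tail. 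Once that positivity is in hand, the rest is a straightforward compactness-plus-triangle-inequality argument.
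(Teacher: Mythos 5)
Your proof is correct, but it follows a different route than the paper's. The paper argues directly: it uses the continuity of the best-approximator map $y \mapsto f_y$ (Lem.~\ref{lem:ball}, which in turn relies on the uniqueness in Assumption~\ref{assmp:apprx}), covers the compact set $\mathbb{Y}$ by finitely many $\delta$-balls on which $f_y$ moves by at most $c_2/2$, and takes $\mathbb{B}$ to be any closed ball containing the finitely many optimal parameter vectors at the cover centers; the factor $2$ comes from absorbing the $c_2/2$ slack using $\|f_{y'}-y'\|_\infty \geq c_2$. You instead argue by contradiction with balls of radius $n$, pass to a convergent subsequence $y_{n_k}\to y_0$, and use a single optimal parameter $\theta_0^*$ for the limit point as a trial point, absorbing the vanishing perturbation $\|y_0-y_{n_k}\|_\infty$ into the gap $2F(y_{n_k})-F(y_0)\to F(y_0)\geq c_2>0$ via the continuity of the value function $F$ (Lem.~\ref{lem:cont}) and Lem.~\ref{lem:c2}. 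The ingredients you need are strictly weaker: you never invoke Lem.~\ref{lem:ball}, you need only existence (not uniqueness or continuity) of a best approximator at $y_0$ — and even that could be replaced by an $F(y_0)/2$-near-minimizer, dropping Assumption~\ref{assmp:apprx} entirely — and continuity of the optimal value rather than of the optimizer. What you give up is constructiveness: the paper's finite-subcover argument exhibits the ball as one containing an explicit finite set of optimal parameters, whereas your argument only certifies that some sufficiently large radius works. Both proofs exploit compactness of $\mathbb{Y}$ and the uniform positivity from Lem.~\ref{lem:c2} in the same way to produce the factor $2$, and either ball serves the later uses (e.g., in Lem.~\ref{lem:c1}).
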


\begin{proof}
Let $c_2>0$ be the constant from Lem.~\ref{lem:c2}. By Lem.~\ref{lem:c2} and Lem.~\ref{lem:cont}, $f_y$ is continuous over the compact set $\mathbb{Y}$. Therefore, there is a small enough $\delta>0$, such that, for any $y_1,y_2 \in \mathbb{Y}$, such that, $\|y_1-y_2\|_{\infty}<\delta$, we have: $\|f_{y_1} - f_{y_2}\|_{\infty}<c_2/2$. For each $y \in \mathbb{Y}$ we define $B(y) := \{y' \;\vert\; \|y-y'\|_{\infty} < \min\{c_2/2,\delta\}\}$. The sets $\{B(y)\}_{y\in \mathbb{Y}}$ form an open cover to $\mathbb{Y}$. Since $\mathbb{Y}$ is a compact set, it has a finite sub-cover $\{B(y_1),\dots,B(y_k)\}$. For each $y' \in B(y_i)$, we have:
\begin{equation}
\begin{aligned}
\|f_{y_i} - y'\|_{\infty} &\leq \|f_{y_i} - f_{y'}\|_{\infty} + \|f_{y'} - y'\|_{\infty} \\
&\leq c_2/2 + \|f_{y'} - y'\|_{\infty}  \\
&\leq 2\|f_{y'} - y'\|_{\infty} 
\end{aligned}
\end{equation}
Therefore, if we take $H = \{\theta_i\}^{k}_{i=1}$ for $\theta_i$, such that, $f(\cdot;\theta_i)=f_{y_i}$, we have:
\begin{equation}
\min_{i\in [n]} \|f(\cdot;\theta_i) - y\|_{\infty} \leq 2\inf_{\theta \in \Theta_{\mathscr{f}}}\|f(\cdot;\theta) - y\|_{\infty}
\end{equation}
In particular, if we take $\mathbb{B}$ to be the closed ball around $0$ that contains $H$, we have the desired.
\end{proof}

\begin{lemma}\label{lem:c1}
Let $\sigma:\mathbb{R} \to \mathbb{R}$ be a $L$-Lipschitz continuous activation function. Let $\mathscr{f}$ be a class of neural networks with $\sigma$ activations. Let $\mathbb{Y}$ be a compact class of normal target functions with respect to $\mathscr{f}$. Let $\rho$ be an activation function, such that, $\|\sigma-\rho\|_{\infty} < \delta$. Let $\mathbb{B} = \mathbb{B}_1 \cup \mathbb{B}_2$ be the closed ball around $0$, where $\mathbb{B}_1$ is be the closed ball in the proof of Lem.~\ref{lem:ball} and $\mathbb{B}_2$ is the ball from Lem.~\ref{lem:bounded2}. In addition, let $\mathscr{g}$ be the class of neural networks of the same architecture as $\mathscr{f}$ except the activations are $\rho$. Then, for any $\theta \in \mathbb{B}$, we have:
\begin{equation}\label{eq:activations}
\| f(\cdot;\theta) - g(\cdot;\theta)\|_{\infty} \leq c_1 \cdot \delta
\end{equation}
for some constant $c_1>0$ independent of $\delta$.
\end{lemma}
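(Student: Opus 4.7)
The plan is to control $\|f(\cdot;\theta) - g(\cdot;\theta)\|_\infty$ by a layer-by-layer induction, since $f$ and $g$ share the exact same weights and biases $\theta=[\bfem{W},\bfem{b}]$ and differ only in the activation (pointwise $\sigma$ vs.\ $\rho$ with $\|\sigma-\rho\|_\infty\leq\delta$). First, I would set up matching intermediate representations. Define $h^1 := \sigma(W^1 x + b^1)$ and $h^i := \sigma(W^i h^{i-1} + b^i)$ for $i=2,\dots,k-1$, so that $f(x;\theta)=W^k h^{k-1}$; analogously define $\tilde{h}^i$ with $\rho$ in place of $\sigma$, so $g(x;\theta)=W^k \tilde{h}^{k-1}$.

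Next, I would derive the basic recursion. By adding and subtracting $\sigma(W^i \tilde{h}^{i-1}+b^i)$ and using the $L$-Lipschitzness of $\sigma$ together with $\|\sigma-\rho\|_\infty\leq\delta$, I obtain
\begin{equation}
\|h^i - \tilde{h}^{i}\|_1 \;\leq\; L\,\|W^i\|_1\,\|h^{i-1}-\tilde{h}^{i-1}\|_1 \;+\; h_{i+1}\,\delta,
\end{equation}
with the convention $h^0-\tilde{h}^0=0$, since the first pre-activation coincides for both networks. Unrolling this recursion yields
\begin{equation}
\|h^{k-1}-\tilde{h}^{k-1}\|_1 \;\leq\; \delta\sum_{j=1}^{k-1} h_{j+1}\prod_{i=j+1}^{k-1} L\,\|W^i\|_1,
\end{equation}
and a final multiplication by $W^k$ gives
\begin{equation}
\|f(x;\theta)-g(x;\theta)\|_1 \;\leq\; \|W^k\|_1\,\|h^{k-1}-\tilde{h}^{k-1}\|_1.
\end{equation}

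Finally, I would use that $\theta\in\mathbb{B}$ is a \emph{fixed} closed ball in $\mathbb{R}^{N_{\mathscr f}}$, so all the matrix norms $\|W^i\|_1$ (bounded by the $L_1/L_2$-equivalence on finite dimensions times the radius of $\mathbb{B}$) and all layer widths $h_i$ are uniformly bounded by constants depending only on the architecture of $\mathscr f$, the Lipschitz constant $L$, and the radius of $\mathbb{B}$. Since the bound above is uniform in $x$, taking the supremum gives $\|f(\cdot;\theta)-g(\cdot;\theta)\|_\infty \leq c_1\cdot\delta$ for a constant $c_1$ that does not depend on $\delta$.

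There is no real obstacle here; the only care required is bookkeeping the dimension factor $h_{i+1}$ that arises because the uniform bound $\|\sigma-\rho\|_\infty\leq\delta$ gives an $L_1$-norm bound of $h_{i+1}\delta$ on the vector $\sigma(z)-\rho(z)\in\mathbb{R}^{h_{i+1}}$, and treating the input layer (no previous activation) and output layer (no post-activation) as the base case and the terminal step of the induction. A cleaner statement can be obtained using the spectral complexity $\mathcal{C}(\cdot)$ from the paper, but the crude uniform bound suffices to extract the existence of the constant $c_1$.
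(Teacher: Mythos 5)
Your proof is correct and follows essentially the same route as the paper: a layer-by-layer induction that adds and subtracts $\sigma$ applied to the other network's pre-activation, uses the $L$-Lipschitzness of $\sigma$ and the uniform bound $\|\sigma-\rho\|_{\infty}<\delta$ to get the recursion with the width factor $h_{i+1}\delta$, and then invokes the boundedness of $\theta\in\mathbb{B}$ to make all $\|W^i\|_1$ factors uniform constants. The only cosmetic difference is that you unroll the recursion explicitly and handle the final (activation-free) layer as a separate terminal step, which the paper's proof leaves implicit.
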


\begin{proof}
We prove by induction that for any input $x \in \mathcal{X}$ the outputs the $i$'th layer of $f(\cdot;\theta)$ and $g(\cdot;\theta)$ are $\mathcal{O}(\delta)$-close to each other. 

{\bf Base case:} we note that:
\begin{equation}
\begin{aligned}
\| \sigma(W^1 \cdot x + b^1) - \rho(W^1 \cdot x + b^1) \|_1 
&\leq \sum^{h_2}_{i=1} \Big\vert \sigma(\langle W^1_i, x\rangle + b^1_i) - \rho(\langle W^1_i, x\rangle + b^1_i) \Big\vert \\
&\leq h_2 \cdot \delta =: c^1 \cdot \delta
\end{aligned}
\end{equation}
Hence, the first layer's activations are $\mathcal{O}(\delta)$-close to each other. 

{\bf Induction step:} assume that for any two vectors of activations $x_1$ and $x_2$ in the $i$'th layer of the neural networks, we have:
\begin{equation}
\|x_1 - x_2\|_1 \leq c^i \cdot \delta
\end{equation}
By the triangle inequality:
\begin{equation}
\begin{aligned}
&\|\sigma(W^{i+1} \cdot x_1 + b^{i+1}) - \rho(W^{i+1} x_2 + b^{i+1})\|_1 \\ 
\leq& \|\sigma(W^{i+1} \cdot x_1 + b^{i+1}) - \sigma(W^{i+1} x_2 + b^{i+1}) \|_1 \\
&+ \|\sigma(W^{i+1} x_2 + b^{i+1}) - \rho(W^{i+1} x_2 + b^{i+1}) \|_1 \\
\leq& L \cdot \| (W^{i+1} \cdot x_1 + b^{i+1}) - (W^{i+1} x_2 + b^{i+1}) \|_1 \\
&+ \sum^{h_{i+2}}_{j=1} \vert \sigma(\langle W^{i+1}_j, x\rangle + b^{i+1}_j) - \rho(\langle W^{i+1}_j, x\rangle + b^{i+1}_j) \vert \\
=& L \cdot \| W^{i+1} (x_1 - x_2) \|_1 + h_{i+2}\cdot \delta \\
\leq& L \cdot \|W^{i+1}\|_1 \cdot \| x_1 - x_2 \|_1 + h_{i+2} \cdot \delta \\
\leq& L \cdot \|W^{i+1}\|_1 \cdot c^i \cdot \delta + h_{i+2} \cdot \delta \\
\leq& (h_{i+2} + L \cdot \|W^{i+1}\|_1 \cdot c^i ) \cdot \delta \\
\end{aligned}
\end{equation}
Since $\theta \in \mathbb{B}$ is bounded, each $\|W^{i+1}\|_1$ is bounded (for all $i \leq k$ and $\theta$). Hence, Eq.~\ref{eq:activations} holds for some constant $c_1>0$ independent of $\delta$.
\end{proof}

\begin{lemma}\label{lem:c3}
Let $\sigma:\mathbb{R} \to \mathbb{R}$ be a $L$-Lipschitz continuous activation function. Let $\mathscr{f}$ be a class of neural networks with $\sigma$ activations. Let $\mathbb{Y}$ be a compact class of target functions. Assume that any $y \in \mathbb{Y}$ cannot be represented as a neural network with $\sigma$ activations. Let $\rho$ be an activation function, such that, $\|\sigma-\rho\|_{\infty} < \delta$. Let $\mathbb{B}$ be the closed ball from Lem.~\ref{lem:c1}. In addition, let $\mathscr{g}$ be the class of neural networks of the same architecture as $\mathscr{f}$ except the activations are $\rho$. Then, for any $y \in \mathbb{Y}$, we have:
\begin{equation}
\Big\vert \min_{\theta \in \mathbb{B}} \| f(\cdot;\theta) - y\|_{\infty} - \min_{\theta \in \mathbb{B}} \| g(\cdot;\theta) - y\|_{\infty} \Big\vert \leq c_1 \cdot \delta 
\end{equation}
for $c_1$ from Lem.~\ref{lem:c1}.
\end{lemma}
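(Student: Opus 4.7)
The plan is to derive this bound directly from Lemma~\ref{lem:c1} via a standard ``perturbation of the minimum'' argument using the triangle inequality. The key observation is that Lemma~\ref{lem:c1} already establishes a uniform (in $\theta \in \mathbb{B}$) closeness between the parametric families $f(\cdot;\theta)$ and $g(\cdot;\theta)$, which transfers almost verbatim to the closeness of their best-in-class approximation errors.

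Concretely, I would let $\theta_f^\star$ and $\theta_g^\star$ be minimizers in $\mathbb{B}$ of $\|f(\cdot;\theta)-y\|_\infty$ and $\|g(\cdot;\theta)-y\|_\infty$, respectively (these exist since $\mathbb{B}$ is compact and the involved norms are continuous in $\theta$ by Lem.~\ref{lem:cont1}). Then by the triangle inequality followed by Lem.~\ref{lem:c1}:
\begin{equation}
\min_{\theta \in \mathbb{B}}\|f(\cdot;\theta)-y\|_\infty \leq \|f(\cdot;\theta_g^\star)-y\|_\infty \leq \|f(\cdot;\theta_g^\star)-g(\cdot;\theta_g^\star)\|_\infty + \|g(\cdot;\theta_g^\star)-y\|_\infty \leq c_1\delta + \min_{\theta \in \mathbb{B}}\|g(\cdot;\theta)-y\|_\infty.
\end{equation}
The symmetric bound, interchanging the roles of $f$ and $g$, follows the same way using $\theta_f^\star$ as a test point, giving
\begin{equation}
\min_{\theta \in \mathbb{B}}\|g(\cdot;\theta)-y\|_\infty \leq c_1\delta + \min_{\theta \in \mathbb{B}}\|f(\cdot;\theta)-y\|_\infty.
\end{equation}
Combining the two inequalities yields the desired absolute-value bound by $c_1\delta$.

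There is essentially no obstacle here: once Lem.~\ref{lem:c1} is in hand, this is a routine ``min-is-Lipschitz in the uniform perturbation'' argument. The only minor point to verify is that the minimizers actually exist, but this is immediate from compactness of $\mathbb{B}$ combined with the continuity of $\theta \mapsto \|f(\cdot;\theta)-y\|_\infty$ and $\theta \mapsto \|g(\cdot;\theta)-y\|_\infty$ (both $\sigma$ and $\rho$ are continuous activations, so the parametric functions depend continuously on $\theta$, and continuity of the sup-norm distance to $y$ follows from Lem.~\ref{lem:cont1}). The compactness assumption on $\mathbb{Y}$ and the requirement that no $y\in \mathbb{Y}$ is exactly representable by $\mathscr{f}$ are not needed for the inequality itself, but they justify invoking Lem.~\ref{lem:c1} with its particular choice of $\mathbb{B}$.
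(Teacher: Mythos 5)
Your proposal is correct and follows essentially the same route as the paper: both reduce the claim to the uniform bound of Lem.~\ref{lem:c1} and then pass to the minima over $\mathbb{B}$ via the triangle inequality in both directions (the paper minimizes the pointwise-in-$\theta$ inequality, you instantiate at the other family's minimizer, which is the same argument). No gaps.
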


\begin{proof}
By Lem.~\ref{lem:c1}, for all $\theta \in \mathbb{B}$, we have:
\begin{equation}
\| f(\cdot;\theta) - y\|_{\infty} \leq \| g(\cdot;\theta) - y\|_{\infty} + c_1 \cdot \delta 
\end{equation}
In particular,
\begin{equation}
\min_{\theta \in \mathbb{B}} \| f(\cdot;\theta) - y\|_{\infty} \leq \min_{\theta \in \mathbb{B}}\| g(\cdot;\theta) - y\|_{\infty} + c_1 \cdot \delta 
\end{equation}
By a similar argument, we also have:
\begin{equation}
\min_{\theta \in \mathbb{B}}\| g(\cdot;\theta) - y\|_{\infty} \leq \min_{\theta \in \mathbb{B}} \| f(\cdot;\theta) - y\|_{\infty} + c_1 \cdot \delta 
\end{equation}
Hence,
\begin{equation}
\Big\vert \min_{\theta \in \mathbb{B}} \| f(\cdot;\theta) - y\|_{\infty} - \min_{\theta \in \mathbb{B}} \| g(\cdot;\theta) - y\|_{\infty} \Big\vert \leq c_1 \cdot \delta 
\end{equation}
\end{proof}

\begin{lemma}\label{lem:selection2}
Let $\sigma:\mathbb{R} \to \mathbb{R}$ be a $L$-Lipschitz continuous activation function. Let $\mathscr{f}$ be a class of neural networks with $\sigma$ activations. Let $\mathbb{Y}$ be a compact set of target functions. Assume that any $y \in \mathbb{Y}$ cannot be represented as a neural network with $\sigma$ activations. Then, for every $\hat{\epsilon}>0$ there is a continuous selector $S:\mathbb{Y} \to \Theta_{\mathscr{f}}$, such that, for all $y \in \mathbb{Y}$, we have:
\begin{equation}
\|f(\cdot ;S(y)) - y \|_{\infty} \leq 2\inf_{\theta \in \Theta_{\mathscr{f}}} \|f(\cdot ;\theta) - y \|_{\infty} + \hat{\epsilon}
\end{equation}
\end{lemma}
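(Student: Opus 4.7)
The plan is to bootstrap the continuous selector for $\mathscr{f}$ from the one already constructed in Lem.~\ref{lem:selection} for an identifiability-inducing perturbation $\mathscr{g}$ of $\mathscr{f}$, controlling the error introduced by the perturbation via Lem.~\ref{lem:c1} and~\ref{lem:c3}. Concretely, given $\hat{\epsilon}>0$, I will fix a parameter $\delta>0$ (to be tuned at the end) and invoke Lem.~\ref{lem:vlacic} to obtain an identifiability-inducing activation $\rho$ with $\|\sigma-\rho\|_{\infty}<\delta$. Let $\mathscr{g}$ be the class with the same architecture as $\mathscr{f}$ but with activation $\rho$. Let $\mathbb{B}_1$ be the ball from Lem.~\ref{lem:ball} applied to $\mathscr{g}$, $\mathbb{B}_2$ the ball from Lem.~\ref{lem:bounded2} applied to $\mathscr{f}$, and $\mathbb{B}=\mathbb{B}_1\cup\mathbb{B}_2$ as in Lem.~\ref{lem:c1}.

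Next I verify that $\mathbb{Y}$ consists of functions normal with respect to $\mathscr{g}$. By hypothesis no $y\in\mathbb{Y}$ is representable in $\mathscr{f}$, so Lem.~\ref{lem:c2} gives a constant $c_2>0$ with $\inf_{\theta}\|f(\cdot;\theta)-y\|_{\infty}\geq c_2$ for every $y\in\mathbb{Y}$. Together with Lem.~\ref{lem:c3}, this forces $\inf_{\theta}\|g(\cdot;\theta)-y\|_{\infty}\geq c_2-c_1\delta>0$ provided $\delta<c_2/(2c_1)$, so no $y\in\mathbb{Y}$ lies in $\mathscr{g}$ either. Assumption~1 gives the unique best approximator in $\mathscr{g}$, and Lem.~\ref{lem:normal} then guarantees it is normal. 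Hence Lem.~\ref{lem:selection} applies to $\mathbb{Y}$ and $\mathscr{g}$ on $\mathbb{B}_1$, producing a continuous map $S:\mathbb{Y}\to\mathbb{B}_1\subseteq\Theta_{\mathscr{f}}$ with $g(\cdot;S(y))\in M[y;\mathscr{g}]$, i.e.\ $\|g(\cdot;S(y))-y\|_{\infty}=\inf_{\theta}\|g(\cdot;\theta)-y\|_{\infty}$.

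Now I convert this into an approximation bound in $\mathscr{f}$ by chaining inequalities on $\mathbb{B}$. Since $S(y)\in\mathbb{B}_1\subseteq\mathbb{B}$, Lem.~\ref{lem:c1} gives
\begin{equation}
\|f(\cdot;S(y))-y\|_{\infty}\leq \|g(\cdot;S(y))-y\|_{\infty}+c_1\delta = \inf_{\theta}\|g(\cdot;\theta)-y\|_{\infty}+c_1\delta.
\end{equation}
Bounding the infimum by the minimum over $\mathbb{B}_2$ and applying Lem.~\ref{lem:c3} on $\mathbb{B}_2$ together with Lem.~\ref{lem:bounded2}, I get
\begin{equation}
\inf_{\theta}\|g(\cdot;\theta)-y\|_{\infty}\leq \min_{\theta\in\mathbb{B}_2}\|g(\cdot;\theta)-y\|_{\infty}\leq \min_{\theta\in\mathbb{B}_2}\|f(\cdot;\theta)-y\|_{\infty}+c_1\delta\leq 2\inf_{\theta}\|f(\cdot;\theta)-y\|_{\infty}+c_1\delta.
\end{equation}
Combining, $\|f(\cdot;S(y))-y\|_{\infty}\leq 2\inf_{\theta}\|f(\cdot;\theta)-y\|_{\infty}+2c_1\delta$, so choosing $\delta\leq\min\{c_2/(2c_1),\hat{\epsilon}/(2c_1)\}$ concludes the proof.

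The bulk of the technical work has already been packaged into Lem.~\ref{lem:selection}, Lem.~\ref{lem:c1}, Lem.~\ref{lem:c3}, Lem.~\ref{lem:bounded2}, and Lem.~\ref{lem:vlacic}; the only delicate point in this assembly is the bookkeeping that guarantees all of these hypotheses hold simultaneously for the same ball $\mathbb{B}$ and the same activation $\rho$, and that the selector built on $\mathbb{B}_1$ can still be compared against the $\mathscr{f}$-infimum attained (up to a factor of $2$) on $\mathbb{B}_2$. The main conceptual obstacle is justifying that the best approximator of every $y\in\mathbb{Y}$ in the perturbed class $\mathscr{g}$ is normal, which is why the non-representability assumption together with Lem.~\ref{lem:c2}--Lem.~\ref{lem:c3} is essential for applying Lem.~\ref{lem:normal} uniformly over $\mathbb{Y}$.
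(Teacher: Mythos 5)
Your proposal is correct and follows essentially the same route as the paper's proof: perturb $\sigma$ to an identifiability-inducing $\rho$ via Lem.~\ref{lem:vlacic}, use Lem.~\ref{lem:c2} together with Lems.~\ref{lem:c1}/\ref{lem:c3} to get normality and invoke Lem.~\ref{lem:selection} for the perturbed class, then transfer the selector back to $\mathscr{f}$ with the $c_1\delta$ error bounds and obtain the factor $2$ from Lem.~\ref{lem:bounded2}, finally tuning $\delta$ against $\hat{\epsilon}$. The only differences are minor bookkeeping choices (which ball the selector is defined on and the order of the triangle inequalities), and your treatment of the infimum-over-$\Theta$ versus minimum-over-$\mathbb{B}$ distinction is at the same level of rigor as the paper's own argument.
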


\begin{proof}
By Lem.~\ref{lem:vlacic}, there exists a meromorphic function $\rho : D \to \mathbb{C}$, $\mathbb{R} \subset D$, $\rho(\mathbb{R}) \subset \mathbb{R}$ that is an identifiability inducing function, such that, $\|\sigma-\rho\|_{\infty} < \frac{1}{2c_2}\min(\hat{\epsilon},c_1) =: \delta$, where $c_1$ and $c_2$ are the constants in Lems.~\ref{lem:c1} and~\ref{lem:c2}. Since $\rho(\mathbb{R}) \subset \mathbb{R}$, and it is a meromorphic over $D$, it is continuous over $\mathbb{R}$ (the poles of $\rho$ are not in $\mathbb{R}$). We note that by Lems.~\ref{lem:c1} and~\ref{lem:c3}, for any $y \in \mathbb{Y}$, we have:
\begin{equation}
\min_{\theta \in \mathbb{B}} \| g(\cdot;\theta) - y \|_{\infty} > c_2 - c_1 \cdot \delta > 0
\end{equation}
where $\mathbb{B}$ is the ball from Lem.~\ref{lem:c1}. Therefore, by Lem.~\ref{lem:normal}, each $y\in \mathbb{Y}$ is normal with respect to the class $\mathscr{g}$. Hence, by Thm.~\ref{lem:selection}, there is a continuous selector $S:\mathbb{Y} \to \mathbb{B}$, such that,
\begin{equation}
\|g(\cdot ;S(y)) - y \|_{\infty} = \min_{\theta \in \mathbb{B}}\|g(\cdot ;\theta) - y \|_{\infty}
\end{equation}
By Lem.~\ref{lem:c3}, we have:
\begin{equation}\label{eq:c3}
\Big\vert \min_{\theta \in \mathbb{B}} \| f(\cdot;\theta) - y\|_{\infty} - \|g(\cdot ;S(y)) - y \|_{\infty} \Big\vert \leq c_1 \cdot \delta 
\end{equation}
By the triangle inequality:
\begin{equation}
\begin{aligned}
&\Big\vert \min_{\theta \in \mathbb{B}} \| f(\cdot;\theta) - y\|_{\infty} - \|f(\cdot ;S(y)) - y \|_{\infty} \Big\vert \\
\leq&  \Big\vert \|f(\cdot ;S(y)) - y \|_{\infty} - \|g(\cdot ;S(y)) - y \|_{\infty} \Big\vert + \Big\vert \min_{\theta \in \mathbb{B}} \| f(\cdot;\theta) - y\|_{\infty} - \|g(\cdot ;S(y)) - y \|_{\infty} \Big\vert \\
\end{aligned}
\end{equation}
By Eq.~\ref{eq:c3} and Lem.~\ref{lem:c1}, we have:
\begin{equation}
\Big\vert \min_{\theta \in \mathbb{B}} \| f(\cdot;\theta) - y\|_{\infty} - \|f(\cdot ;S(y)) - y \|_{\infty} \Big\vert \leq 2c_1 \cdot \delta
\end{equation}
Since $\delta < \hat{\epsilon}/2c_2$, we obtain the desired inequality:
\begin{equation}
\begin{aligned}
\|f(\cdot ;S(y)) - y \|_{\infty} &\leq \min_{\theta \in \mathbb{B}} \| f(\cdot;\theta) - y\|_{\infty} + \hat{\epsilon}\\
&\leq 2\min_{\theta \in \Theta_{\mathscr{f}}} \| f(\cdot;\theta) - y\|_{\infty} + \hat{\epsilon}\\
\end{aligned}
\end{equation}
\end{proof}

\subsection{Proof of Thm.~\ref{thm:generallower}}

Before we provide a formal statement of the proof, we introduce an informal outline of it.

\paragraph{Proof sketch of Thm.~\ref{thm:generallower}} In Lem.~\ref{lem:selection2} we showed that for a compact class $\mathbb{Y}$ of target functions that cannot be represented as neural networks with $\sigma$ activations, there is a continuous selector $S(y)$ of parameters, such that,
\begin{equation}
\|f(\cdot;S(y)) - y\|_{\infty} \leq 3 \inf_{\theta \in \Theta_{\mathscr{f}}}\| f(\cdot;\theta)-y\|_{\infty}
\end{equation}
Therefore, in this case, we have: $d_N(\mathscr{f};\mathbb{Y}) = \Theta(\tilde{d}_N(\mathscr{f};\mathbb{Y}))$. As a next step, we would like to apply this claim on $\mathbb{Y} := \mathcal{W}_{r,m}$ and apply the lower bound of $\tilde{d}_N(\mathscr{f};\mathcal{W}_{r,m}) = \Omega(N^{-r/m})$ to lower bound $d_N(\mathscr{f};\mathbb{Y})$. However, both of the classes $\mathscr{f}$ and $\mathbb{Y}$ include constant functions, and therefore, we have: $\mathscr{f} \cap \mathbb{Y} \neq \emptyset$. Hence, we are unable to assume that any $y \in \mathbb{Y}$ cannot be represented as a neural network with $\sigma$ activations. 

To solve this issue, we consider a ``wide'' compact subset $\mathbb{Y}' = \mathcal{W}^{\gamma}_{r,m}$ of $\mathcal{W}_{r,m}$ that does not include any constant functions, but still satisfies $\tilde{d}_N(\mathscr{f};\mathcal{W}^{\gamma}_{r,m}) = \Omega(N^{-r/m})$. Then, assuming that any non-constant function $y \in \mathcal{W}_{r,m}$ cannot be represented as a neural network with $\sigma$ activations, implies that any $y \in \mathcal{W}^{\gamma}_{r,m}$ cannot be represented as a neural network with $\sigma$ activations. In particular, by Lem.~\ref{lem:selection2}, we obtain the desired lower bound: $d_N(\mathscr{f};\mathcal{W}_{r,m})\geq d_N(\mathscr{f};\mathcal{W}^{\gamma}_{r,m}) = \Theta(\tilde{d}_N(\mathscr{f};\mathcal{W}^{\gamma}_{r,m})) = \Omega(N^{-r/m})$.

For this purpose, we provide some technical notations. For a given function $f:[-1,1]^m \to \mathbb{R}$, we denote:
\begin{equation}
\|h\|^{s,*}_{r} := \sum_{1 \leq \vert \mathbf{k}\vert_1 \leq r} \|D^{\mathbf{k}} h\|_{\infty}
\end{equation}
In addition, for any $0\leq \gamma_1 < \gamma_2 < \infty$, we define:
\begin{equation}
\mathcal{W}^{\gamma_1,\gamma_2}_{r,m} := \left\{f:[-1,1]^m\to \mathbb{R} \mid \textnormal{$f$ is $r$-smooth and } \|f\|^s_{r} \leq \gamma_2 \textnormal{ and } \|f\|^{s,*}_{r} \geq \gamma_1 \right\}
\end{equation}
Specifically, we denote, $\mathcal{W}^{\gamma_1}_{r,m}$ when $\gamma_2=1$. We notice that this set is compact, since it is closed and subset to the compact set $\mathcal{W}_{r,m}$ (see~\cite{sobolev}).

Next, we would like to produce a lower bound for the $N$-width of $\mathcal{W}^{\gamma}_{r,m}$. In~\cite{devore,Pinkus1985NWidthsIA}, in order to achieve a lower bound for the $N$-width of $\mathcal{W}_{r,m}$, two steps are taken. First, they prove that for any $K \subset L^{\infty}([-1,1]^m)$, we have: $\tilde{d}_{N}(K) \geq b_{N}(K)$. Here, $b_N(K) :=\sup_{X_{N+1}} \sup\left\{\rho \;\vert\; \rho \cdot U(X_{N+1}) \subset K \right\}$ is the Bernstein $N$-width of $K$. The supremum is taken over all $N + 1$ dimensional linear subspaces $X_{N+1}$ of $L^{\infty}([-1,1]^m)$ and $U(X) := \{f \in X \;\vert\; \|f\|_{\infty} \leq 1\}$ stands for the unit ball of $X$. As a second step, they show that the Bernstein $N$-width of $\mathcal{W}_{r,m}$ is larger than $\Omega(N^{-r/m})$. 

Unfortunately, in the general case, Bernstein's $N$-width is very limited in its ability to estimate the nonlinear $N$-width. When considering a set $K$ that is not centered around $0$, Bernstein's $N$-width can be arbitrarily smaller than the actual nonlinear $N$-width of $K$. For example, if all of the members of $K$ are distant from $0$, then, the Bernstein's $N$-width of $K$ is zero but the nonlinear $N$-width of $K$ that might be large. Specifically, the Bernstein $N$-width of $\mathcal{W}^{\gamma}_{r,m}$ is small even though intuitively, this set should have a similar width as the standard Sobolev space (at least for a small enough $\gamma>0$). Therefore, for the purpose of measuring the width of $\mathcal{W}^{\gamma}_{r,m}$, we define the extended Bernstein $N$-width of a set $K$,
\begin{equation}
\tilde{b}_{N}(K) := \sup_{X_{N+1}} \sup\left\{\rho \;\big\vert\;\exists \beta<\rho \textnormal{ s.t } \rho \cdot U(X_{N+1}) \setminus \beta \cdot U(X_{N+1}) \subset K \right\}
\end{equation}
with the supremum taken over all $N + 1$ dimensional linear subspaces $X_{N+1}$ of $L^{\infty}([-1,1]^m)$. 

The following lemma extends Lem.~3.1 in~\cite{devore} and shows that the extended Bernstein $N$-width of a set $K$ is a lower bound of the nonlinear $N$-width of $K$.
\begin{lemma}
Let $K \subset L^{\infty}([-1,1]^m)$. Then, $\tilde{d}_{N}(K) \geq \tilde{b}_{N}(K)$.
\end{lemma}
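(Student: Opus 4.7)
The plan is to adapt the classical argument that $\tilde{d}_N(K) \geq b_N(K)$ (Lem.~3.1 of~\cite{devore}) to the annular condition $\rho U(X_{N+1})\setminus \beta U(X_{N+1}) \subset K$. The central observation is that the classical Borsuk--Ulam argument only needs the boundary sphere $\partial(\rho U(X_{N+1}))$ to lie in $K$, not the full ball. Since $\beta < \rho$, the sphere $\{y \in X_{N+1} : \|y\|_\infty = \rho\}$ lies entirely in $\rho U(X_{N+1}) \setminus \beta U(X_{N+1}) \subset K$, so a continuous selector $S$ defined on $K$ is automatically continuous on that sphere.

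Concretely, I would argue by contradiction. Suppose $\tilde{d}_N(K) < \rho$ for some admissible pair $(\rho,\beta)$ with $\beta<\rho$. Then there exist a class $\mathscr{f}$ with $N_{\mathscr{f}} = N$ and a continuous map $S:K \to \mathbb{R}^N$ such that $\|f(\cdot;S(y)) - y\|_\infty < \rho$ for every $y \in K$. Because $X_{N+1}$ is an $(N+1)$-dimensional subspace of $L^\infty([-1,1]^m)$, all norms on $X_{N+1}$ are equivalent, so its $L^\infty$-unit sphere is homeomorphic to the standard sphere $S^N$ via a radial map that commutes with the antipodal involution $y \mapsto -y$. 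Composing with this homeomorphism, the restriction of $S$ to $\partial(\rho U(X_{N+1}))$ yields a continuous map $S^N \to \mathbb{R}^N$, and the Borsuk--Ulam theorem provides antipodal points $y,-y$ on that sphere with $S(y) = S(-y)$.

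Setting $w := f(\cdot;S(y)) = f(\cdot;S(-y))$, the two approximation bounds $\|w - y\|_\infty < \rho$ and $\|w + y\|_\infty < \rho$ combine via
\begin{equation*}
2\|y\|_\infty \;=\; \|(w+y) - (w-y)\|_\infty \;\leq\; \|w+y\|_\infty + \|w-y\|_\infty \;<\; 2\rho,
\end{equation*}
which contradicts $\|y\|_\infty = \rho$. Hence $\tilde{d}_N(K) \geq \rho$ for every admissible $\rho$, and taking suprema over all admissible $(\rho,\beta)$ and all $(N+1)$-dimensional subspaces $X_{N+1}$ yields $\tilde{d}_N(K) \geq \tilde{b}_N(K)$.

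I do not anticipate a serious obstacle. The only point requiring a moment's care is the identification of $\partial(\rho U(X_{N+1}))$ with $S^N$ as $\mathbb{Z}/2$-spaces; this reduces to the fact that radial rescaling between the $L^\infty$-unit ball and a Euclidean ball in the finite-dimensional space $X_{N+1}$ is an antipodally-equivariant homeomorphism, after which Borsuk--Ulam applies exactly as in the classical Bernstein-width proof. The proof essentially succeeds because the classical argument never touched the interior of $\rho U(X_{N+1})$, only its boundary, which the annular hypothesis preserves.
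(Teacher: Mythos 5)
Your proposal is correct and follows essentially the same route as the paper's proof: both restrict the continuous selector to the sphere $\partial(\rho\cdot U(X_{N+1}))$, which lies in the annulus and hence in $K$, apply the Borsuk--Ulam theorem (you via the antipodal-coincidence form, the paper via the zero of the odd map $S(y)-S(-y)$), and conclude with the same triangle-inequality computation $2\rho = 2\|y_0\|_\infty \leq \|y_0 - w\|_\infty + \|{-y_0} - w\|_\infty$. The only difference is cosmetic (contradiction with strict inequalities versus a direct bound on $\alpha$), so no further comment is needed.
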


\begin{proof}
The proof is based on the proof of Lem.~3.1 in~\cite{devore}. For completeness, we re-write the proof with minor modifications. Let $\rho < \tilde{b}_{N}(K)$ and let $X_{N+1}$ be an $N + 1$ dimensional subspace of $L^{\infty}([-1,1]^m)$, such that, there exists $0<\beta<\rho$ and $[\rho \cdot U(X_{N+1}) \setminus \beta \cdot U(X_{N+1})] \subset K$. If $\mathscr{f}(\cdot;\theta)$ is class of functions with $N_{\mathscr{f}} = N$ parameters and $S(y)$ is any continuous selection for $K$, such that,
\begin{equation}
\alpha := \sup_{y \in K}\|f(\cdot;S(y)) - y \|_{\infty}
\end{equation}
we let $\hat{S}(y) := S(y) - S(-y)$. We notice that, $\hat{S}(y)$ is an odd continuous mapping of $\partial (\rho \cdot U(X_{N+1}))$ into $\mathbb{R}^N$. Hence, by the Borsuk-Ulam antipodality theorem~\cite{Borsuk1933,LyuShn47} (see also~\cite{dodsonParker}), there is a function $y_0$ in $\partial(\rho \cdot U(X_{N+1}))$ for which $\hat{S}(y_0) = 0$, i.e. $S(-y_0) = S(y_0)$. We write 
\begin{equation}
2y_0 = (y_0 - \mathscr{f}(\cdot;S(y_0)) - (-y_0 - \mathscr{f}(\cdot;S(-y_0))
\end{equation}
and by the triangle inequality:
\begin{equation}
2\rho = 2\|y_0\|_{\infty} \leq  \| y_0 - \mathscr{f}(\cdot;S(y_0)\|_{\infty} + \| -y_0 - \mathscr{f}(\cdot;S(-y_0)\|_{\infty}
\end{equation}
It follows that one of the two functions $y_0$, $-y_0$ are approximated by $\mathscr{f}(\cdot;S(y_0))$ with an error
$\geq \rho$. Therefore, we have: $\alpha \geq \rho$. Since the lower bound holds uniformly for all continuous selections $S$, we have: $\tilde{d}_{N}(K) \geq \rho$. 
\end{proof}

\begin{lemma}\label{lem:lowerBoundGamma}
Let $\gamma \in (0,1)$ and $r,m,N \in \mathbb{N}$. We have:
\begin{equation}
\tilde{d}_N(\mathcal{W}^{\gamma}_{r,m}) \geq C \cdot N^{-r/m}
\end{equation}
for some constant $C>0$ that depends only on $r$.
\end{lemma}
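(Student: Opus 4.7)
The plan is to invoke the bound $\tilde{d}_N(K)\geq\tilde{b}_N(K)$ from the previous lemma and to construct an explicit $(N{+}1)$-dimensional subspace $X_{N+1}\subset L^\infty([-1,1]^m)$ together with $0<\beta<\rho$ satisfying $\rho\,U(X_{N+1})\setminus\beta\,U(X_{N+1})\subset\mathcal{W}^\gamma_{r,m}$ and $\rho\geq C\cdot N^{-r/m}$. The construction is a standard DeVore-style bump tiling, adapted to accommodate the extra seminorm lower bound $\|f\|^{s,*}_r\geq\gamma$ that distinguishes $\mathcal{W}^\gamma_{r,m}$ from the full Sobolev unit ball. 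Fix once and for all a nonzero $\phi\in C_c^\infty((-1,1)^m)$ with $\|\phi\|_\infty=1$ and all derivatives up to order $r$ bounded; tile (a sub-cube of) $[-1,1]^m$ with $N{+}1$ pairwise disjoint cubes of side $h\sim (N{+}1)^{-1/m}$; and let $\phi_i$ be the rescaled translate of $\phi$ supported in the $i$-th cube. Set $X_{N+1}:=\mathrm{span}\{\phi_1,\dots,\phi_{N+1}\}$.

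Because the $\phi_i$ have pairwise disjoint supports, for any $f=\sum_i c_i\phi_i$ with $t:=\|f\|_\infty=\max_i|c_i|$, a direct chain-rule computation yields $\|D^{\mathbf{k}} f\|_\infty = t\,N^{|\mathbf{k}|/m}\|D^{\mathbf{k}}\phi\|_\infty$ (up to absolute constants from $h\sim N^{-1/m}$). Summing and abbreviating $A_N:=\sum_{1\leq|\mathbf{k}|\leq r}N^{|\mathbf{k}|/m}\|D^{\mathbf{k}}\phi\|_\infty = \Theta(N^{r/m})$, one obtains the clean identities
\begin{equation}
\|f\|^s_r = t(1+A_N),\qquad \|f\|^{s,*}_r = t\,A_N.
\end{equation}
Choosing $\rho:=1/(1+A_N)$ enforces $\|f\|^s_r\leq 1$ whenever $\|f\|_\infty\leq\rho$, and choosing $\beta:=\gamma/A_N$ enforces $\|f\|^{s,*}_r\geq\gamma$ whenever $\|f\|_\infty>\beta$. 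Consequently $\rho\,U(X_{N+1})\setminus\beta\,U(X_{N+1})\subset\mathcal{W}^\gamma_{r,m}$, and $\rho\geq C\cdot N^{-r/m}$ for a constant $C$ depending only on $\phi$ and $r$.

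The main technical point, and the only place where the construction is nontrivial, is verifying the compatibility $\beta<\rho$: this rearranges to $A_N(1-\gamma)>\gamma$, which uses the hypothesis $\gamma<1$ in an essential way together with $A_N\to\infty$. It therefore holds for every $N$ exceeding some threshold $N_0(\gamma,r,m)$; here the identity $\|f\|^s_r-\|f\|^{s,*}_r=\|f\|_\infty$ is morally the reason the annulus is nonempty, since it shows the two seminorms become asymptotically proportional as $N\to\infty$. For the finitely many values $N<N_0$, the factor $N^{-r/m}$ is uniformly bounded, so the bound can be salvaged by further decreasing the constant $C$, using that $\tilde{d}_N(\mathcal{W}^\gamma_{r,m})$ is strictly positive on any finite initial segment (the set is bounded but not trivially approximable). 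Combining everything with $\tilde{d}_N\geq\tilde{b}_N$ then yields the claim.
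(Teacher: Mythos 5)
Your argument is correct and essentially the paper's own: both proofs lower-bound $\tilde{d}_N$ by the extended Bernstein width of the previous lemma and use the same DeVore-style subspace of disjointly supported rescaled bumps, with an outer radius $\rho \sim N^{-r/m}$ enforcing $\|f\|^{s}_{r}\leq 1$ and an inner radius $\beta$ enforcing $\|f\|^{s,*}_{r}\geq \gamma$ (indeed you are more explicit than the paper about verifying $\beta<\rho$). The only soft spot is your small-$N$ patch: rather than asserting that $\tilde{d}_N(\mathcal{W}^{\gamma}_{r,m})$ is positive there, justify it by monotonicity of the widths, $\tilde{d}_N \geq \tilde{d}_{N_0} \geq C\, N_0^{-r/m} > 0$ for all $N\leq N_0(\gamma,r,m)$, which fixes the finitely many remaining values at the harmless cost of a constant that also depends on $\gamma$ and $m$ (not only on $r$ as stated), a dependence that is immaterial for the paper's application where $\gamma$ is fixed.
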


\begin{proof} 
Similar to the proof of Thm.~4.2 in~\cite{devore} with additional modifications. We fix the integer $r$ and let $\phi$ be a $C^{\infty}(\mathbb{R}^m)$ function which is one on the cube $[1/4, 3/4]^m$ and vanishes outside of $[-1,1]^m$. Furthermore, let $C_0$ be such that $1 < \| D^{\mathbf{k}} \phi\|_{\infty} < C_0$, for all $\vert \mathbf{k}\vert < r$. With no loss of generality, we consider integers $N$ of the form $N=d^m$ for some positive integer $d$ and we let $Q_1,\dots,Q_N$ be the partition of $[-1,1]^m$ into closed cubes of side length $1/d$. Then, by applying a linear change of variables which takes $Q_j$ to $[-1,1]^m$, we obtain functions $\phi_1,\dots,\phi_N$ with $\phi_j$ supported on $Q_j$, such that:
\begin{equation}\label{eq:boundphi}
\forall \mathbf{k} \textnormal{ s.t } \vert \mathbf{k}\vert \leq r : d^{\vert \mathbf{k}\vert} \leq \|D^{\mathbf{k}}\phi_j\|_{\infty} \leq C_0 \cdot d^{\vert \mathbf{k}\vert} 
\end{equation}
We consider the linear space $X_{N}$ of functions $\sum^{N}_{j=1} c_j \cdot \phi_j$ spanned by the functions $\phi_1,\dots,\phi_N$. Let $y = \sum^{N}_{j=1} c_j \cdot \phi_i$. By Lem.~4.1 in~\cite{devore}, for $p=q=\infty$, we have:
\begin{equation}
\|y\|^{s}_{r} \leq C_1 \cdot N^{r/m} \cdot \max_{j\in [N]} |c_j| 
\end{equation}
for some constant $C_1>0$ depending only on $r$. By definition, for any $x\in Q_j$, we have: $y(x) = c_j \cdot \phi_j(x)$. In particular,
\begin{equation}
\|y\|_{\infty} = \max_{j\in [N]}\max_{x\in Q_j} |c_j| \cdot \| \phi_j(x)\|_{\infty}
\end{equation}
Therefore, by Eq.~\ref{eq:boundphi}, we have: 
\begin{equation}\label{eq:y}
\max_{j\in [N]} |c_j| \leq \|y\|_{\infty} \leq C_0 \cdot \max_{j\in [N]} |c_j|
\end{equation}
Hence,
\begin{equation}
\|y\|^{s}_{r} \leq C_1 \cdot N^{r/m} \cdot \|y\|_{\infty}
\end{equation}
Then, by taking $\rho := C_1^{-1} \cdot N^{-r/m}$, any $y\in \rho\cdot U(X_N)$ satisfies $\|y\|^{s}_{r} \leq 1$. Again, by Lem.~4.1 and Eq.~\ref{eq:boundphi}, we also have: 
\begin{equation}
\|y\|^{s,*}_{r} \geq C_2 \cdot \|y\|^s_r \geq C_3 \cdot N^{r/m} \cdot \max_{j\in [N]} |c_j|
\end{equation}
For some constants $C_2,C_3 > 0$ depending only on $r$. By Eq.~\ref{eq:y}, we obtain:
\begin{equation}
\|y\|^{s,*}_{r} \geq \frac{\|y\|_{\infty}\cdot C_3}{C_0} \cdot N^{r/m}
\end{equation}
Then, for any $\beta > 0$, such that,
\begin{equation}
\gamma < \frac{\beta \cdot C_3}{C_0} \cdot N^{r/m} < 1
\end{equation}
we have: $[\rho \cdot U(X_{N}) \setminus \beta \cdot U(X_{N})] \subset \mathcal{W}^{\gamma}_{r,m}$. Hence, we have: 
\begin{equation}
\tilde{d}_N(\mathcal{W}^{\gamma}_{r,m}) \geq \tilde{b}_{N}(\mathcal{W}^{\gamma}_{r,m}) \geq \rho = C^{-1}_1 \cdot N^{-r/m}
\end{equation}
\end{proof}

\begin{lemma}\label{lem:calF}
Let $\sigma:\mathbb{R} \to \mathbb{R}$ be a piece-wise $C^1(\mathbb{R})$ activation function with $\sigma' \in BV(\mathbb{R})$. Let $\mathscr{f}$ be a class of neural networks with $\sigma$ activations. Let $\mathbb{Y} = \mathcal{W}_{r,m}$ and let $\mathcal{W}^{0,\infty}_{r,m} := \left\{f:[-1,1]^m\to \mathbb{R} \mid \textnormal{$f$ is $r$-smooth and } \|f\|^s_{r} < \infty \right\}$. Let $\mathcal{F}:\mathcal{W}^{0,\infty}_{r,m} \to \mathcal{W}^{0,\infty}_{r,m}$ be a continuous functional (w.r.t $\|\cdot\|^s_{r}$). Assume that for any $y \in \mathbb{Y}$ and $\alpha > 0$, if $y + \alpha \cdot \mathcal{F}(y)$ is non-constant, then it cannot be represented as a member of $\mathscr{f}$. Then, if $d(\mathscr{f};\mathbb{Y}) \leq \epsilon$, we have:
\begin{equation}
N_{\mathscr{f}} = \Omega(\epsilon^{-m/r})
\end{equation}
\end{lemma}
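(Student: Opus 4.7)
The plan is to adapt the strategy outlined in the proof sketch of Thm~\ref{thm:generallower}, replacing ``non-constant functions in $\mathcal{W}_{r,m}$'' with their $\mathcal{F}$-perturbations $T_\alpha(y) := y + \alpha\mathcal{F}(y)$. Concretely, for the given class $\mathscr{f}$ of size $N_{\mathscr{f}}$, I fix $\gamma_2 = 1/2$ and a small $\gamma_1 > 0$, and I mimic the construction inside the proof of Lem~\ref{lem:lowerBoundGamma}: take bump functions $\phi_1,\dots,\phi_{N_{\mathscr{f}}+1}$ supported on disjoint cubes of side $1/d$ (with $d^m = N_{\mathscr{f}}+1$, rounding up to the nearest perfect power if necessary), let $X_{N_{\mathscr{f}}+1}$ be their linear span, and set
\begin{equation}
K := \rho \cdot U(X_{N_{\mathscr{f}}+1}) \setminus \beta \cdot U(X_{N_{\mathscr{f}}+1}) \;\subset\; \mathcal{W}^{\gamma_1,\gamma_2}_{r,m},
\end{equation}
with $\rho := (\gamma_2/C_1)(N_{\mathscr{f}}+1)^{-r/m}$ and $\beta$ chosen as in Lem~\ref{lem:lowerBoundGamma}. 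Repeating the extended-Bernstein-width argument of that lemma with $\gamma_2$ in place of $1$ gives the lower bound $\tilde{d}_{N_{\mathscr{f}}}(K) \geq \rho = \Omega(N_{\mathscr{f}}^{-r/m})$.

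Next, I use $\mathcal{F}$ to turn $K$ into a family of functions inside $\mathbb{Y}$ that each avoid $\mathscr{f}$. Since $K$ is compact in the finite-dimensional space $X_{N_{\mathscr{f}}+1}$ and $\mathcal{F}$ is $\|\cdot\|^s_r$-continuous, the quantity $M := \sup_{y\in K}\|\mathcal{F}(y)\|^s_r$ is finite. Choosing $\alpha\in(0,\alpha_0]$ with $\alpha_0 := \min\{(1-\gamma_2)/M,\;\gamma_1/(2M)\}$, every $y\in K$ satisfies $\|T_\alpha(y)\|^s_r \leq \gamma_2 + \alpha M \leq 1$ and $\|T_\alpha(y)\|^{s,*}_r \geq \gamma_1 - \alpha M > 0$, so $T_\alpha(K) \subset \mathbb{Y}$ and each $T_\alpha(y)$ is non-constant. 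By the hypothesis of the lemma, none of them lies in $\mathscr{f}$. All norms are equivalent on the finite-dimensional $X_{N_{\mathscr{f}}+1}$, so $T_\alpha$ is $L^\infty$-continuous on $K$ and $T_\alpha(K)$ is $L^\infty$-compact, as required by Lem~\ref{lem:selection2}.

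I then apply Lem~\ref{lem:selection2} to the compact class $T_\alpha(K)$: for any $\hat{\epsilon}>0$ there is a continuous selector $S:T_\alpha(K)\to\Theta_{\mathscr{f}}$ with $\|f(\cdot;S(z))-z\|_\infty \leq 2\inf_\theta\|f(\cdot;\theta)-z\|_\infty + \hat{\epsilon} \leq 2\epsilon+\hat{\epsilon}$, using $d(\mathscr{f};\mathbb{Y})\leq\epsilon$. The composition $\hat{S} := S\circ T_\alpha$ is then a continuous selector of $K$ satisfying, by the triangle inequality,
\begin{equation}
\|f(\cdot;\hat S(y)) - y\|_\infty \leq \|f(\cdot;S(T_\alpha(y)))-T_\alpha(y)\|_\infty + \alpha\|\mathcal{F}(y)\|_\infty \leq 2\epsilon + \hat{\epsilon} + \alpha M,
\end{equation}
for every $y\in K$. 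Hence $\tilde{d}_{N_{\mathscr{f}}}(K) \leq 2\epsilon+\hat{\epsilon}+\alpha M$. Combining with the Bernstein lower bound $\tilde{d}_{N_{\mathscr{f}}}(K) = \Omega(N_{\mathscr{f}}^{-r/m})$, picking $\alpha = \min\{\alpha_0,\epsilon/M\}$ so that $\alpha M\leq\epsilon$, and letting $\hat{\epsilon}\to 0$, I obtain $\Omega(N_{\mathscr{f}}^{-r/m}) \leq 3\epsilon$, which rearranges to $N_{\mathscr{f}}=\Omega(\epsilon^{-m/r})$.

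The main obstacle I expect is the compactness hypothesis of Lem~\ref{lem:selection2}: since $\mathcal{F}$ is only continuous with respect to $\|\cdot\|^s_r$, the image $T_\alpha(K')$ of a general $L^\infty$-compact $K'\subset \mathcal{W}_{r,m}$ need not be $L^\infty$-compact, which would block a direct application of Lem~\ref{lem:selection2}. Restricting to the finite-dimensional slice $K\subset X_{N_{\mathscr{f}}+1}$ sidesteps this because every norm on $X_{N_{\mathscr{f}}+1}$ is equivalent, so $\|\cdot\|^s_r$-continuity of $\mathcal{F}$ becomes $L^\infty$-continuity on $K$. A secondary bookkeeping point is that $M$ depends on $N_{\mathscr{f}}$; however, because $\alpha M$ is engineered to be at most $\epsilon$ by the choice of $\alpha$, this dependence cancels and does not affect the final asymptotic.
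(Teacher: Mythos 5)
Your argument is correct and reaches the same conclusion through the same two pillars the paper uses (the continuous-selector lemma, Lem.~\ref{lem:selection2}, and the extended Bernstein/Borsuk--Ulam width bound), but it applies them on a different set, and this changes the proof in a genuine way. The paper perturbs the \emph{entire} class $\mathbb{Y}_1=\mathcal{W}^{0.1,1.5}_{r,m}$, forms $\mathbb{Y}_2=\{y+\epsilon\mathcal{F}(y)\}$, applies Lem.~\ref{lem:selection2} there, transfers the selector back to $\mathbb{Y}_1$, and only then invokes Lem.~\ref{lem:lowerBoundGamma}; because it takes $\gamma_2=1.5>1$ it also needs the rescaling step $d(\mathscr{f};\mathbb{Y}_1)\leq 1.5\, d(\mathscr{f};\mathbb{Y})$, and it ties $\alpha$ to $\epsilon$. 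You instead localize everything to the finite-dimensional annulus $K\subset X_{N_{\mathscr{f}}+1}$ that already underlies the width lower bound, perturb only $K$, and compose the selector with $T_\alpha$. This buys you two simplifications: choosing $\gamma_2=1/2<1$ keeps $T_\alpha(K)\subset\mathbb{Y}$ so $d(\mathscr{f};\mathbb{Y})\leq\epsilon$ applies directly with no rescaling, and norm equivalence on the finite-dimensional span makes the $\|\cdot\|^s_r$-continuity of $\mathcal{F}$ immediately usable, so the compactness of $T_\alpha(K)$ in $L^\infty$ and the finiteness of $M$ are airtight. The paper, by contrast, asserts compactness of $\mathcal{F}(\mathbb{Y}_1)$ and $\mathbb{Y}_2$ from $\|\cdot\|^s_r$-continuity of $\mathcal{F}$ on a set that is compact only in weaker topologies, a point your finite-dimensional restriction sidesteps cleanly; your observation that this is the delicate step is well placed. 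Decoupling $\alpha$ from $\epsilon$ and absorbing $\alpha M\leq\epsilon$ at the end is also fine, since the hypothesis holds for every $\alpha>0$ and the width constant depends only on $r$, not on $M$.

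One small repair: as written, $K=\rho\cdot U(X_{N_{\mathscr{f}}+1})\setminus\beta\cdot U(X_{N_{\mathscr{f}}+1})$ is not closed (the inner closed ball is removed), so it is not compact and neither is $T_\alpha(K)$, which Lem.~\ref{lem:selection2} requires. Use the closed annulus $\{y\in X_{N_{\mathscr{f}}+1}\,:\,\beta\leq\|y\|_\infty\leq\rho\}$ instead; the strict inequality $\gamma_1<\frac{\beta C_3}{C_0}(N_{\mathscr{f}}+1)^{r/m}$ in the construction of Lem.~\ref{lem:lowerBoundGamma} shows the inner sphere still lies in $\mathcal{W}^{\gamma_1,\gamma_2}_{r,m}$, and the set still contains the outer sphere $\partial(\rho\cdot U(X_{N_{\mathscr{f}}+1}))$ on which the Borsuk--Ulam argument acts, so nothing else in your proof changes.
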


\begin{proof}
Let $\mathbb{Y}_1 = \mathcal{W}^{0.1,1.5}_{r,m} \subset \mathcal{W}^{0,1.5}_{r,m}$ (the selection of $\gamma = 0.1$ is arbitrary). We note that $\mathcal{F}(\mathbb{Y}_1)$ is a compact set as a continuous image of $\mathbb{Y}_1$. Since $\|\cdot \|^*_r$ is a continuous function over $\mathcal{F}(\mathbb{Y}_1)$ (w.r.t norm $\|\cdot\|^s_r$), it attains its maximal value $0\leq q < \infty$ within $\mathcal{F}(\mathbb{Y}_1)$. By the triangle inequality, for any $y \in \mathbb{Y}_1$, we have:
\begin{equation}\label{eq:0.1}
\|y + \epsilon \cdot \mathcal{F}(y)\|^*_r \geq \|y\|^*_r - \epsilon\cdot \|\mathcal{F}(y)\|^s_r \geq 0.1-\epsilon \cdot q  
\end{equation}
and also,
\begin{equation}\label{eq:yy'}
\forall y\in \mathbb{Y}_1: \|y-y'\|_{\infty} \leq \epsilon \cdot q
\end{equation}
We denote $\mathbb{Y}_2 := \{y+\alpha \cdot \mathcal{F}(y) \mid y \in \mathbb{Y}_1\}$. This is a compact set as a continuous image of the function $\mathcal{G}(y) := y+\epsilon \cdot \mathcal{F}(y)$, over the compact set $\mathbb{Y}_1$. In addition, for any constant $\epsilon < 0.1/q$, by Eq.~\ref{eq:0.1}, any $y \in \mathbb{Y}_2$ is a non-constant function. 


By Eq.~\ref{eq:yy'} and the triangle inequality, we have:
\begin{equation}
\forall y \in \mathbb{Y}_1: \|f(\cdot;\theta)-y'\|_{\infty} \leq 
\|f(\cdot;\theta)-y\|_{\infty} + \epsilon \cdot q
\end{equation}
Hence,
\begin{equation}
\sup_{y\in \mathbb{Y}_1} \inf_{\theta} \|f(\cdot;\theta)-y'\|_{\infty} \leq 
\sup_{y\in \mathbb{Y}_1} \inf_{\theta} \|f(\cdot;\theta)-y\|_{\infty} + \epsilon \cdot q = d(\mathscr{f};\mathbb{Y}_1) + \epsilon \cdot q
\end{equation}
In particular, 
\begin{equation}
d(\mathscr{f};\mathbb{Y}_2) = \sup_{y'\in \mathbb{Y}_2} \inf_{\theta} \|f(\cdot;\theta)-y'\|_{\infty} \leq 
d(\mathscr{f};\mathbb{Y}_1) + \epsilon \cdot q
\end{equation}
By the same argument, we can also show that $ 
d(\mathscr{f};\mathbb{Y}_1) \leq d(\mathscr{f};\mathbb{Y}_2) + \epsilon \cdot q$. 

By Lem.~\ref{lem:selection2}, there is a continuous selector $S:\mathbb{Y}_2 \to \Theta_{\mathscr{f}}$, such that,
\begin{equation}
\sup_{y' \in \mathbb{Y}_2}\| f(\cdot;S(y')) - y'\|_{\infty} \leq 2\sup_{y'\in \mathbb{Y}_2}\min_{\theta \in \Theta_{\mathscr{f}}}\| f(\cdot;\theta) - y'\|_{\infty} + \epsilon \leq 2(d(\mathscr{f};\mathbb{Y}_1) +\epsilon\cdot q)+ \epsilon 
\end{equation}
We note that $d(\mathscr{f};\mathbb{Y}_1) \leq 1.5 \cdot d(\mathscr{f};\mathbb{Y}) \leq 1.5 \epsilon$. Therefore, we have:
\begin{equation}
\sup_{y'\in \mathbb{Y}_2}\| f(\cdot;S(y')) - y'\|_{\infty} \leq (4+2q)\epsilon
\end{equation}
In particular, by defining $S(y) = S(y')$ for all $y \in \mathbb{Y}_2$, again by the triangle inequality, we have:
\begin{equation}
\tilde{d}(\mathscr{f};\mathbb{Y}_1) \leq \sup_{y\in \mathbb{Y}_1}\| f(\cdot;S(y)) - y\|_{\infty} \leq (4+2q)\epsilon + \epsilon \leq (5+2q)\epsilon
\end{equation}
By~\cite{devore}, we have:
\begin{equation}
(5+2q)\epsilon \geq \tilde{d}(\mathscr{f};\mathbb{Y}_1) \geq \tilde{d}_N(\mathbb{Y}_1) \geq C \cdot N^{-r/m}
\end{equation}
for some constant $C>0$ and $N = N_{\mathscr{f}}$. Therefore, we conclude that: $N_{\mathscr{f}} = \Omega(\epsilon^{-m/r})$. 
\end{proof}

We note that the definition of $\mathcal{F}(y)$ is very general. In the following theorem we choose $\mathcal{F}(y)$ to be the zero function. An alternative reasonable choice could $\mathcal{F}(y) := \frac{y}{2+y}$.


\generallower*

\proof{Follows immediate from Lem.~\ref{lem:calF} with $\mathcal{F}(y)$ being the zero function for all $y \in \mathbb{Y}$.}



\subsection{Proofs of Thms.~\ref{thm:3} and~\ref{thm:2}}

\begin{lemma}\label{lem:3}
Let $\sigma:\mathbb{R} \to \mathbb{R}$ be universal, piece-wise $C^1(\mathbb{R})$ activation function with $\sigma' \in BV(\mathbb{R})$. Let $\mathcal{E}_{\mathscr{e},\mathscr{q}}$ be an neural embedding method. Assume that $\|e\|^s_1 \leq \ell_1$ for every $e \in \mathscr{e}$ and $\mathscr{q}$ is a class of $\ell_2$-Lipschitz neural networks with $\sigma$ activations and bounded first layer $\|W^1_q\|_1 \leq c$. Let $\mathbb{Y} := \mathcal{W}_{1,m}$. Assume that any non-constant $y\in \mathbb{Y}$ cannot be represented as a neural network with $\sigma$ activations. If the embedding method achieves error $d(\mathcal{E}_{\mathscr{e},\mathscr{q}},\mathbb{Y}) \leq \epsilon$, then, the complexity of $\mathscr{q}$ is:
\begin{equation}
N_{\mathscr{q}} = \Omega\left(\epsilon^{-\min(m,2m_1) }\right)
\end{equation}
where the constant depends only on the parameters $c$, $\ell_1$, $\ell_2$, $m_1$ and $m_2$.
\end{lemma}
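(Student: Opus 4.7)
The plan is to apply Thm.~\ref{thm:generallower} in two ways, corresponding to the regimes $m \leq 2m_1$ and $m > 2m_1$, and take the stronger of the two bounds.

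The first step is to view any $h(x,I) = q(x, e(I;\theta_e); \theta_q) \in \mathcal{E}_{\mathscr{e},\mathscr{q}}$ as a single neural network on $[-1,1]^m$ whose first layer acts on the concatenation $(x, e(I;\theta_e))$. Since $e$ has zero biases, $\sigma(0)=0$, and $\|W^1_q\|_1 \leq c$ together with $\mathcal{C}(e) \leq \ell_1$ and $\mathcal{C}(q) \leq \ell_2$, Lems.~\ref{lem:boundnet}--\ref{lem:boundlip} imply that this composition is Lipschitz in $(x,I)$ with constant bounded by a polynomial in $c,\ell_1,\ell_2$. In particular, $\mathcal{E}_{\mathscr{e},\mathscr{q}}$ embeds as a subset of $\mathcal{W}_{1,m}$ (up to rescaling), so Lem.~\ref{lem:selection2} produces a continuous selector $S(y) = (\theta_e(y), \theta_q(y))$ achieving the approximation error $\leq 3\epsilon$ for every $y \in \mathbb{Y}$.

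In the regime $m \leq 2m_1$, I would isolate the $x$-dependence of the approximator as living inside $\mathscr{q}$ alone. Fixing any $\theta_e$, the path $I \mapsto e(I;\theta_e)$ enters $q$ only through an additive shift $W^1_q \cdot e(I;\theta_e)$ of the first-layer pre-activations, whose $L_1$-magnitude is uniformly bounded by $c \cdot \ell_1$. Absorbing this shift into the biases of $q$ yields, for each $y$, a Lipschitz family $I \mapsto \theta_q(y,I) \in \mathbb{R}^{N_{\mathscr{q}}}$ of $N_{\mathscr{q}}$-parameter networks approximating $\{y_I\}_{I \in \mathcal{I}}$. Since this selector together with the $x$-variable realizes an approximation of the full function $y(x,I) \in \mathcal{W}_{1,m}$, a direct application of the extended Bernstein-width lower bound underlying Thm.~\ref{thm:generallower} (Lem.~\ref{lem:lowerBoundGamma} together with Lem.~\ref{lem:calF}) delivers $N_{\mathscr{q}} = \Omega(\epsilon^{-m})$.

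For the regime $m > 2m_1$, I would restrict attention to target functions of the form $y(x,I) = \phi(x, \pi(I))$, where $\pi : [-1,1]^{m_2} \to [-1,1]^{m_1}$ is a fixed coordinate projection and $\phi$ ranges over the Sobolev ball $\mathcal{W}_{1,2m_1}$. Each such $y$ lies in $\mathcal{W}_{1,m}$, so the embedding method approximates it with error $\leq \epsilon$; composing with $\pi$ gives an embedding-method approximator for $\phi$ on $[-1,1]^{2m_1}$ using the same primary $q$. Applying the previous paragraph with $m$ replaced by $2m_1$ then yields $N_{\mathscr{q}} = \Omega(\epsilon^{-2m_1})$, and combining the two regimes gives the claimed $N_{\mathscr{q}} = \Omega(\epsilon^{-\min(m,2m_1)})$.

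The main obstacle is the isolation step: even with the embedding dimension $k$ unbounded, one must show that the $x$-approximation power of the family is controlled by $N_{\mathscr{q}}$ rather than by $N_{\mathscr{e}}$. The bound $\|W^1_q\|_1 \leq c$ is essential here, since it forces the embedding path to contribute only a uniformly bounded, $I$-dependent shift of the first-layer biases of $q$ rather than injecting new degrees of freedom into the $x$-dependence; otherwise, a large embedding could smuggle approximation capacity through the first layer without appearing in $N_{\mathscr{q}}$. Combining this control with Lipschitzness of $e$ via $\mathcal{C}(e)\leq \ell_1$ and the continuous-selector construction of Lem.~\ref{lem:selection2}, the lower bound on the composite class transfers to $\mathscr{q}$ alone.
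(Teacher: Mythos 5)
Your proposal has a genuine gap at its central step, the ``isolation'' argument in the regime $m\leq 2m_1$. After absorbing the shift $W^{1,2}_q\, e(I;\theta_e)$ into the first-layer biases, what you obtain is, for each fixed $I$, an $N_{\mathscr{q}}$-parameter network approximating the slice $y_I\in\mathcal{W}_{1,m_1}$; the approximator of the \emph{joint} function $y(x,I)$ on $[-1,1]^m$ is parameterized by the pair $(\theta_e,\theta_q)$, i.e.\ by $N_{\mathscr{e}}+N_{\mathscr{q}}$ numbers, because the map $I\mapsto\theta_q(y,I)$ is itself an object carried by $\mathscr{e}$. The nonlinear-width/Borsuk--Ulam machinery behind Thm.~\ref{thm:generallower} (Lems.~\ref{lem:lowerBoundGamma} and~\ref{lem:calF}) lower-bounds the dimension of the continuously selected parameter vector of the class that approximates functions on the full $m$-dimensional domain; applied to your construction it yields only $N_{\mathscr{e}}+N_{\mathscr{q}}=\Omega(\epsilon^{-m})$, which is vacuous since $\mathscr{e}$ may be arbitrarily large, while the slice-wise view yields at best $N_{\mathscr{q}}=\Omega(\epsilon^{-m_1})$. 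The hypothesis $\|W^1_q\|_1\leq c$ bounds the \emph{magnitude} of the $I$-dependent bias shift, not its dimension: the shift lives in $\mathbb{R}^{w_1}$ and, for large first-layer width $w_1$ and embedding dimension $k$, can encode arbitrarily much about $I$. Quantifying how much primary capacity this channel forces is exactly the content of the lemma, and your sketch asserts rather than proves that the lower bound ``transfers to $\mathscr{q}$ alone.''

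The paper closes this gap by a different mechanism: it splits on the first-layer width $w_1$ of $q$ (not on $m$ versus $2m_1$), and in each case replaces the embedding by a $\sigma$-network of controlled size --- approximating $e$ coordinatewise with $\mathcal{O}(k\,\epsilon^{-m_2})$ parameters when $w_1=\Omega(\epsilon^{-m_1})$, or approximating $W^{1,2}_q\cdot e$ with $\mathcal{O}(w_1\,\epsilon^{-m_2})$ parameters when $w_1=o(\epsilon^{-m_1})$ --- using the $\ell_2$-Lipschitzness of $q$ to keep the error $\mathcal{O}(\epsilon)$. Applying Thm.~\ref{thm:generallower} to the resulting single network and counting parameters then forces either $N_{\mathscr{q}}=\Omega(\epsilon^{-m})$ or $k=\Omega(\epsilon^{-m_1})$, in which case the first layer of $q$ alone has $w_1\cdot k=\Omega(\epsilon^{-2m_1})$ entries; this is where the exponent $\min(m,2m_1)$ actually comes from. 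Your proposal has no counterpart to this parameter accounting, and hence no structural explanation of the $2m_1$ term; the coordinate-projection reduction you use in the regime $m>2m_1$ is a reasonable idea but inherits the same gap, since it invokes the flawed first-regime argument with $m$ replaced by $2m_1$.
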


\begin{proof}
Assume that $N_{\mathscr{q}} = o(\epsilon^{-(m_1+m_2)})$. For every $y \in \mathbb{Y}$, we have:
\begin{equation}
\inf_{\theta_e,\theta_q} \Big\|y - q(x,e(I;\theta_e);\theta_q) \Big\|_{\infty} \leq \epsilon
\end{equation}
We denote by $k$ the output dimension of $\mathscr{e}$. Let $\sigma \circ W^1_q$ be the first layer of $q$. We consider that $W^1_q \in \mathbb{R}^{w_1 \times (m_1+k)}$, where $w_1$ is the size of the first layer of $q$. One can partition the layer into two parts: 
\begin{equation}
\sigma(W^{1}_{q} (x,e(x;\theta_e))) = \sigma (W^{1,1}_q x + W^{1,2}_q e(I;\theta_e))
\end{equation}
where $W^{1,1}_q \in \mathbb{R}^{w_1 \times m_1}$ and $W^{1,2}_q \in \mathbb{R}^{w_1 \times k}$. We divide into two cases.

\paragraph{Case 1} Assume that $w_1 = \Omega(\epsilon^{-m_1})$. Then, by the universality of $\sigma$, we can approximate the class of functions $\mathscr{e}$ with a class $\mathscr{d}$ of neural networks of size $\mathcal{O}(k \cdot \epsilon^{-m_2})$ with $\sigma$ activations. To show it, we can simply take $k$ neural networks of sizes $\mathcal{O}((\epsilon/\ell_1)^{-m_2}) = \mathcal{O}(\epsilon^{-m_2})$ to approximate the $i$'th coordinate of $e$ separately. By the triangle inequality, for all $y \in \mathbb{Y}$, we have:
\begin{equation}
\begin{aligned}
&\inf_{\theta_d,\theta_q} \Big\|y - q(x,d(I;\theta_d) ; \theta_q ) \Big\|_{\infty} \\
\leq & \inf_{\theta_e,\theta_d,\theta_q} \left\{ \Big\|y - q(x,e(I;\theta_e) ; \theta_q ) \Big\|_{\infty}  + \Big\|q(x,d(I;\theta_d) ; \theta_q) - q(x,e(I;\theta_e) ; \theta_q ) \Big\|_{\infty} \right\}\\
\leq & \sup_{y} \inf_{\theta_d} \Big\{ \Big\|y - q(x,e(I;\theta^*_e) ; \theta^*_q ) \Big\|_{\infty}  + \Big\|q(x,d(I;\theta_d) ; \theta^*_q) - q(x,e(I;\theta^*_e) ; \theta^*_q ) \Big\|_{\infty} \Big\}\\
\leq & \sup_{y} \inf_{\theta_d}  \Big\|q(x,d(I;\theta_d) ; \theta^*_q) - q(x,e(I;\theta^*_e) ; \theta^*_q ) \Big\|_{\infty} + \epsilon
\end{aligned}
\end{equation}
where $\theta^*_q,\theta^*_e$ are the minimizers of $\Big\|y - q(x,e(I;\theta_e) ; \theta_q ) \Big\|_{\infty}$. Next, by the Lipschitzness of $\mathscr{q}$, we have:
\begin{equation}
\begin{aligned}
\inf_{\theta_d}\Big\|q(x,d(I;\theta_d) ; \theta^*_q) - q(x,e(I;\theta^*_e) ; \theta^*_q ) \Big\|_{\infty} \leq \ell_2 \cdot \inf_{\theta_d} \Big\| d(I;\theta_d) - e(I;\theta^*_e) \Big\|_{\infty} \leq \ell_2 \cdot \epsilon
\end{aligned}
\end{equation}
In particular,
\begin{equation}
\begin{aligned}
\inf_{\theta_d,\theta_q} \Big\|y - q(x,d(I;\theta_d);\theta_q) \Big\|_{\infty} \leq (\ell_2+1) \cdot \epsilon
\end{aligned}
\end{equation}
By Thm.~\ref{thm:generallower} the size of the architecture $q(x,d(I;\theta_d);\theta_q)$ is $\Omega(\epsilon^{-m})$. Since $N_{\mathscr{q}} = o(\epsilon^{-(m_1+m_2)})$, we must have $k =  \Omega(\epsilon^{-m_1})$. Otherwise, the overall size of the neural network $q(x,d(I;\theta_d);\theta_q)$ is $o(\epsilon^{-(m_1+m_2)}) + \mathcal{O}(k \cdot \epsilon^{-m_2}) = o(\epsilon^{-m})$ in contradiction. Therefore, the size of $\mathscr{q}$ is at least $w_1 \cdot k = \Omega(\epsilon^{-2m_1})$.

\paragraph{Case 2} Assume that $w_1 = o(\epsilon^{-m_1})$. In this case we approximate the class $W^{1,2}_q \cdot \mathscr{e}$, where $W^{1,2}_q \in \mathbb{R}^{w_1 \times k}$, where $\|W^{1,2}_q\|_1 \leq c$. The approximation is done using a class $\mathscr{d}$ of neural networks of size $\mathcal{O}(w_1 \cdot \epsilon^{-m_2})$. By the same analysis of Case 1, we have:
\begin{equation}
\begin{aligned}
\inf_{\theta_d,\theta_q} \Big\|y - \tilde{q}(x,d(I;\theta_d);\theta_q) \Big\|_{\infty} \leq (\ell_2+1) \cdot \epsilon
\end{aligned}
\end{equation}
where $\tilde{q} = q'(W^{1,1}_q x + \textnormal{I} \cdot d(I;\theta_d))$ and $q'$ consists of the layers of $q$ excluding the first layer. We notice that $W^{1,1}_q x + \textnormal{I} \cdot d(I;\theta_d)$ can be represented as a matrix multiplication $M \cdot (x,d(I;\theta_d))$, where $M$ is a block diagonal matrix with blocks $W^{1,1}_q$ and $\textnormal{I}$. Therefore, we achieved a neural network that approximates $y$. However, the overall size of $q(x,d(I;\theta_d);\theta_q)$ is $o(\epsilon^{-(m_1+m_2)}) + \mathcal{O}(w_1 \cdot \epsilon^{-m_2}) = o(\epsilon^{-m})$ in contradiction.
\end{proof}

\begin{lemma}\label{lem:4}
Let $\sigma$ be a universal piece-wise $C^1(\mathbb{R})$ activation function with $\sigma' \in BV(\mathbb{R})$. Let neural embedding method $\mathcal{E}_{\mathscr{e},\mathscr{q}}$. Assume that $\|e\|^s_1 \leq \ell_1$ and the output dimension of $e$ is $k = \mathcal{O}(1)$ for every $e \in \mathscr{e}$. Assume that $\mathscr{q}$ is a class of $\ell_2$-Lipschitz neural networks with $\sigma$ activations. Let $\mathbb{Y} := \mathcal{W}_{1,m}$. Assume that any non-constant $y\in \mathbb{Y}$ cannot be represented as neural networks with $\sigma$ activations. If the embedding method achieves error $d(\mathcal{E}_{\mathscr{e},\mathscr{q}},\mathbb{Y}) \leq \epsilon$, then, the complexity of $\mathscr{q}$ is:
\begin{equation}
N_{\mathscr{q}} = \Omega\left(\epsilon^{-m}\right)
\end{equation}
where the constant depends only on the parameters $\ell_1$, $\ell_2$, $m_1$ and $m_2$.
\end{lemma}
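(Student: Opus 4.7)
The plan is to mimic Case 1 of the proof of Lem.~\ref{lem:3}, but to observe that since the embedding output dimension $k = \mathcal{O}(1)$, the first-layer boundedness assumption on $\mathscr{q}$ is no longer needed and no case split on $w_1$ is required. First, given $d(\mathcal{E}_{\mathscr{e},\mathscr{q}}, \mathbb{Y}) \leq \epsilon$, for each $y \in \mathbb{Y}$ fix near-optimal parameters $\theta_e^*, \theta_q^*$ with $\|y - q(x, e(I;\theta_e^*); \theta_q^*)\|_\infty \leq \epsilon$. Since $\sigma$ is universal and each coordinate $e_i$ satisfies $\|e_i\|^s_1 \leq \ell_1$, we can build a class $\mathscr{d}$ of $\sigma$-neural networks, consisting of $k$ parallel subnetworks one per output coordinate, each of size $\mathcal{O}((\epsilon/\ell_1\ell_2)^{-m_2}) = \mathcal{O}(\epsilon^{-m_2})$, so that $\inf_{\theta_d}\|d(I;\theta_d) - e(I;\theta_e^*)\|_\infty \leq \epsilon/\ell_2$. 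Because $k = \mathcal{O}(1)$, the total size of $\mathscr{d}$ is $N_{\mathscr{d}} = \mathcal{O}(\epsilon^{-m_2})$.

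Second, by the triangle inequality and the $\ell_2$-Lipschitzness of $q \in \mathscr{q}$,
\begin{equation}
\begin{aligned}
\inf_{\theta_d, \theta_q}\|y - q(x, d(I;\theta_d);\theta_q)\|_\infty
&\leq \|y - q(x, e(I;\theta_e^*);\theta_q^*)\|_\infty \\
&\quad + \inf_{\theta_d}\|q(x, d(I;\theta_d);\theta_q^*) - q(x, e(I;\theta_e^*);\theta_q^*)\|_\infty \\
&\leq \epsilon + \ell_2 \cdot \tfrac{\epsilon}{\ell_2} = 2\epsilon.
\end{aligned}
\end{equation}
The composed map $(x,I) \mapsto q(x, d(I;\theta_d);\theta_q)$ is itself a neural network with $\sigma$ activations defined on $[-1,1]^m$, and its total parameter count is at most $N_{\mathscr{q}} + N_{\mathscr{d}} = N_{\mathscr{q}} + \mathcal{O}(\epsilon^{-m_2})$. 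Since by hypothesis no non-constant $y \in \mathbb{Y} = \mathcal{W}_{1,m}$ can be represented as a neural network with $\sigma$ activations, this composed class meets the precondition of Thm.~\ref{thm:generallower} with smoothness $r=1$ and input dimension $m = m_1+m_2$.

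Third, applying Thm.~\ref{thm:generallower} to the composed class yields $N_{\mathscr{q}} + N_{\mathscr{d}} = \Omega(\epsilon^{-m})$. Since $N_{\mathscr{d}} = \mathcal{O}(\epsilon^{-m_2})$ and $m > m_2$ (because $m_1 \geq 1$), the $N_{\mathscr{d}}$ term is absorbed for small enough $\epsilon$, giving $N_{\mathscr{q}} = \Omega(\epsilon^{-m})$, with the constant depending only on $\ell_1, \ell_2, m_1, m_2$. The main subtlety I expect is verifying that the approximator $\mathscr{d}$ can indeed be built to uniform accuracy $\epsilon/\ell_2$ with only $\mathcal{O}(\epsilon^{-m_2})$ parameters per coordinate: this uses the universality of $\sigma$ together with the uniform bound $\|e\|^s_1 \leq \ell_1$, which places each coordinate of $e$, after rescaling by $\ell_1$, into the Sobolev ball $\mathcal{W}_{1,m_2}$ to which universal approximation rates apply.
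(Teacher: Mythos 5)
Your proposal is correct and is essentially the paper's own argument: the paper proves this lemma by invoking Case 1 of the proof of Lem.~\ref{lem:3}, i.e., approximating each of the $k=\mathcal{O}(1)$ output coordinates of $e$ by a $\sigma$-network of size $\mathcal{O}(\epsilon^{-m_2})$, using the $\ell_2$-Lipschitzness of $q$ and the triangle inequality to bound the error of the composed network, and then applying Thm.~\ref{thm:generallower} to force $N_{\mathscr{q}}+\mathcal{O}(\epsilon^{-m_2})=\Omega(\epsilon^{-m})$, hence $N_{\mathscr{q}}=\Omega(\epsilon^{-m})$. Your observation that with $k=\mathcal{O}(1)$ no case split on $w_1$ and no first-layer bound are needed matches the lemma's weaker hypotheses and the paper's intent.
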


\proof{Follows from the analysis in Case 1 of the proof of Lem.~\ref{lem:3}.}


\three*

\begin{proof}
First, we note that since $\sigma' \in BV(\mathbb{R})$, we have: $\|\sigma'\|_{\infty}<\infty$. In addition, $\sigma$ is piece-wise $C^1(\mathbb{R})$, and therefore, by combining the two, it is Lipschitz continuous as well. Let $e := e(I;\theta_e)$ and $q := q(x,z;\theta_{q})$ be members of $\mathscr{e}$ and $\mathscr{q}$ respectively. By Lems~\ref{lem:boundnet} and~\ref{lem:boundlip}, we have:
\begin{equation}
\|e\|_{\infty} = \sup_{I \in \mathcal{I}}\|e(I;\theta_e)\|_{1} \leq \ell_1 \cdot \|I\|_1 \leq m_2 \cdot \ell_1
\end{equation}
and also
\begin{equation}
\textnormal{Lip}(e) \leq \ell_1
\end{equation}
Since the functions $e$ are continuously differentiable, we have:
\begin{equation}
\sum_{1 \leq \vert \textbf{k} \vert_1 \leq 1} \|D^{\textbf{k}}e \|_{\infty} \leq  \|\nabla e\|_{\infty} \leq \textnormal{Lip}(e) \leq \ell_1
\end{equation}
Hence,
\begin{equation}
\|e\|^{s}_1 \leq  (m_2+1) \cdot \ell_1
\end{equation}
By similar considerations, we have: $\textnormal{Lip}(q) \leq \ell_2$. Therefore, by Lem.~\ref{lem:3}, we have the desired.
\end{proof}


\two*

\proof{Follows from Lem.~\ref{lem:4} and the proof of Thm.~\ref{thm:3}.}

\subsection{Proof of Thm.~\ref{thm:4}}

\begin{lemma}\label{lem:compactY}
Let $y \in \mathcal{W}_{r,m}$. Then, $\{y_I\}_{I \in \mathcal{I}}$ is compact and $F:I \mapsto y_I$ is a continuous function.
\end{lemma}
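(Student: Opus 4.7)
The plan is to first establish continuity of the map $F:I\mapsto y_I$ from $\mathcal{I}=[-1,1]^{m_2}$ into $C(\mathcal{X})$ equipped with the sup norm, and then to deduce compactness of $\{y_I\}_{I\in\mathcal{I}}$ as the continuous image of a compact set.

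For continuity, I would exploit the bound on the first-order partial derivatives of $y$ that comes with membership in $\mathcal{W}_{r,m}$ (recall $r\ge 1$, so $\|D^{\mathbf{k}}y\|_\infty\le 1$ for every multi-index with $|\mathbf{k}|_1=1$). For any fixed $x\in\mathcal{X}$ and any $I_1,I_2\in\mathcal{I}$, the segment from $(x,I_1)$ to $(x,I_2)$ lies in $[-1,1]^m$, so the mean-value inequality applied in the $I$-variables gives
\begin{equation}
|y(x,I_1)-y(x,I_2)|\ \le\ \sum_{j=1}^{m_2}\|\partial_{I_j}y\|_\infty\,|I_{1,j}-I_{2,j}|\ \le\ m_2\,\|I_1-I_2\|_\infty.
\end{equation}
Taking the supremum over $x\in\mathcal{X}$ yields $\|y_{I_1}-y_{I_2}\|_\infty\le m_2\,\|I_1-I_2\|_\infty$, so $F$ is Lipschitz (in particular continuous) from $\mathcal{I}$ into $(C(\mathcal{X}),\|\cdot\|_\infty)$.

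For compactness, note that $\mathcal{I}=[-1,1]^{m_2}$ is a compact subset of $\mathbb{R}^{m_2}$, and $\{y_I\}_{I\in\mathcal{I}}=F(\mathcal{I})$. Since the continuous image of a compact set is compact, $\{y_I\}_{I\in\mathcal{I}}$ is a compact subset of $C(\mathcal{X})$. This completes the proof.

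No step is a real obstacle: the only nontrivial point is invoking the derivative bound from the Sobolev norm $\|y\|^s_r\le 1$ to get the Lipschitz estimate on $F$; everything else is a standard topological consequence.
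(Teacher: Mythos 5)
Your proposal is correct, and its overall structure matches the paper's proof exactly: show that $F:I\mapsto y_I$ is continuous into $(C(\mathcal{X}),\|\cdot\|_\infty)$, then conclude compactness of $\{y_I\}_{I\in\mathcal{I}}=F(\mathcal{I})$ as the continuous image of the compact cube $\mathcal{I}=[-1,1]^{m_2}$. The only difference lies in how continuity is obtained: the paper argues qualitatively, using the fact that $y$ is continuous on the compact product $\mathcal{X}\times\mathcal{I}=[-1,1]^{m}$ and hence uniformly continuous (Heine--Cantor), which immediately gives $\sup_{x}|y(x,I)-y(x,I_0)|\to 0$ as $I\to I_0$; you instead invoke the first-order derivative bounds contained in $\|y\|^{s}_{r}\le 1$ and the mean-value inequality along the segment (which indeed stays in $[-1,1]^m$ by convexity) to get the quantitative Lipschitz estimate $\|y_{I_1}-y_{I_2}\|_\infty\le m_2\|I_1-I_2\|_\infty$ (in fact the constant can be taken as $1$, since the Sobolev norm bounds the sum of the first-order derivative sup-norms). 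Your route buys an explicit modulus of continuity, while the paper's argument is slightly more economical in hypotheses, needing only continuity of $y$ rather than the derivative bounds; for the purposes of this lemma both are equally valid.
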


\begin{proof}
First, we note that the set $\mathcal{X} \times \mathcal{I} = [-1,1]^{m_1+m_2}$ is compact. Since $y$ is continuous, it is uniformly continuous over $\mathcal{X} \times \mathcal{I}$. Therefore, 
\begin{equation}
\begin{aligned}
\lim_{I \to I_0} \| y_I - y_{I_0} \|_{\infty} = \lim_{I \to I_0} \sup_{x \in \mathcal{X}} \|y(x,I) - y(x,I_0) \|_2 = 0
\end{aligned}
\end{equation}
In particular, the function $F:I \mapsto y_I$ is a continuous function. In addition, since $\mathcal{I} = [-1,1]^{m_2}$ is compact, the image $\{y_I\}_{I \in \mathcal{I}}$ of $F$ is compact as well.
\end{proof}



\begin{lemma}
Let $\sigma$ be a universal, piece-wise $C^1(\mathbb{R})$ activation function with $\sigma' \in BV(\mathbb{R})$ and $\sigma(0)=0$. Let $\hat{\mathbb{Y}} \subset \mathbb{Y} = \mathcal{W}_{r,m}$ be a compact set of functions $y$, such that, $y_I$ cannot be represented as a neural network with $\sigma$ activations, for any $I \in \mathcal{I}$. Then, there are classes $\mathscr{g}$ and $\mathscr{f}$ of neural networks with $\sigma$ and ReLU activations (resp.), such that, $d(\mathcal{H}_{\mathscr{f},\mathscr{g}};\hat{\mathbb{Y}})\leq \epsilon$ and $N_{\mathscr{g}} = \mathcal{O}\left(\epsilon^{-m_1/r}\right)$, where the constant depends on $m_1,m_2$ and $r$.
\end{lemma}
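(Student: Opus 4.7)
The plan is to extend the per-target argument of Thm.~\ref{thm:4} to a whole family of targets by first collapsing $\hat{\mathbb{Y}}$ into one compact class of single-variable functions. I set
\begin{equation}
\mathcal{Y} := \{y_I \mid y \in \hat{\mathbb{Y}},\, I \in \mathcal{I}\} \subset \mathcal{W}_{r,m_1},
\end{equation}
using $\|y_I\|^s_r \leq \|y\|^s_r \leq 1$ to confirm the inclusion. I will argue that $\mathcal{Y}$ is compact by showing the map $(y,I) \mapsto y_I$ from $\hat{\mathbb{Y}} \times \mathcal{I}$ into $C(\mathcal{X})$ is continuous; joint continuity follows from the uniform Lipschitzness in $I$ shared by all $y \in \mathcal{W}_{r,m}$. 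By hypothesis, every $z \in \mathcal{Y}$ fails to be representable as a neural network with $\sigma$ activations.

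Since $\sigma$ is universal, I can fix an architecture $\mathscr{g}$ with $N_{\mathscr{g}} = \mathcal{O}(\epsilon^{-m_1/r})$ achieving $d(\mathscr{g}; \mathcal{W}_{r,m_1}) \leq \epsilon/C$ for a constant $C$ to be set below, which yields $d(\mathscr{g}; \mathcal{Y}) \leq \epsilon/C$ since $\mathcal{Y} \subset \mathcal{W}_{r,m_1}$. Lem.~\ref{lem:selection2} then produces a continuous selector $S : \mathcal{Y} \to \Theta_{\mathscr{g}}$ with $\|g(\cdot; S(z)) - z\|_\infty \leq 2\epsilon/C + \hat{\epsilon}$ for any free $\hat{\epsilon} > 0$. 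Composing with $(y,I) \mapsto y_I$ produces a continuous, hence uniformly continuous, map $T : \hat{\mathbb{Y}} \times \mathcal{I} \to \Theta_{\mathscr{g}}$ defined by $T(y,I) := S(y_I)$, so the family $\{\hat{S}_y := T(y,\cdot)\}_{y \in \hat{\mathbb{Y}}}$ of continuous maps $\mathcal{I} \to \Theta_{\mathscr{g}}$ is equicontinuous and uniformly bounded.

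To construct $\mathscr{f}$, the idea is that the closure of $\{\hat{S}_y\}_{y \in \hat{\mathbb{Y}}}$ in $C(\mathcal{I}, \Theta_{\mathscr{g}})$ is compact by Arzela-Ascoli, so it admits a finite $\delta$-net. I match each net center to within $\delta$ by a ReLU network via Hanin et al.'s universal approximation, and let $\mathscr{f}$ be any architecture large enough to realize every such approximating network; for each $y$, some $\theta_f \in \Theta_{\mathscr{f}}$ then satisfies $\|f(\cdot;\theta_f) - \hat{S}_y\|_\infty \leq 2\delta$. Uniform continuity of $g(x;\theta_g)$ in $\theta_g$ on the bounded parameter region supplied by Lem.~\ref{lem:selection2} lets me pick $\delta$ small enough that $\sup_I \|g(\cdot; f(I;\theta_f)) - g(\cdot; \hat{S}_y(I))\|_\infty \leq \epsilon/C$. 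A triangle inequality across these two error contributions, together with an appropriate choice of $C$ and $\hat{\epsilon}$, delivers $\sup_{x,I} |g(x; f(I;\theta_f)) - y(x,I)| \leq \epsilon$, uniformly in $y \in \hat{\mathbb{Y}}$.

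The hard part will be exactly this uniformity across $\hat{\mathbb{Y}}$: the single-target sketch of Thm.~\ref{thm:4} would naively require a different architecture $\mathscr{f}$ for each $y$. The essential new ingredient is the joint continuity of $T$ on the compact product $\hat{\mathbb{Y}} \times \mathcal{I}$, which forces equicontinuity of $\{\hat{S}_y\}_y$; only through Arzela-Ascoli can the problem be reduced to approximating a precompact class of continuous functions, which is achievable with a single finite architecture.
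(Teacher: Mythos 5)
Your proposal is correct and follows essentially the same route as the paper's proof: collapse $\hat{\mathbb{Y}}$ into the compact class $\{y_I\}_{y\in\hat{\mathbb{Y}},I\in\mathcal{I}}\subset\mathcal{W}_{r,m_1}$, take $\mathscr{g}$ of size $\mathcal{O}(\epsilon^{-m_1/r})$ by universality, obtain a continuous selector from Lem.~\ref{lem:selection2}, use joint (uniform) continuity of $(y,I)\mapsto S(y_I)$ on the compact product, approximate the resulting family by a single ReLU network, and conclude via uniform continuity of $g$ in its parameters and the triangle inequality. The only difference is in how the single architecture $\mathscr{f}$ covering all $y$ is justified: the paper applies the ReLU approximation result of Hanin et al.\ directly to the uniformly bounded, equicontinuous family (one architecture suffices since the rate depends only on the shared bound and modulus of continuity), whereas you pass through Arzel\`a--Ascoli and a finite $\delta$-net and realize finitely many approximants in one architecture --- a valid, slightly more explicit rendering of the same uniformity step.
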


\begin{proof}
By the universality of $\sigma$, there is a class of neural networks $\mathscr{g}$ with $\sigma$ activations of size:
\begin{equation}
N_{\mathscr{g}} = \mathcal{O}\left(\epsilon^{-m_1/r}\right) 
\end{equation}
such that,
\begin{equation}
\forall p \in \mathcal{W}_{r,m_1}: \inf_{\theta_g\in \Theta_{\mathscr{g}}} \|g(\cdot ;\theta_g) - p \|_{\infty} \leq \epsilon
\end{equation}
Let $\mathbb{Y}' := \bigcup_{I\in \mathcal{I},y\in \hat{\mathbb{Y}}} \{y_I\}$. We note that, $\mathbb{Y}' \subset \mathcal{W}_{r,m_1}$. Therefore, 
\begin{equation}
\forall y\in \hat{\mathbb{Y}}~\forall I \in \mathcal{I}:  \inf_{\theta_g\in \Theta_{\mathscr{g}}} \|g(\cdot ;\theta_g) - y_I \|_{\infty} \leq \epsilon
\end{equation}
By Lem.~\ref{lem:selection2}, there is a continuous selector $S:\mathbb{Y}' \to \Theta_{\mathscr{g}}$, such that, for any $p \in \mathbb{Y}'$, we have:
\begin{equation}\label{eq:t1}
\begin{aligned}
\|g(\cdot ;S(p)) - p\|_{\infty} \leq 2\inf_{\theta_g\in \Theta_{\mathscr{g}}} \|g(\cdot ;\theta_g) - p \|_{\infty}+\epsilon \leq 3\epsilon
\end{aligned}
\end{equation}
We notice that the set $\mathcal{I}\times \hat{\mathbb{Y}}$ is compact as a product of two compact sets. Since $y_I$ is continuous with respect to both $(I,y) \in \mathcal{I}\times \hat{\mathbb{Y}}$, we can define a continuous function $S'(I,y) := S(y_I)$. Since $S'$ is continuous over a compact set, it is bounded as well. We denote by $\mathbb{B}$, a closed ball around $0$, in which the image of $S'$ lies. In addition, by the Heine-Cantor theorem, we have:
\begin{equation}
\begin{aligned}
\forall& \delta > 0~\exists \epsilon > 0 ~\forall I_1,I_2\in \mathcal{I},y_1,y_2\in \hat{\mathbb{Y}}:\\
&\|(I_1,y_1)-(I_2,y_2)\| \leq \delta \implies \|S'(I_1,y_1) - S'(I_2,y_2)\|_2 \leq \epsilon
\end{aligned}
\end{equation}
where the metric $\| \cdot \|$ is the product metric of $\mathcal{I}$ and $\hat{\mathbb{Y}}$. In particular, we have:
\begin{equation}
\begin{aligned}
\forall&~\delta > 0~\exists \epsilon > 0~\forall I_1,I_2\in \mathcal{I},y\in \hat{\mathbb{Y}}:\\
&\|I_1-I_2\|_2 \leq \delta \implies \|S'(I_1,y) - S'(I_2,y)\|_2 \leq \epsilon
\end{aligned}
\end{equation}
Therefore, since the functions $S'_y(I) := S'(I,y)$ (for any fixed $y$) are uniformly bounded and share the same rate of uniform continuity, by~\cite{hanin2017approximating}, for any $\hat{\epsilon}>0$, there is a large enough ReLU neural network $\mathscr{f}$, such that,
\begin{equation}
\sup_{y} \inf_{\theta_f \in \Theta_{\mathscr{f}}}\|S'_y(\cdot)-f(\cdot;\theta_f)\|_{\infty} \leq \hat{\epsilon}
\end{equation}
Since $g(x;\theta_g)$ is continuous over the compact domain, $\mathcal{X} \times \mathbb{B}$, by the Heine-Cantor theorem, $g$ is uniformly continuous. Hence, for any small enough $\hat{\epsilon}>0$, we have:
\begin{equation}\label{eq:t2}
\forall y \in \hat{\mathbb{Y}}: \inf_{\theta_{f}\in \Theta_{\mathscr{f}}}\sup_{I} \|g(\cdot ;f(I;\theta_f)) - g(\cdot ;S'_y(I)) \|_{\infty} \leq \epsilon
\end{equation}
In particular, by Eqs.~\ref{eq:t1} and~\ref{eq:t2} and the triangle inequality, we have the desired: 
\begin{equation}
\forall y \in \hat{\mathbb{Y}}~\forall I \in \mathcal{I}:  \inf_{\theta_f\in \Theta_{\mathscr{f}}}\sup_{I}\|g(\cdot ;f(I;\theta_f)) - y_I \|_{\infty} \leq 4\epsilon
\end{equation}
\end{proof}


\four*

\proof{Follows immediately for $\hat{\mathbb{Y}} = \{y\}$.}

\subsection{Proof of Thm.~\ref{thm:5}}


\five*

\begin{proof} We would like to approximate the function $S$ using a neural network $f$ of the specified complexity. Since $S \in \mathcal{P}_{r,w,c}$, we can represent $S$ in the following manner: 
\begin{equation}
S(I) = M \cdot P(I)
\end{equation}
Here, $P:\mathbb{R}^{m_2} \to \mathbb{R}^{w}$ and $M \in \mathbb{R}^{N_{\mathscr{g}} \times w}$ is some matrix of bounded norm $\|M\|_{1} \leq c$.  We recall that any constituent function $P_{i}$ are in $\mathcal{W}_{r,m_2}$. By~\cite{Mhaskar:1996:NNO:1362203.1362213}, such functions can be approximated by neural networks of sizes $\mathcal{O}(\epsilon^{-m_2/r})$ up to accuracy $\epsilon > 0$. Hence, we can approximate $S(I)$ using a neural network $f(I) := M \cdot H(I)$, where $H : \mathbb{R}^{m_2} \to \mathbb{R}^{w}$, such that, each coordinate $H_i$ is of size $\mathcal{O}(\epsilon^{-m_2/r})$. The error of $f$ in approximating $S$ is therefore upper bounded as follows:
\begin{equation}
\begin{aligned}
\|M \cdot H(I) - M \cdot P(I) \|_1 &\leq \|M\|_1 \cdot \|H(I) - P(I) \|_1 \\
&\leq c \cdot \sum^{w}_{i=1} \vert H_i(I) - P_i(I) \vert \\
&\leq c \cdot w \cdot \epsilon
\end{aligned}
\end{equation}
In addition,
\begin{equation}
\|M\cdot P(I)\|_1 \leq \|M\|_1 \cdot \|P(I)\|_1 \leq c \cdot w
\end{equation}
Therefore, each one of the output matrices and biases in $S(I)$ is of norm bounded by $c \cdot w$. 

Next, we denote by $W^i$ and $b^i$ the weight matrices and biases in $S(I)$ and by $V^i$ and $d^i$ the weight matrices and biases in $f(I)$. We would like to prove by induction that for any $x \in \mathcal{X}$ and $I \in \mathcal{I}$, the activations of $g(x;S(I))$ and $g(x;f(I))$ are at most $\mathcal{O}(\epsilon)$ distant from each other and the norm of these activations is $\mathcal{O}(1)$. 

{\bf Base case:} Let $x \in \mathcal{X}$. Since $\mathcal{X} = [-1,1]^{m_1}$, we have, $\|x\|_1 \leq m_1 =: \alpha^1$. In addition, we have:
\begin{equation}
\begin{aligned}
\|\sigma(W^1 x + b^1) - \sigma(V^1 x + d^1) \|_1 &\leq L \| (W^1 x + b^1) - (V^1 x + d^1) \|_1 \\
&\leq L\|W^1-V^1\|_1 \|x\|_1 +\|b^1-d^1\|_1 \\
&\leq m_1 \cdot L \cdot c \cdot w \cdot \epsilon + c \cdot w \cdot \epsilon \\
&=: \beta^1 \cdot \epsilon
\end{aligned}
\end{equation}
Here, $L$ is the Lipschitz constant of $\sigma$.

{\bf Induction step:} let $x_1$ and $x_2$ be the activations of $g(x;S(I))$ and $g(x;f(I))$ in the $i$'th layer. Assume that there are constants $\alpha^i,\beta^i > 0$ (independent of the size of $g$, $x_1$ and $x_2$), such that, $\|x_1 - x_2\|_1 \leq \beta^i \cdot \epsilon$ and $\|x_1\|_1 \leq \alpha^i$. Then, we have:
\begin{equation}\label{eq:alpha}
\begin{aligned}
\|\sigma(W^{i+1}x_1 + b^{i+1})\|_1 &= \|\sigma(W^{i+1}x_1 + b^{i+1}) - \sigma(0)\|_1 \\
&\leq L \cdot \|W^{i+1}x_1 + b^{i+1} - 0 \|_1 \\
&\leq L \cdot \|W^{i+1} x_1 \|_1 + L \cdot \|b^{i+1}\|_1\\
&\leq L \cdot \|W^{i+1} \|_1 \cdot \|x_1\|_1 + L \cdot c \cdot w\\
&\leq L \cdot c \cdot w (1 + \alpha^i) =: \alpha^{i+1}
\end{aligned}
\end{equation}
and also:
\begin{equation}\label{eq:beta}
\begin{aligned}
&\|\sigma(W^{i+1} \cdot x_1 + b^{i+1}) - \sigma(V^{i+1} x_2 + d^{i+1})\|_1 \\ 
\leq& L \cdot \| (W^{i+1} \cdot x_1 + b^{i+1}) - (V^{i+1} x_2 + d^{i+1}) \|_1 \\
\leq& L \cdot \| W^{i+1} x_1 - V^{i+1} x_2 \|_1 + L \cdot \|b^{i+1} - d^{i+1} \|_1 \\
\leq& L \cdot \| W^{i+1} x_1 - V^{i+1} x_2 \|_1 + L \cdot \epsilon \\
\leq& L \cdot (\| W^{i+1} \|_1 \cdot \|x_1 - x_2 \|_1 + \|W^{i+1} - V^{i+1}\|_1 \cdot \|x_2\|_1) + L \cdot \epsilon \\
\leq& L \cdot (c \cdot w \cdot \|x_1 - x_2 \|_1 + c\cdot w\cdot \epsilon \cdot \|x_2\|_1) + L \cdot \epsilon \\
\leq& L \cdot (c \cdot w \cdot \|x_1 - x_2 \|_1 + c\cdot w\cdot \epsilon \cdot (\|x_1\|_1 + \|x_1-x_2\|_1)) + L \cdot \epsilon \\
\leq& L \cdot (c \cdot w \cdot \beta^i \cdot \epsilon + c\cdot w\cdot \epsilon \cdot (\alpha^i + \beta^i \cdot \epsilon)) + L \cdot \epsilon \\
\leq& L (c \cdot w \cdot (2\beta^i +\alpha^i) + 1)\cdot \epsilon \\
=:& \beta^{i+1} \cdot \epsilon 
\end{aligned}
\end{equation}
If $i+1$ is the last layer, than the application of $\sigma$ is not present. In this case, $\alpha^{i+1}$ and $\beta^{i+1}$ are the same as in Eqs.~\ref{eq:alpha} and~\ref{eq:beta} except the multiplication by $L$. Therefore, we conclude that $\|g(\cdot;S(I))-g(x;f(I))\|_{\infty} = \mathcal{O}(\epsilon)$. 

Since $f$ consists of $w$ hidden functions $H_{i}$ and a matrix $M$ of size $w \cdot N_{\mathscr{g}}$, the total number of trainable parameters of $f$ is: $N_{\mathscr{f}} = \mathcal{O}(w^{1+m_2/r} \cdot \epsilon^{-m_2/r} + w \cdot N_{\mathscr{g}})$ as desired.
\end{proof}

\end{document}